\newtheorem{definition}{Definition}
\newcommand{\bm}[1]{\boldsymbol{#1}}
\newcommand{\argmax}{\mathrm{argmax}}
\newcommand{\one}{\mathrm{1}}
\newcommand{\set}[1]{\left\{ #1 \right\}}
\newcommand{\abs}[1]{\left| #1 \right|}
\newcommand{\norm}[1]{\left \lVert #1 \right \rVert}
\renewcommand{\bar}{\overline}
\renewcommand{\tilde}{\widetilde}
\newcommand{\E}{\mathbb{E}}
\newcommand{\calS}{\mathcal{S}}
\newcommand{\calR}{\mathcal{R}}
\newcommand{\calD}{\mathcal{D}}
\newcommand{\calH}{\mathcal{H}}
\newcommand{\calM}{\mathcal{M}}
\newcommand{\calA}{\mathcal{A}}
\newcommand{\Nm}{\mathbb{N}}
\newcommand{\R}{\mathbb{R}}
\newcommand{\calF}{\mathcal{F}}
\newcommand{\reg}{\textrm{Reg}}
\newcommand{\rew}{\textrm{Rew}}
\newcommand{\bgamma}{\boldsymbol{\gamma}}
\newcommand{\Pm}{\mathbb{P}}
\newcommand{\Vm}{\mathbb{V}}
\newcommand{\ucb}{\textrm{UCB}}
\newcommand{\var}{\textrm{Var}}
\newcommand{\ph}{\mathtt{P}_{\mathtt{H}} }
\newcommand{\xkh}{x_{k,h}}
\newcommand{\akh}{a_{k,h}}
\newcommand{\tdkh}{\tilde{\delta}_{k,h}}
\newcommand{\eps}{\varepsilon}
\newcommand{\hbgamma}{\widehat{\bgamma}}
\newtheorem{theorem}{Theorem}
\newtheorem{lemma}{Lemma}
\newtheorem{corollary}[theorem]{Corollary}
\newcommand{\kibitz}[2]{\ifnum\Comments=1{\color{#1}{#2}}\fi}
\newcommand{\citet}[1]{\citeauthor*{#1}~\cite{#1}}
\title{Online Reinforcement Learning with Uncertain Episode Lengths}
\author{
Debmalya Mandal\\
Max Planck Institute for Software Systems\\
 \texttt{dmandal@mpi-sws.org}\\ 
 \and Goran Radanovi\'{c}\\
 Max Planck Institute for Software Systems\\
 \texttt{gradanovic@mpi-sws.org}\\
 \and Jiarui Gan\\
 University of Oxford\\
 \texttt{jiarui.gan@cs.ox.ac.uk}\\
\and Adish Singla\\
Max Planck Institute for Software Systems\\
 \texttt{adishs@mpi-sws.org}\\ 
 \and Rupak Majumdar\\
 Max Planck Institute for Software Systems\\
 \texttt{rupak@mpi-sws.org} 
 }
\begin{document}

\maketitle


\begin{abstract}
 Existing episodic reinforcement algorithms assume that the  length of an episode is fixed across time and known a priori. In this paper, we consider a general framework of  episodic reinforcement learning when the length of each episode is drawn from a distribution. We first establish that this problem is equivalent to online reinforcement learning with general discounting where the learner is trying to optimize the expected discounted sum of rewards over an infinite horizon, but where the discounting function is not necessarily geometric. We show that minimizing regret with this new general discounting is equivalent to minimizing regret with uncertain episode lengths.
 We then design a reinforcement learning algorithm that minimizes regret with general discounting but acts for the setting with uncertain episode lengths. We instantiate  our general bound for different types of discounting, including geometric and polynomial discounting. We also show that we can obtain similar regret bounds even when the uncertainty over the episode lengths is unknown, by estimating the unknown distribution over time. Finally, we compare our learning algorithms with existing value-iteration based episodic RL algorithms on a grid-world environment.
\end{abstract}

\section{Introduction}
We consider the problem of \emph{episodic reinforcement learning}, where a learning agent interacts with the environment over a number of episodes~\cite{SB18}. The framework of episodic reinforcement learning usually considers two types of episode lengths: either each episode has a fixed and invariant length $H$,
or each episode may have a varying length controlled by the learner.
The fixed-length assumption is relevant for recommender systems~\cite{Charu16} where the platform interacts with a user for a fixed number of rounds. 
Variable length episodes arise naturally in robotics~\cite{KBP13}, where each episode is associated with a learning agent completing a task, and so the length of the episode is entirely controlled by the learner.
Fixed horizon lengths make the design of learning algorithms easier, 
and is the usual assumption in most papers on theoretical reinforcement learning~\cite{AOM17, JZBJ18}. 

In this paper, we take a different perspective on episodic reinforcement learning and assume that the length of each episode is drawn from a distribution. This situation often arises in online platforms where the length of an episode (i.e., the duration of a visit by a user) is not fixed a priori, but follows a predictable distribution~\cite{OFJB14}. Additionally, various econometric and psychological evidence suggest that humans learn by maintaining a risk/hazard distribution over the future~\cite{Sozou98}, which can be interpreted as a distribution over the horizon length.
Despite a large and growing literature on episodic reinforcement learning, except for \cite{FGBB+19}, uncertain epsiodic lengths or settings with general survival rates of agents have not been studied before.

\textbf{Our Contributions}: In this paper, we describe reinforcement learning algorithms for general distributions over episode lengths. Our main contribution is a general learning algorithm which can be adapted to a given distribution over episode lengths to obtain sub-linear regret over time. In particular, our contributions are the following.

\begin{itemize}
    \item  We first establish an equivalence between maximization of expected total reward with uncertain episode lengths and maximization of expected (general) discounted sum of rewards over an infinite horizon. In particular, we show that minimization of regret  is equivalent in these two environments.
    \item 
    Next we design a learning algorithm for the setting with arbitrary distribution over the episode lengths. Our algorithm generalizes the value-iteration based learning algorithm of \citet{AOM17} by carefully choosing an effective horizon length and then updating the backward induction step based on the distribution over episode lengths. In order to analyze its regret, we use the equivalence result above, and bound its regret for a setting with general discounting.
    \item We instantiate our general regret bound for different types of discounting (or equivalently episode distributions), including geometric and polynomial discounting, and obtain sub-linear regret bounds. For geometric discounting with parameter $\gamma$, we bound regret by $\tilde{O}(\sqrt{SAT}/(1-\gamma)^{1.5})$ which matches the recently established minimax optimal regret for the non-episodic setting~\cite{HZG21}. For the polynomial discounting of the form $h^{-p}$ we upper bound regret by $\tilde{O}(\sqrt{SA}T^{\frac{1}{2-1/p}})$.
    \item Finally, we show that we can obtain similar regret bounds even when the uncertainty over the episode lengths is unknown, by estimating the unknown distribution over time. In fact, for geometric discounting, we  recover the same regret bound (i.e. $\tilde{O}(\sqrt{SAT}/(1-\gamma)^{1.5}$) up to logarithmic factors, and for the polynomial discounting we obtain a regret bound of $\tilde{O}(\sqrt{SA}T^{\frac{p}{1+2p}})$, which asymptotically matches the previous regret bound.
\end{itemize}


Our results require novel and non-trivial generalizations of episodic learning algorithms and straightforward extensions to existing algorithms do not work.
Indeed, a naive approach would be to use the expected episode length as the fixed horizon length $H$. 
However, this fails with heavy-tailed distributions which often appear in practice. 
Alternately, we could compute an upper bound on the episode length so that with high probability the lengths of all the $T$ episodes are within this bound. Such an upper bound can be computed with the knowledge of distribution over episode lengths and using standard concentration inequalities. 
However, these upper bounds become loose either with a large number of episodes or for heavy-tailed distributions.

\if 0
We back up our theoretical results with a simple empirical demonstration.
To show that standard episodic RL algorithms (e.g. UCB-VI~\cite{AOM17}) fail when episode lengths are uncertain, we considered the Taxi environment~\cite{Dietterich00}, a popular $5\times 5$ grid-world environment for reinforcement learning. 
We generated the length of each episode independently according to the following distribution function: $\Pr(H \ge h) = \gamma^{h-1}$ for $\gamma = 0.95$. Note that the expected episode length for this setting is $\E[H] = 1/(1-\gamma) = 20$. Figure~\ref{fig: ucb-geom} compares the regret of our learning algorithm (shown in blue) with three different versions of UCBVI corresponding to three different assumed episode lengths ($H=10,20,30$). Our learning algorithm utilizes the entire distribution over episode lengths and outperforms episodic RL algorithms with different fixed episode lengths.
\end{minipage}
\hfill
\begin{minipage}{0.38\textwidth}
\captionsetup{type=figure}
\captionsetup{aboveskip=2pt,belowskip=2pt}
\includegraphics[width=\textwidth]{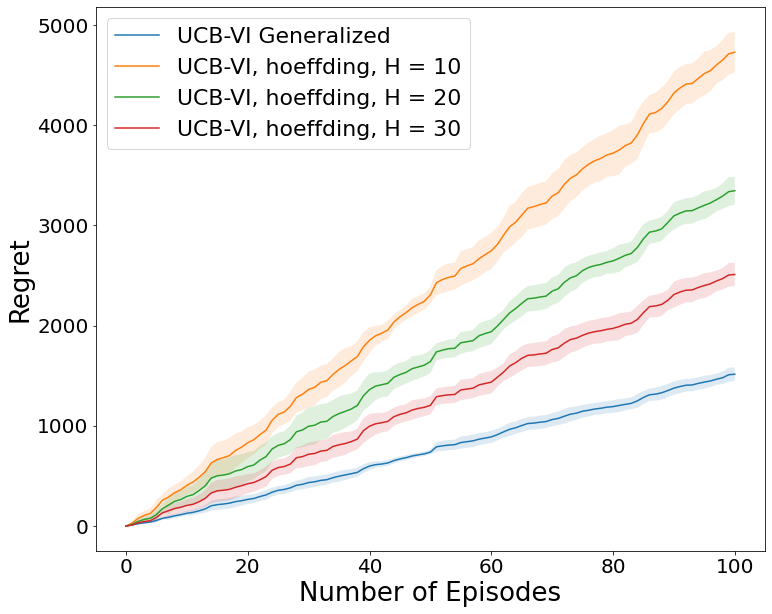}
\captionof{figure}{The performance of our algorithm (UCB-VI Generalized) versus several versions of UCB-VI with different $H$. Standard errors were obtained by averaging over $10$ runs.}
\label{fig: ucb-geom}
\end{minipage}

\fi

\subsection{Related Work}
\textbf{Episodic Reinforcement Learning}: Our work is closely related to the UCB-VI algorithm of \citet{AOM17}, which achieves $O(\sqrt{HSAT})$ regret for episodic RL with fixed horizon length $H$. The main difference between our algorithm and UCB-VI is that we use a different equation for backward-induction where future payoffs are discounted by a factor of $\gamma(h+1) / \gamma(h)$ at step $h$, where $\gamma$ is a general discount function. Beyond \cite{AOM17}, several papers have considered different versions of episodic RL including changing transition function~\cite{JZBJ18, JL20}, and function approximation~\cite{JYWJ20, WSY20, YW20}.

\textbf{General Discounting}: Our work is also closely related to reinforcement learning with general discounting. Even though geometric discounting is the most-studied discounting because of its theoretical properties~\cite{Bertsekas12}, there is a wealth of evidence suggesting that humans use general discounting and time-inconsistent decision making
\cite{Ainslie75, Mazur85, GLM04}. 
In general, optimizing discounted sum of rewards with respect to a general discounting might be difficult as we are not guaranteed to have a stationary optimal policy. \citet{FGBB+19} study RL with hyperbolic discounting and learn many $Q$-values each with a different (geometric) discounting. 
Our model is more general, and our algorithm is based on a modified value iteration. We also obtain theoretical bounds on regret in our general setting.
Finally, \citet{Pitis19} introduced more general state, action based discounting but that is out of scope of this paper.

\textbf{Stochastic Shortest Path}: Our work is related to the stochastic shortest path (SSP), introduced by \citet{BT91}. In SSP, the goal of the learner is to reach a designated state in an MDP, and minimize the expected total cost of the policy before reaching that goal. Recently, there has been a surge of interest in deriving online learning algorithms for SSP~\cite{RCMK20, CEMR21, TZDP+21}.  Our setting differs from SSP in two ways. First, the horizon length is effectively controlled by the learner in SSP, once she has a good approximation of the model. But in our setting, the horizon length is drawn from a distribution at the start of an episode by the nature, and is unknown to the learner during that episode. Second, when the model is known in SSP, different policies induce different distributions over the horizon length. Therefore, in contrast to our setting, minimizing regret in SSP is not the same as minimizing regret under general discounting. 

\textbf{Other Related Work}: Note that uncertainty over episode lengths can also be interpreted as hazardous MDP~\cite{HM72}, where hazard rate is defined to be the negative rate of change of log-survival time. \citet{Sozou98} showed that different prior belief over hazard rates imply different types of discounting. We actually show equivalence between general discounting and uncertain episode lengths, even in terms of regret bounds. Finally, this setting is captured by the partially observable Markov decision processes~\cite{KLC98}, where one can make the uncertain parameters hidden and/or partially observable. 
%

\section{Model}\label{sec:model}
We consider the problem of episodic reinforcement learning with uncertain episode length. An agent interacts with an MDP $\calM = (S, \calA, r, \Pm, \ph)$, where $\ph$ denotes the probability distribution over the episode length. We assume that the rewards are bounded between $0$ and $1$. The agent interacts with the environment for $T$ episodes as follows.
\begin{itemize}
\item At episode $k \in [T]$, the starting state $x_{k,1}$ is chosen arbitrarily and the length of the episode $H_k\sim \ph(\cdot)$. \footnote{The parameter $H_k$ is unknown to the learner during episode $k$.}
\item For $h \in [H_k]$, let the state visited be $x_{k,h}$ and the action taken be $a_{k,h}$. Then, the next state $x_{k,h+1} \sim \Pm(\cdot | x_{k,h}, a_{k,h})$.
\end{itemize}

 The agent interacts with the MDP $\calM$ for $T$ episodes and the goal is to maximize the expected undiscounted sum of rewards. Given a sequence of $k$ episode lengths $\{H_k\}_{k \in [T]}$ the expected cumulative reward of an agent's policy $\bm{\pi} = \{\pi_k\}_{k \in [T]}$ is given as
 \begin{equation*}
\rew\left(\bm{\pi}; \{H_k\}_{k\in[T]} \right) = \sum_{k=1}^T \E\left[ \sum_{h=1}^{H_k} r(\xkh, \akh) \right]
 \end{equation*}
 Since each $H_k$ is a random variable drawn from the distribution $\ph(\cdot)$, we are interested in expected reward with respect to distribution $\ph$.
\begin{align}
&\E\left[\rew\left(\bm{\pi}; \{H_k\}_{k\in[T]} \right) \right] \nonumber \\&= \E\left[ \sum_{k=1}^T \sum_{H_k=1}^\infty \ph(H_k) \sum_{h=1}^{H_k} r(x_{k,h}, a_{k,h}) \right] \nonumber \\
&= \E_{\pi}\left[ \sum_{k=1}^T \sum_{h=1}^\infty \ph(H \ge h) r(x_{t,h}, a_{t,h}) \right]\label{eq:original-undiscounted-reward}
\end{align}
As is standard in the literature on online learning, we will consider the problem of minimizing regret instead of maximizing the reward. Given an episode length $H_k$ and starting state $x_{k,1}$ let $\pi^\star_k$ be the policy that maximizes the expected sum of rewards over $H_k$ steps i.e.
$
\pi^\star_k \in \argmax_{\pi} \E_{\pi}\left[\sum_{h=1}^{H_k} r(\xkh, \akh) \lvert x_{k,1} \right].
$
We will  write $V^{\pi_k}(x_{k,1}; H_k)$ to write the (undiscounted) value function of a policy $\pi_k$ over $H_k$ steps starting from state $x_{k,1}$. Then $\pi^\star_k$ is also defined as $\pi^\star_k \in \argmax_\pi V^\pi(x_{k,1}; H_k)$. We will also write $V^\star(x_{k,1}; H_k)$ to denote the corresponding value of the optimal value function. Now we can define the regret  over $T$ steps as follows.
\begin{definition}
The regret of a learning algorithm $\bm{\pi} = \{\pi_k\}_{k \in [T]}$ over $T$ steps with episode lengths $\{H_k\}_{k \in [T]}$ is
\begin{equation}
\label{defn:regret-original-random}
\reg\left(\bm{\pi}; \{H_k\}\right) = \sum_{k \in [T]} V^\star(x_{k,1}; H_k) - V^{\pi_k}(x_{k,1}; H_k)
\end{equation}
\end{definition}
Note that the regret as defined in \cref{defn:regret-original-random} is actually a random variable as the episode lengths are also randomly generated from the distribution $\ph(\cdot)$. So we will be interested in bounding the expected regret. Let $V^\star(x_{k,1})$ be the expected value of $V^\star(x_{k,1};H_k)$ i.e. $V^\star(x_{k,1}) = \sum_{\ell} V^\star(x_{k,1}; \ell) \ph(\ell)$. Then the expected regret of a learning algorithm is given as
\begin{equation*}
\reg(\bm{\pi}; \ph(\cdot)) = \sum_{k \in [T]} V^\star(x_{k,1}) - \E_{H_k} \left[ V^{\pi_k}(x_{k,1}; H_k)\right]
\end{equation*}

\subsection{An Equivalent Model of General Discounting}\label{subsec:general-discounting}
We first establish that the problem of minimizing regret in our setting is equivalent to minimizing regret in a different environment, where the goal is to minimize discounted reward over an infinite horizon with a general notion of discounting. By setting $\gamma(h) = \ph(H \ge h)$, the expected reward in \cref{eq:original-undiscounted-reward}   becomes a sum of $T$ expected rewards under the general discounting function $\left\{\gamma(h)\right\}_{h=1}^\infty$.
\begin{align*}
\E\left[\rew(\bm{\pi}; \{H_k\}_{k \in [T]}) \right] =  \sum_{t=1}^T \E\left[\sum_{h=1}^\infty \gamma(h) r(x_{t,h}, a_{t,h}) \lvert x_{k,1} \right]
\end{align*}
Therefore, we consider the equivalent setting where the agent is interacting with the MDP $\calM = (S, \calA, r, \Pm, \bgamma)$ where $\bgamma = \left\{\gamma(h)\right\}_{h=1}^\infty$ is a general discounting factor. We will require the following two properties from the discounting factors:

\begin{enumerate}
\item $\gamma(1) = 1$,
\item $\sum_{h=1}^{\infty} \gamma(h) \le M$ for some universal constant $M>0$.
\end{enumerate}
The first assumption is without loss of generality as we can normalize all the discount factors without affecting the maximization problem. The second assumption guarantees that the optimal policy is well-defined. Note that this assumption rules out hyperbolic discounting $\gamma(h) = \frac{1}{1+h}$,
but does allow discount factors of the form $\gamma(h) = h^{-p}$ for any $p > 1$. Finally, note that our original reformulation of $\gamma(h) = \ph(H \ge h)$ trivially satisfies the first assumption. The second assumption essentially ensures that the horizon length has a finite mean. We will also write $\Gamma(h)$ to define the sum of the tail part of the series starting at $h$ i.e.
\begin{equation}\label{eq:defn-Gamma}
    \Gamma(h) = \sum_{j \ge h} \gamma(j)
\end{equation}

In this new environment, the learner solves the following episodic reinforcement learning problem over $T$ episodes.
\paragraph{Environment: General Discounting}
\begin{enumerate}
\item The starting state $x_{k,1}$ is chosen arbitrarily.
\item The agent maximizes  $\E\left[\sum_{h=1}^{\infty} \gamma(h) r(\xkh,\akh) \lvert x_{k,1} \right]$ over an infinite horizon.
\end{enumerate}
Notice that even though the new environment is episodic, the length of each episode is infinite. So this environment is not realistic, and we are only introducing this hypothetical environment to design our algorithm and analyze its performance. 

Suppose that we are given a learning algorithm $\bm{\pi} = \{\pi_k\}_{k \in [T]}$. We allow the possibility that $\pi_k$ is a non-stationary policy as each $\pi_k$ is used to maximizing a discounted sum of rewards with respect to a general discounting factor and in general the optimal policy need not be stationary. 
A non-stationary policy $\pi_k$ is a collection of policies $\{ \pi_{k,h}\}_{h=1}^{\infty}$ where $\pi_{k,h} : (S \times \calA)^{h-1} \times S \rightarrow \Delta(\calA) $. 
Given a non-stationary policy $\pi_k$ at episode $k$, we define the state-action $Q$ function  and the value function  as
\begin{align*}
Q^{\pi_k}(x,a; \bgamma) &= \E\left[\sum_{h=1}^{\infty} \gamma(h) r(\xkh, \akh)\lvert x_{k,1} = x, a_{k,1}=a \right]\\
V^{\pi_k}(x; \bgamma) &= \E\left[\sum_{h=1}^{\infty} \gamma(h) r(\xkh, \akh)\lvert x_{k,1} = x\right]
\end{align*}
Here $\akh \sim \pi_{k,h}(x_{k,1},a_{k,1},\ldots, x_{k,h-1}, a_{k,h-1},\xkh)$. 
In this environment, we again measure the regret as the sum of sub-optimality gaps over the $T$ episodes.
\begin{definition}
Let the optimal value function be defined as $V^\star(x; \bgamma) = \sup_\pi V^\pi(x; \bgamma)$. Then we define regret for a learning algorithm $\bm{\pi} = \{\pi_k\}_{k \in [T]}$ as
\begin{align}\label{eq:regret-defn}
\reg(\bm{\pi}, \bgamma) = \sum_{k=1}^T V^\star(x_{k,1}; \bgamma) - V^{\pi_k}(x_{k,1}; \bgamma)
\end{align}
\end{definition}
Our next result shows that it is sufficient to minimize regret with respect to the new environment of episodic reinforcement learning. In fact, if any algorithm has regret $\calR(T)$ with respect to the new benchmark, then it has regret at most $\calR(T)$ with respect to the original environment with uncertain episode lengths.

\begin{lemma}\label{lem:why-alternative-regret}
For any learning algorithm $\bm{\pi} = \{\pi_k\}_{k \in [T]}$ we have the following guarantee:
$$
\reg(\bm{\pi}; \ph(\cdot)) \le \reg(\bm{\pi}; \bgamma).
$$
\end{lemma}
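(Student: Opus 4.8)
The plan is to split each per-episode regret term into a benchmark part and a learner part, argue that the learner parts are \emph{identical} across the two environments, and thereby reduce the whole statement to a single comparison between the optimal value functions $V^\star(x_{k,1})$ and $V^\star(x_{k,1};\bgamma)$.

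First I would dispose of the learner part. Fix an episode $k$ and the (possibly non-stationary) policy $\pi_k$, and write $V^{\pi_k}(x_{k,1};\ell) = \E[\sum_{h=1}^{\ell} r_h \mid x_{k,1}]$ with $r_h = r(\xkh,\akh)$. Averaging over $\ell\sim\ph$, pulling the expectation outside the sum over $\ell$, and swapping the two summations gives $\sum_\ell \ph(\ell)\sum_{h=1}^\ell r_h = \sum_{h\ge 1}\big(\sum_{\ell\ge h}\ph(\ell)\big) r_h = \sum_{h\ge 1}\gamma(h)\,r_h$, since $\gamma(h)=\ph(H\ge h)$; this is exactly the tail-sum interchange already used to derive \cref{eq:original-undiscounted-reward}. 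Hence $\E_{H_k}[V^{\pi_k}(x_{k,1};H_k)] = V^{\pi_k}(x_{k,1};\bgamma)$, the interchange being legitimate because $r\in[0,1]$ and $\sum_h\gamma(h)\le M<\infty$ (and $H_k$ is independent of the trajectory). Consequently the learner contributions in $\reg(\bm\pi;\ph(\cdot))$ and $\reg(\bm\pi;\bgamma)$ agree term by term and cancel when the two regrets are subtracted.

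After this cancellation the lemma reduces to a per-episode comparison of benchmarks, namely relating $V^\star(x_{k,1}) = \sum_\ell \ph(\ell)\max_\pi V^\pi(x_{k,1};\ell)$ to $V^\star(x_{k,1};\bgamma) = \max_\pi \sum_\ell \ph(\ell) V^\pi(x_{k,1};\ell)$, where both are built from the same family $\{V^\pi(x_{k,1};\ell)\}_{\pi,\ell}$ together with the identity $V^\pi(x_{k,1};\bgamma) = \sum_\ell \ph(\ell)V^\pi(x_{k,1};\ell)$ from the previous step. I would establish the needed bound through an interchange of the horizon-expectation with the maximization over policies. I expect this to be the main obstacle, and the delicate point is precisely the \emph{direction} of this max--expectation swap: the benchmark defining $V^\star(x_{k,1})$ is free to use a different optimal policy for each realized length $\ell$, whereas the $\bgamma$-benchmark must commit to a single infinite-horizon (non-stationary) policy. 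Pinning down the relationship between this maximize-then-average quantity and the average-then-maximize quantity over the same family, and confirming that it points the way required by the statement, is the crux of the argument; by contrast the tail-sum bookkeeping and the boundedness and summability checks of the first step are routine.
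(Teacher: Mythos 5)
Your first step is exactly the paper's: the identity $\E_{H_k}\left[V^{\pi_k}(x_{k,1};H_k)\right] = V^{\pi_k}(x_{k,1};\bgamma)$ via the tail-sum interchange, so the learner terms cancel and the lemma reduces to showing $V^\star(x_{k,1}) = \sum_\ell \ph(\ell)V^\star(x_{k,1};\ell) \le V^\star(x_{k,1};\bgamma)$. But you have left precisely this reduction open, and the gap is not routine: a generic max--expectation interchange over the family $\{V^\pi(x_{k,1};\ell)\}_{\pi,\ell}$ only yields $\max_\pi \sum_\ell \ph(\ell)V^\pi(x_{k,1};\ell) \le \sum_\ell \ph(\ell)\max_\pi V^\pi(x_{k,1};\ell)$, i.e.\ $V^\star(x_{k,1};\bgamma)\le V^\star(x_{k,1})$ --- the \emph{opposite} of what the lemma needs. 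The benchmark on the left-hand side of the lemma is clairvoyant (it may use a different optimal policy $\pi^\star_{k,\ell}$ for each realized length $\ell$), so ``maximize-then-average $\le$ average-then-maximize'' is exactly the nontrivial direction, and no abstract swap will deliver it.

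The paper closes this gap with a concrete construction that your proposal is missing: a single randomized, non-stationary, history-dependent policy $\pi_\infty$ which at each step $h$ draws a fresh index $j$ from the posterior $\ph(j)/\ph(H\ge h)$ over $j\ge h$ (the law of the horizon conditioned on survival to step $h$) and plays the action of $\pi^\star_{k,j}$. A tail-sum computation then gives $V^{\pi_\infty}(x_{k,1};\bgamma) = \sum_j \ph(j)\,V^\star(x_{k,1};j)$, whence $V^\star(x_{k,1})\le V^\star(x_{k,1};\bgamma)$ because $\pi_\infty$ is one feasible competitor for the $\bgamma$-benchmark. This is also where the fact that the $\bgamma$-benchmark ranges over non-stationary, randomized policies is genuinely used; restricted to stationary or deterministic policies the inequality would not follow. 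Until you supply this policy (or an equivalent device realizing, at every step, the survival-weighted mixture of the length-conditional optimal policies), the proof is incomplete at its central step.
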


We also show that a converse of lemma~\ref{lem:why-alternative-regret} holds with  additional restrictions on the discount factor $\bgamma$.
\begin{lemma}\label{lem:why-alternative-regret-part-2}
Suppose the discount factor $\bgamma$ is non-increasing.
Then there exists a distribution $\ph(\cdot)$ over the episode lengths so that 
$$
\reg(\bm{\pi}; \bgamma) \le \reg(\bm{\pi}; \ph(\cdot)).
$$
\end{lemma}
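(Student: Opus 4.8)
The plan is to prove this converse of \cref{lem:why-alternative-regret} directly, by exhibiting the distribution explicitly rather than appealing to \cref{lem:why-alternative-regret}. First I would realize $\bgamma$ as the survival function of an episode-length distribution: define $\ph$ by $\ph(H=h) = \gamma(h) - \gamma(h+1)$ for all $h \ge 1$. Non-increasingness of $\bgamma$ makes each increment non-negative, and summability ($\sum_h \gamma(h) \le M$) forces $\gamma(h) \to 0$; together with $\gamma(1)=1$ this telescopes to $\sum_{h\ge 1}\ph(H=h) = \gamma(1) - \lim_{h\to\infty}\gamma(h) = 1$, so $\ph$ is a bona fide distribution whose survival function is exactly $\ph(H\ge h) = \gamma(h)$. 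Hence the discounting induced by $\ph$ through the recipe of \cref{subsec:general-discounting} is precisely the given $\bgamma$.

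Next I would observe that the two regrets have identical learner terms. For any (non-stationary) policy $\pi$ and start state $x$,
\[
\E_{H}\!\left[V^{\pi}(x; H)\right] = V^{\pi}(x; \bgamma),
\]
by the same Tonelli/Abel-summation rearrangement used to pass from \cref{eq:original-undiscounted-reward} to the general-discounting reward, now invoking $\ph(H\ge h)=\gamma(h)$. Specializing to $\pi=\pi_k$ shows that the quantities subtracted in $\reg(\bm{\pi};\bgamma)$ and in $\reg(\bm{\pi};\ph(\cdot))$ agree episode by episode.

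It then remains only to compare the two optima, and here the inequality points in the direction we want. Using the identity above, the general-discounting optimum is $V^\star(x;\bgamma)=\sup_\pi \E_H[V^\pi(x;H)]$, a single policy averaged over $H$, whereas the uncertain-length benchmark is $V^\star(x)=\E_H[\sup_\pi V^\pi(x;H)]$, which may tailor a policy to each realized horizon. Since for every fixed $\pi_0$ we have $\E_H[V^{\pi_0}(x;H)] \le \E_H[\sup_\pi V^\pi(x;H)]$, taking the supremum over $\pi_0$ gives $V^\star(x;\bgamma) \le V^\star(x)$. Combined with the matched learner terms, for each $k$,
\[
V^\star(x_{k,1};\bgamma) - V^{\pi_k}(x_{k,1};\bgamma) \;\le\; V^\star(x_{k,1}) - \E_{H_k}\!\left[V^{\pi_k}(x_{k,1}; H_k)\right],
\]
and summing over $k\in[T]$ yields $\reg(\bm{\pi};\bgamma) \le \reg(\bm{\pi};\ph(\cdot))$.

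I expect the only real obstacle --- and the reason the non-increasing hypothesis is needed precisely here --- to be the first step, realizing $\bgamma$ as a survival function: monotonicity is exactly what makes the masses $\gamma(h)-\gamma(h+1)$ non-negative, and for a non-monotone $\bgamma$ no distribution with $\ph(H\ge h)=\gamma(h)$ exists, so the construction (and the lemma) can fail. Everything downstream is a rearrangement together with the elementary fact that the supremum of an average is at most the average of the suprema; it is the direction of that swap which makes this bound, rather than its converse, the one that is automatically available.
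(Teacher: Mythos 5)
Your proposal is correct and follows essentially the same route as the paper: the same construction $\ph(H=h)=\gamma(h)-\gamma(h+1)$, the same identification of the learner terms via the survival-function rearrangement from \cref{lem:why-alternative-regret}, and the same comparison of benchmarks (the paper instantiates the inequality with the optimal policy $\pi^\star$ for $\bgamma$, which is exactly your ``supremum of an average is at most the average of the suprema'' step). Your explicit verification that $\ph$ is a genuine probability distribution (non-negativity from monotonicity, total mass one from summability forcing $\gamma(h)\to 0$) is a small bit of care the paper leaves implicit, but it does not change the argument.
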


Because of lemma~\ref{lem:why-alternative-regret}, it is sufficient to bound a learning algorithm's regret for the  environment with infinite horizon and general discounting. Therefore, we now focus on designing a learning algorithm that acts in an episodic setting with uncertain episode lengths, but  analyze its regret in the infinite horizon setting with general discounting.

\section{Algorithm: Regret Minimization under General Discounting}
We now introduce our main algorithm.
Given a non-stationary policy $\pi_k$, we define the state-action function and value function at step $h$ as follows.
\begin{align*}
Q^{\pi_k}_h(x,a) &= \E\left[ \sum_{j=1}^\infty \gamma(j) r(x_{k,h+j-1},a_{k,h+j-1}) \mid   \calH_{h-1}, \xkh = x, \akh = a\right] \\
V^{\pi_k}_h(x) &=  \E\left[ \sum_{j=1}^\infty \gamma(j) r(x_{k,h+j-1},a_{k,h+j-1}) \mid \calH_{h-1}, \xkh = x \right]
\end{align*}
where $\calH_{h-1} = (x_{k,1},a_{k,1},\ldots, a_{k,h-1})$ and $a_{k,h+j} \sim \pi_{k,h+j}(\calH_{h+j-1}, x_{k,h+j})$. Note that, both the state-action $Q$-function and the value function depend on the history $\calH_{h-1}$. Moreover, conditioned on the history, we are evaluating the total discounted reward as if the policy $\{\pi_{k,h+j}\}_{j\ge 0}$ was used from the beginning.  
We first establish some relations regarding the above state-action and value functions. We drop the episode index $k$ for ease of exposition. Given a non-stationary policy $\pi = \{\pi_h\}_{h \ge 1}$ let
\begin{align*}
    &Q^\pi_h(x,a) = r(x,a) + \gamma(2) \cdot  \E\left[\sum_{j=1}^{\infty} \frac{\gamma(j+1)}{\gamma(2)}  r(x_{h+j}, a_{h+j})\lvert \calH_{h-1}, x_h = x, a_h = a \right]\\
    &= r(x,a) + \gamma(2) \E_{x_{h+1} \sim \Pm(\cdot|x,a)}\left[ \E\left[\sum_{j=1}^{\infty} \frac{\gamma(j+1)}{\gamma(2)}   r(x_{h+j+1}, a_{h+j+1})\lvert \calH_{h}, x_{h+1} \right] \right]\\
    &= r(x,a) + \gamma(2) \E_{x_{h+1} \sim \Pm(\cdot|x,a)}\left[ V_{h+1}^\pi (x_{h+1}; \bgamma_2)\right]
\end{align*}
where in the last line we write $\bgamma_2$ to denote the discount factor $\bgamma_2(j) = \frac{\gamma(j+1)}{\gamma(2)}$ and $V_{h+1}^\pi (x_{h+1}; \bgamma_2)$ is the value function at time-step $h$ with respect to the new discount factor $\bgamma_2$. By a similar argument one can write the action-value function with respect to the discount factor $\bgamma_2$ as the following expression.
\begin{align*}
&Q^\pi_h(x,a; \bgamma_2) \\
&= r(x,a) + \gamma_2(2) \E_{x_{h+1} \sim \Pm(\cdot|x,a)}\left[ V_{h+1}^\pi (x_{h+1}; \bgamma_2)\right] \\
&= r(x,a) + \frac{\gamma(3)}{\gamma(2)} \E_{x_{h+1} \sim \Pm(\cdot|x,a)}\left[ V_{h+1}^\pi (x_{h+1}; \bgamma_3)\right] 
\end{align*}
where the discount factor $\bgamma_3$ is given as $\bgamma_3(j) = \frac{\gamma(j+2)}{\gamma(3)}$. In general, we have the following  relation.
\begin{align}
&Q^\pi_h(x,a; \bgamma_k) = r(x,a)  + \frac{\gamma(k+1)}{\gamma(k)} \E_{x_{h+1} \sim \Pm(\cdot|x,a)}\left[ V_{h+1}^\pi (x_{h+1}; \bgamma_{k+1})\right] \label{eq:general-q-value-recursion}
\end{align}
where the discount factor $\gamma_k$ is defined as $\gamma_k(j) = \frac{\gamma(j+k-1)}{\gamma(k)}$ for $j=1,2,\ldots$.
Notice that when $\bgamma$ is a geometric discounting, we only need  equation.
\begin{align}\label{eq:geometric-q-value}
Q^\pi_h(x,a) = r(x,a) + \gamma \E_{x_{h+1} \sim \Pm(\cdot|x,a)}\left[ V_{h+1}^\pi (x_{h+1}) \right] 
\end{align}

\begin{algorithm}[!t]
\DontPrintSemicolon
\KwInput{Discount factor $\{\gamma(h)\}_{h=1}^{\infty}$, parameter $\Delta$}
$\mathcal{H} \leftarrow \emptyset$.\\
\For{$h = 1,\ldots, N(\Delta)$}
{
	Set $Q_{1,h}(x,a) \leftarrow \sum_{j=1}^{\infty} \gamma_h(j) = \frac{1}{\gamma(h)} \sum_{j=1}^\infty \gamma(j+h-1)$ for all $x \in S$ and $a \in \calA$.
}
\For{$t = 1,\ldots, T$}
{
	Update-Q-values($\mathcal{H}, \bgamma, \Delta$).\\
	Receive state $x_{t,1}$.\\
	\For{$h=1,\ldots$}
	{
	    \If{$h \le N(\Delta)$}
	    {
		    Take action $a_{t,h} = \argmax_a Q_{t,h}(x_{t,h}, a)$\\
		    Update $\mathcal{H} = \mathcal{H} \cup (x_{t,h}, a_{t,h}, x_{t,h+1})$\\
		    \If{$x_{t,h+1}$ is a terminal state}
		    {
			    Continue to the next episode.\\
		    }
		}
		{
		    Take an arbitrary action.\\
		}
	}
}
\caption{UCB-VI Generalized\label{alg:vibi}}
\end{algorithm}

\paragraph{Description of the Learning Algorithm}: The sequence of recurrence relations~\cref{eq:general-q-value-recursion} motivates our main algorithm (\ref{alg:vibi}). Our algorithm is based on the upper confidence value iteration algorithm (\texttt{UCBVI}~\cite{AOM17}). In an episodic reinforcement learning setting with fixed horizon length $H$, \texttt{UCBVI} uses backward induction to update the $Q$-values at the end of each episode, and takes greedy action according to the $Q$-table. 

However, in our setting, there is no fixed horizon length and the $Q$-values are related through an infinite sequence of recurrence relations. So, algorithm~\ref{alg:vibi} considers a truncated version of the sequence of recurrence relations~\cref{eq:general-q-value-recursion}. In particular, given an input discount factor $\{\gamma(h)\}_{h=1}^\infty$\footnote{Recall that $\gamma(h) = \ph(H \ge h)$.} and a parameter $\Delta$, algorithm~\ref{alg:vibi} first determines $N(\Delta)$ as a measure of effective length of the horizon. In particular, we set $N(\Delta)$ to be an index so that $\Gamma(N(\Delta)) = \sum_{j\ge N(\Delta)} \gamma(j) \le \Delta$. Note that, such an index $N(\Delta)$ always exists as we assumed that the total sum of the discounting factors converges. Then algorithm~\ref{alg:vibi} maintains an estimate of the $Q$ value for all possible discount factors up to $N(\Delta)$ i.e. $\bgamma_k$ for $k=1,\ldots, N(\Delta)$.

The details of the update procedure is provided in the appendix. In the update procedure, we first set the $(N(\Delta) + 1)$-th $Q$-value to be $\Delta / \gamma(N(\Delta) + 1)$ which is always an upper bound on the $Q$-value with discount factor $\gamma_{N(\Delta) + 1}$ because of the way algorithm~\ref{alg:vibi} sets the value $N(\Delta)$. Then starting from level $N(\Delta)$, we update the $Q$-values through backward induction and \cref{eq:general-q-value-recursion}.

Note that our algorithm needs to maintain $N(\Delta)$ action-value tables. We will later show that in order to obtain sub-linear regret we need to choose $\Delta$ based on the particular discount factor. In particular, for the geometric discount factor $\gamma(h) = \gamma^{h-1}$ we need to choose  $N(\Delta) = \frac{\log T}{\log(1/\gamma)}$. On the other hand, discounting factor of the form $\gamma(h) = 1/h^p$ requires $N(\Delta) = O\left(T^{1/(2p-1)}\right)$.

\section{Analysis}
The next theorem provides an upper bound on the regret $\reg(\bm{\pi}; \bgamma)$. In order to state the theorem, we need a new notation. Let the function $t: N \rightarrow \R$ be defined as 
$$
t(h) = \left\{ \begin{array}{cc}
1 & \textrm{ if } h = 1\\
\frac{\gamma(h)}{\gamma(1)} \prod_{j=2}^h \left(1 + \frac{\gamma(j)}{{j^{\beta}}\Gamma(j)} \right)& \textrm{ o.w. } 
\end{array}
\right.
$$

Note that the function $t$ is parameterized by the parameter $\beta$ and depends on the discount factor $\gamma(\cdot)$.
\begin{theorem}[Informal]\label{thm:main-regret-bound}
With probability at least $1-\delta$, Algorithm~\ref{alg:vibi} has the following regret.
\begin{align*}
    \reg(\bm{\pi}; \bm{\gamma}) &\le \frac{\Delta T}{\gamma(N(\Delta) + 1)} t(N(\Delta) + 1)  +  \max_{h \in [N(\Delta)]} t(h) \frac{\Gamma(h+1)}{\gamma(h)} \tilde{O}\left(\sqrt{SATN(\Delta)} \right)
\end{align*}
\end{theorem}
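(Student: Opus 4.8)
The plan is to follow the \texttt{UCBVI} template of \citet{AOM17}, adapted to the shifting family of discount factors $\{\bgamma_k\}$ and the truncation at level $N(\Delta)$. I work entirely in the general-discounting environment, which is legitimate by \cref{lem:why-alternative-regret}. Throughout I exploit that the kernel $\Pm$ is stationary, so transition samples collected across all episodes and levels pool into a single empirical estimate $\hat{\Pm}$ with global counts $n(x,a)$; this is the main structural difference from finite-horizon \texttt{UCBVI}, where each level keeps its own estimate.

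First, the \textbf{good event and optimism}. I define the high-probability event on which, for every $(x,a)$, the empirical kernel $\hat{\Pm}(\cdot\mid x,a)$ concentrates around $\Pm(\cdot\mid x,a)$ with deviation controlled by the exploration bonus; this follows from a Bernstein/Azuma concentration plus a union bound over $(x,a)$ and episodes, and absorbs $\delta$ and the logarithmic factors into $\tilde O(\cdot)$. Conditioned on this event, I prove \emph{optimism} by downward induction on $h$: the table $Q_{t,h}$ (which uses $\bgamma_h$) upper bounds the true optimal $Q^\star_h(\cdot,\cdot;\bgamma_h)$. The base case is level $N(\Delta)+1$, where the algorithm sets the value to $\Delta/\gamma(N(\Delta)+1)$; since $\Gamma(N(\Delta)+1)\le\Gamma(N(\Delta))\le\Delta$, this dominates the maximal attainable value $\Gamma(N(\Delta)+1)/\gamma(N(\Delta)+1)$ under $\bgamma_{N(\Delta)+1}$. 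The inductive step uses~\cref{eq:general-q-value-recursion} together with the fact that the bonus dominates the concentration error. Optimism then gives $\reg(\bm\pi;\bgamma)\le\sum_k \delta_{k,1}$, where $\delta_{k,h}:=V_{k,h}(x_{k,h})-V^{\pi_k}_h(x_{k,h})$ are the on-trajectory suboptimality gaps.

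Next, the \textbf{per-episode error recursion}. Using that $a_{k,h}$ is greedy for $Q_{k,h}$ and applying~\cref{eq:general-q-value-recursion} to both the estimated and the executed-policy value functions, I obtain
\begin{align*}
\delta_{k,h} \le \frac{\gamma(h+1)}{\gamma(h)}\,\delta_{k,h+1}(x_{k,h+1}) + (\text{concentration}_{k,h}) + (\text{martingale}_{k,h}),
\end{align*}
where the martingale term comes from replacing $\E_{x'\sim\Pm}[\delta_{k,h+1}(x')]$ by its realized value $\delta_{k,h+1}(x_{k,h+1})$. The crucial point is that a Bernstein treatment of the transition error contributes, besides the additive bonus, a multiplicative amplification: the propagated next-step gap is effectively scaled by a factor $1+\tfrac{\gamma(j)}{j^{\beta}\Gamma(j)}$ at level $j$. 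Unrolling from $h=1$ to $N(\Delta)$ and telescoping the discount product $\prod_{j=2}^{h}\frac{\gamma(j)}{\gamma(j-1)}=\gamma(h)$ (using $\gamma(1)=1$), the accumulated coefficient multiplying the level-$h$ terms is exactly $t(h)=\gamma(h)\prod_{j=2}^{h}\bigl(1+\tfrac{\gamma(j)}{j^{\beta}\Gamma(j)}\bigr)$, matching the definition in the theorem.

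Finally, the \textbf{two terms}. The terminal contribution is $\sum_k t(N(\Delta)+1)\,\delta_{k,N(\Delta)+1}\le T\,t(N(\Delta)+1)\,\Delta/\gamma(N(\Delta)+1)$, using that the terminal value bounds each gap; this is the first summand. For the bonus contribution, the bonus at level $h$ carries the range factor $\tfrac{\Gamma(h+1)}{\gamma(h)}$ (the magnitude of $\tfrac{\gamma(h+1)}{\gamma(h)}V_{k,h+1}$, whose range under $\bgamma_{h+1}$ is $\tfrac{\Gamma(h+1)}{\gamma(h+1)}$), so after pulling out $\max_{h\in[N(\Delta)]} t(h)\tfrac{\Gamma(h+1)}{\gamma(h)}$ I am left with $\sum_k\sum_{h\le N(\Delta)}\sqrt{\log(\cdot)/n(x_{k,h},a_{k,h})}$. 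A pigeonhole/Cauchy--Schwarz argument over the $\le TN(\Delta)$ visits and $SA$ state--action pairs bounds this by $\tilde O(\sqrt{SATN(\Delta)})$, while the martingale terms are controlled by Azuma and are lower order. Summing the two contributions yields the stated bound. The \textbf{hardest step} is establishing the error recursion with precisely the amplification factor $1+\tfrac{\gamma(j)}{j^{\beta}\Gamma(j)}$: one must split the transition-estimation error into a variance-weighted term (via $\var$ of the shifted value function) and a lower-order correction, bound the variance by the value range at the appropriate shifted discount, and choose $\beta$ so that $\prod_j(1+\tfrac{\gamma(j)}{j^{\beta}\Gamma(j)})$ stays controlled while the bonus sum remains $\tilde O(\sqrt{SATN(\Delta)})$; tracking how the renormalizations $\gamma(h+1)/\gamma(h)$ interact with the variance bounds across the $N(\Delta)$ shifted discounts is where the bookkeeping is most delicate.
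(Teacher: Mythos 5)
Your proposal is correct and follows essentially the same route as the paper's proof: optimism via downward induction with the terminal value $\Delta/\gamma(N(\Delta)+1)$ dominating $\Gamma(N(\Delta)+1)/\gamma(N(\Delta)+1)$, a per-episode recursion whose Bernstein-type correction term produces exactly the multiplicative amplification $1+\gamma(j)/(j^{\beta}\Gamma(j))$ (the paper obtains it by splitting next states according to whether $P(y|x,a)N_k(x,a)$ exceeds a $\beta$-dependent threshold), unrolling to accumulate $t(h)$, and a pigeonhole bound of $\tilde O(\sqrt{SATN(\Delta)})$ on the bonus sum with the range factor $\Gamma(h+1)/\gamma(h)$. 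You also correctly identify the delicate step — deriving the amplification factor and its interaction with the shifted discounts — which is precisely where the paper's proof departs from standard UCB-VI.
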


Theorem~\ref{thm:main-regret-bound} states a generic bound that holds for any discount factor. The main terms in the bound are $O\left( \sqrt{SATN(\Delta)}\right)$, $\Delta T$, and several factors dependent on the discount factor $\gamma$. We now instantiate the bound for different discount factors by choosing appropriate value of $\Delta$ and the parameter $\beta$.

\begin{corollary}\label{cor:poly-decay}
Consider the discount factor $\gamma(h) = h^{-p}$. 
For $p \ge 2$ and $T \ge O(S^3 A)$ we have
$$
\textstyle \reg(T) \le \tilde{O}\left( S^{1/2} A^{1/2} T^{\frac{1}{2-1/p}} \right)
$$
and for $1 < p < 2$ and $T \ge O\left( (S^3 A)^{\frac{2p-1}{p-1}}\right)$ we have
$$
\textstyle \reg(T) \le \tilde{O}\left( (p-1)^{-\frac{p}{p-1}} S^{1/2} A^{1/2} T^{\frac{1}{2-1/p}}\right)
$$
\end{corollary}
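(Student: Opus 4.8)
The plan is to treat Corollary~\ref{cor:poly-decay} as a pure evaluation of Theorem~\ref{thm:main-regret-bound}: there is no new probabilistic content, so the entire task is to substitute $\gamma(h)=h^{-p}$, rewrite every $\gamma$-dependent quantity appearing in the bound as an explicit function of $p$, $\Delta$ and $\beta$, and then choose the free parameters $\Delta$ (the truncation level) and $\beta$ to minimize the result. I would carry this out in the order (i) elementary tail estimates, (ii) a uniform bound on $t$, (iii) balancing the two terms in $\Delta$, and (iv) discharging the side conditions on $T$.

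First I would pin down the three tail quantities by integral comparison. For $p>1$,
\[
\frac{h^{1-p}}{p-1}\ \le\ \Gamma(h)=\sum_{j\ge h}j^{-p}\ \le\ h^{-p}+\frac{h^{1-p}}{p-1},
\]
so $\Gamma(h)=\Theta\big(h^{1-p}/(p-1)\big)$ and the defining relation $\Gamma(N(\Delta))\le\Delta$ is met by $N(\Delta)=\Theta\big(((p-1)\Delta)^{-1/(p-1)}\big)$; the same inequalities give the hazard-rate bound $\gamma(j)/\Gamma(j)\le (p-1)/j$, hence $\gamma(j)/(j^{\beta}\Gamma(j))\le (p-1)\,j^{-(\beta+1)}$. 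Using $1+x\le e^{x}$ and $\gamma(1)=1$ this yields the uniform bound
\[
t(h)=h^{-p}\prod_{j=2}^{h}\Big(1+\tfrac{\gamma(j)}{j^{\beta}\Gamma(j)}\Big)\ \le\ C_{p,\beta}\,h^{-p},
\qquad C_{p,\beta}=\exp\big((p-1)(\zeta(\beta+1)-1)\big),
\]
which is finite precisely because $\beta>0$. The crucial consequence is $t(h)/\gamma(h)\le C_{p,\beta}$, so the two $\gamma$-dependent prefactors in Theorem~\ref{thm:main-regret-bound} collapse: the first term is at most $C_{p,\beta}\Delta T$, and $\max_{h}t(h)\Gamma(h+1)/\gamma(h)\le C_{p,\beta}\Gamma(2)=C_{p,\beta}(\zeta(p)-1)$. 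I would pick $\beta$ so that $C_{p,\beta}=O(1)$: a fixed constant $\beta$ works for $1<p<2$, while for $p\ge 2$ one needs $\beta$ to grow with $p$ (e.g.\ $\beta\asymp p$, using $\zeta(\beta+1)-1=O(2^{-\beta})$) so that $(p-1)(\zeta(\beta+1)-1)=O(1)$.

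After these reductions the bound reads, up to the suppressed logarithmic factors and the $O(1)$ constant $C_{p,\beta}$,
\[
\reg(T)\ \lesssim\ \Delta T\ +\ (\zeta(p)-1)\,\tilde{O}\big(\sqrt{SAT\,N(\Delta)}\big),
\qquad N(\Delta)\asymp ((p-1)\Delta)^{-1/(p-1)}.
\]
The first summand increases and the second decreases in $\Delta$, so I would equate them. Substituting $N(\Delta)$ and solving gives $\Delta\asymp (p-1)^{-1}(SA/T)^{(p-1)/(2p-1)}$ and an effective horizon $N(\Delta)\asymp (T/SA)^{1/(2p-1)}$; plugging back, both summands scale as $(SA)^{(p-1)/(2p-1)}\,T^{p/(2p-1)}$. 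Since $p/(2p-1)=1/(2-1/p)$ this is the advertised $T$-rate, and because $(p-1)/(2p-1)\le\tfrac12$ and $SA\ge 1$ one may relax the exponent of $SA$ up to $\tfrac12$ to obtain the stated $S^{1/2}A^{1/2}$. The $p$-dependent constant is then assembled from the $(p-1)$ factors carried by $\Gamma$, by $N(\Delta)$ and by $C_{p,\beta}$: it is $O(1)$ for $p\ge 2$ (where $\zeta(p)-1<1$) and blows up like $(p-1)^{-p/(p-1)}$ as $p\to 1^{+}$.

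Finally I would verify the hypotheses $T\ge O(S^3A)$ (resp.\ $T\ge O((S^3A)^{(2p-1)/(p-1)})$). These are exactly the conditions under which the UCBVI lower-order term (of order $N^{2}S^{2}A$, constant in the number of episodes) hidden inside the $\tilde{O}(\sqrt{SATN})$ factor is dominated by the leading $\sqrt{SATN}$ term at the chosen $N(\Delta)$; the more stringent second condition reflects that $N(\Delta)\asymp (T/SA)^{1/(2p-1)}$ grows and the domination margin shrinks as $p\to 1$. The main obstacle is Step~(ii): the function $t$ entangles the exploration bonus (through $j^{\beta}$) with the polynomial hazard rate, and $\beta$ must be chosen large enough that the infinite product defining $t$ stays bounded yet not so large that it inflates the leading term --- this is what forces the split into $p\ge 2$ and $1<p<2$ and produces the precise $(p-1)$-dependence. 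Carrying these $p$-dependent constants faithfully through the $\Delta$-balance, rather than recovering the $T$-exponent itself, is where the computation is most delicate.
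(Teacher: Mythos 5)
Your overall strategy is the paper's: bound $\Gamma(h)$ by integral comparison, use $1+x\le e^{x}$ to show $t(h)/\gamma(h)$ is $O(1)$, set $N(\Delta)\asymp((p-1)\Delta)^{-1/(p-1)}$, balance $\Delta T$ against $\sqrt{SATN(\Delta)}$ to get $T^{p/(2p-1)}=T^{1/(2-1/p)}$, and use the lower bound on $T$ to absorb the $S^2A$ lower-order term. However, there is a genuine gap in your handling of $\beta$ for $1<p<2$. You optimize $\beta$ only against the constant $C_{p,\beta}=\exp((p-1)(\zeta(\beta+1)-1))$ and conclude that ``a fixed constant $\beta$ works for $1<p<2$.'' But $\beta$ also enters the third term of Theorem~\ref{thm:main-regret-bound}, $\max_{h}t(h)\frac{(\Gamma(h+1))^{2}}{\gamma(h)\gamma(h+1)}\cdot\tilde{O}(S^{2}A\,N(\Delta)^{\beta})$, which you drop from your displayed reduced bound and only reintroduce heuristically as ``the UCBVI lower-order term of order $N^{2}S^{2}A$.'' For $1<p<2$ the prefactor satisfies $\max_{h\le N}\frac{(\Gamma(h+1))^{2}}{\gamma(h+1)}\asymp N^{2-p}/(p-1)^{2}$, so this term is $\tilde{O}\big(S^{2}A\,N^{2-p+\beta}/(p-1)^{2}\big)$. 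With $T\asymp N^{2p-1}$ the main term scales as $N^{p}$, so the lower-order term is genuinely lower-order only if $\beta\le 2(p-1)$ (strictly, $\beta<2(p-1)$ to leave room for the $S,A$ factors). A fixed constant $\beta$ violates this for all $p<1+\beta/2$, and in that regime the neglected term grows \emph{faster} in $T$ than the claimed $T^{p/(2p-1)}$ bound (indeed superlinearly once $2-p+\beta>2p-1$), so no lower bound on $T$ of the form $T\ge O((S^{3}A)^{(2p-1)/(p-1)})$ can rescue it. This is exactly why the paper takes $\beta=p-1$ (shrinking with $p-1$) in both regimes: it simultaneously keeps $C_{p,\beta}=e^{O(1)}$ and makes the exponent $2-p+\beta=1<p$, and it is the source of the precise $(p-1)$-dependence you correctly anticipate but do not actually derive.

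A secondary, smaller inaccuracy: for $p\ge 2$ you claim one ``needs'' $\beta$ to grow with $p$ so that $C_{p,\beta}=O(1)$; in fact $(p-1)(\zeta(p)-1)\le 1$ for all $p\ge 2$, so $\beta=p-1$ already suffices, and the real upper constraint on $\beta$ in that regime is again the lower-order term (one needs $\beta\le p$). Your choice $\beta\asymp p$ is compatible with the paper's $\beta=p-1$, so the $p\ge 2$ case of your argument goes through once the constants are tracked; the $1<p<2$ case does not as written.
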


We prove corollary~\ref{cor:poly-decay} by substituting $\beta = p-1$ and $\Delta = O\left(T^{-\frac{p-1}{2p-1}}\right)$. Note that this result suggests that as $p$ increases to infinity, the regret bound converges to $O(\sqrt{T})$. This also suggests that for exponentially decaying discounting factor, our algorithm should have exactly $O(\sqrt{T})$ regret. We verify this claim next.

\begin{corollary}\label{cor:exp-decay}
Consider the discount factor $\gamma(h) = \gamma^{h-1}$ for $\gamma \in [0,1)$ and suppose $T \ge \frac{S^3 A}{(1-\gamma)^{4}}$. Then algorithm \ref{alg:vibi} has regret at most
$$
\reg(T) \le \tilde{O}\left({\sqrt{SAT}}/{(1-\gamma)^{1.5}}\right)
$$
\end{corollary}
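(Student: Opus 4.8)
The plan is to specialize the generic bound of Theorem~\ref{thm:main-regret-bound} to $\gamma(h)=\gamma^{h-1}$ and then pick the free parameters $\Delta$ and $\beta$ so that both of its summands collapse to $\tilde O(\sqrt{SAT}/(1-\gamma)^{1.5})$. First I would record the elementary closed forms produced by a geometric tail: $\Gamma(h)=\sum_{j\ge h}\gamma^{j-1}=\gamma^{h-1}/(1-\gamma)$, so that the two ratios appearing in the bound become constants, namely $\gamma(j)/\Gamma(j)=1-\gamma$ and $\Gamma(h+1)/\gamma(h)=\gamma/(1-\gamma)$. The first of these is the crucial simplification: it turns the product defining $t$ into $t(h)=\gamma^{h-1}\prod_{j=2}^h\bigl(1+\tfrac{1-\gamma}{j^{\beta}}\bigr)$, i.e.\ a geometric factor times a slowly growing product.

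Next I would fix the effective horizon. Choosing $\Delta=\Gamma(N)$ with $N=\lceil \log T/\log(1/\gamma)\rceil$ forces $N(\Delta)=N$, makes $\gamma^{N(\Delta)}\approx 1/T$, and hence $\Delta=\Theta(1/(T(1-\gamma)))$; moreover, since $\log(1/\gamma)\ge 1-\gamma$, this gives $N(\Delta)=O(\log T/(1-\gamma))=\tilde O(1/(1-\gamma))$. I would then take $\beta=1$ (any constant $\beta\ge 1$ works) and use $1+x\le e^x$ to bound the product: $\prod_{j=2}^h(1+\tfrac{1-\gamma}{j})\le \exp\bigl((1-\gamma)\sum_{j=2}^h 1/j\bigr)\le h^{\,1-\gamma}$, so that $t(h)\le \gamma^{h-1}h^{1-\gamma}$. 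The point of this estimate is that the geometric factor $\gamma^{h-1}$ dominates the polynomial factor $h^{1-\gamma}$, so $t$ still decays geometrically; in particular $t(N(\Delta)+1)/\gamma^{N(\Delta)}\le (N(\Delta)+1)^{1-\gamma}=\tilde O(1)$, and $\max_{h\in[N(\Delta)]} t(h)\,\Gamma(h+1)/\gamma(h)\le \tfrac{\gamma}{1-\gamma}\max_h \gamma^{h-1}h^{1-\gamma}=O(1/(1-\gamma))$, the maximum being attained at $h=1$.

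With these estimates the two terms of Theorem~\ref{thm:main-regret-bound} fall out. The truncation term equals $\frac{\Delta T}{\gamma(N(\Delta)+1)}t(N(\Delta)+1)=\Delta T\cdot\frac{t(N(\Delta)+1)}{\gamma^{N(\Delta)}}=\tilde O(\Delta T)=\tilde O(1/(1-\gamma))$, which is only polylogarithmic in $T$. The statistical term equals $O(1/(1-\gamma))\cdot\tilde O(\sqrt{SAT\,N(\Delta)})=\tilde O\bigl(\sqrt{SAT}/(1-\gamma)^{1.5}\bigr)$, using $N(\Delta)=\tilde O(1/(1-\gamma))$. Summing, the statistical term dominates and yields the claim; the hypothesis $T\ge S^3A/(1-\gamma)^4$ is what guarantees that the lower-order terms hidden in the formal (appendix) version of $\tilde O(\sqrt{SAT\,N(\Delta)})$ are absorbed by this leading $\sqrt{SAT}/(1-\gamma)^{1.5}$ term.

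I expect the main obstacle to be the truncation term rather than the statistical one. Naively $1/\gamma(N(\Delta)+1)=1/\gamma^{N(\Delta)}\approx T$, so $\Delta T/\gamma^{N(\Delta)}\approx T$ would be linear in $T$ unless the factor $t(N(\Delta)+1)$ decays like $\gamma^{N(\Delta)}$ and cancels it. This is exactly why $\beta$ cannot be taken to be $0$: with $\beta=0$ the product equals $(2-\gamma)^{h-1}$, so $t(h)=(1-(1-\gamma)^2)^{h-1}$ decays strictly more slowly than $\gamma^{h-1}$, and $t(N(\Delta)+1)/\gamma^{N(\Delta)}=(2-\gamma)^{N(\Delta)}\approx T$ blows up, giving linear regret. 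The delicate point is therefore the choice $\beta\ge 1$, which makes $\sum_j j^{-\beta}$ grow slowly enough that the product never overtakes the geometric decay; this single quantitative check is what the whole argument hinges on.
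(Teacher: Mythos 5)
Your proposal is correct and follows essentially the same route as the paper: specialize Theorem~\ref{thm:main-regret-bound} with $\Gamma(h)=\gamma^{h-1}/(1-\gamma)$, set $\Delta=\Theta(1/(T(1-\gamma)))$ so that $N(\Delta)=\log T/\log(1/\gamma)\le \log T/(1-\gamma)$, and show the product defining $t$ stays $\tilde O(1)$ so that the truncation term is lower order and the statistical term gives $\tilde O(\sqrt{SAT}/(1-\gamma)^{1.5})$. The only (immaterial) difference is that the paper takes $\beta=3/2$, making $\prod_{j\ge 2}(1+(1-\gamma)j^{-\beta})\le e^{1-\gamma}$ an absolute constant, whereas your $\beta=1$ yields the slightly weaker but still sufficient bound $h^{1-\gamma}=\tilde O(1)$; your closing observation that $\beta=0$ would destroy the geometric cancellation correctly identifies the one genuinely delicate point.
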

Here we substitute $\beta = 3/2$ and $\Delta = T^{-1}/(1-\gamma)$.
Our regret bound for the geometric discounting matches the minimax optimal regret bound of the non-episodic setting of \cite{HZG21}. 

\paragraph{Proof Sketch of Theorem~\ref{thm:main-regret-bound}}: We now give an overview of the main steps of the proof. Although the proof is based upon the proof of the UCB-VI algorithm~\cite{AOM17}, there are several differences.

\begin{itemize}
    \item 
    Let $V^\star_h(\cdot)$ be the optimal value function under discounting factor $\gamma_h(\cdot)$ i.e. $V^\star_h(x) = \sup_\pi V^\pi(x;\gamma_h)$. We first show that the estimates $V_{k,h}$ maintained by Algorithm~\ref{alg:vibi} upper bound the optimal value functions i.e. 
    $
    V_{k,h}(x) \ge V^\star_{h}(x)
    $  for any $k,h \in [N(\Delta)]$.
    \item Let $\tilde{\Delta}_{k,h} = V_{k,h} - V^{\pi_k}_h$. Then regret can be bounded as
    \begin{align*}
& \reg(\bm{\pi}; \bgamma) = \sum_{k=1}^T V^\star(x_{k,1}) - V^{\pi_k}_1(x_{k,1})
\le \sum_{k=1}^T V_{k,1}(x_{k,1}) - V^{\pi_k}_1(x_{k,1}) \le \sum_{k=1}^T \tilde{\Delta}_{k,1}(x_{k,1})
\end{align*}
\item Let $\tdkh = \tilde{\Delta}_{k,h}(\xkh)$. Then, the main part of the proof of theorem~\ref{thm:main-regret-bound} is establishing the following recurrent relation.
\begin{align*}
& \tdkh \le \frac{\gamma(h+1)}{\gamma(h)} \left(1 + \frac{\gamma(h+1)}{(h+1)^{\beta} \Gamma(h+1)} \right) \tilde{\delta}_{k,h+1} + \sqrt{2L}\bar{\eps}_{k,h} + e_{k,h} + b_{k,h}+\eps_{k,h}+f_{k,h}
\end{align*}
Here $\bar{\eps}_{k,h}$ and $\eps_{k,h}$ are Martingale difference sequences and $b_{k,h}, e_{k,h}, f_{k,h}$ are either the bonus term or behave similarly as the bonus term. 
\item We complete the proof by summing the recurrence relation above over all the episodes and from $h=1$ to $N(\Delta)$. Although \cite{AOM17} established a similar recurrence relation, there are two major differences. First the multiplicative factor in front of $\tilde{\delta}_{k,h+1}$ is changing with time-step $h$ and is not a constant. This is because the backward induction step uses \cref{eq:general-q-value-recursion} in our setting. Second, after expanding the recurrence relation from $h=1$ to $N(\Delta)$ the final term is no longer zero and an extra $O(\Delta T)$ term shows up in the regret bound.
\end{itemize}


\begin{algorithm}[!t]
\KwInput{Horizon Length $H^\star = N(\Delta)$.}
\DontPrintSemicolon
Set block length $B = \sqrt{T}\log T \log(\log(T)/\delta)$.\\
Set $\hat{\gamma}_0$ to be an arbitrary discount factor.\\
\For{$j=0,1,\ldots, \log(T/B) - 1$}
{
    \If{$j > 0$}
    {   $\hat{\gamma}_j(h) = 1-\hat{F}_H(h-1)$ forall $h$.}
    $\hat{\Delta}_j = \sum_{h \ge H^\star + 1} \hat{\gamma}_j(h)$.\\
    Run algorithm~\ref{alg:vibi} for $2^j B$ episodes with inputs $\hat{\gamma}_j$ and $\hat{\Delta}_j$.\\
    \tcc{update empirical distribution function}
    $\hat{F}_H(h) = \frac{1}{2^j B} \sum_{t=0}^{2^j B} \one\set{H_t \le h}$.\\
}
\caption{Estimating Unknown Discount Factor\label{alg:estimate-gamma}}
\end{algorithm}

\section{Estimating the Discount Function}
In this section we consider the situation when the discount function $\gamma(h) = \ph(H \ge h)$ is not unknown. We start with the assumption that the optimal value of $N(\Delta)$ (say $H^\star$) is known. The next lemma bounds the regret achieved by running an algorithm with $N(\Delta) = H^\star$ with the true discounting $\gamma$ and an estimate of the discounting $\hat{\gamma}$. Our algorithm partitions the entire sequence of $T$ episodes into blocks of lengths $B, 2B, 2^2 B,\ldots, 2^s B$ for $s = \log(T/B) - 1$. At the end of each block the algorithm recomputes an estimate of $\gamma$. Recall that we defined $\gamma(h) = \Pr(H \ge h)$. Since every episode we get one sample from the distribution of $H$ (the random length of the current episode) we can use the empirical distribution function of horizon length to obtain $\hat{\gamma}$. At the end of block $B$, the algorithm computes $\hat{\gamma}_B$, and runs algorithm \ref{alg:vibi} with this estimate and $\hat{\Delta}_B = \hat{\Gamma}_B(H^\star + 1) = \sum_{h \ge H^\star + 1} \hat{\gamma}_B(h)$ for the block $B+1$.

\begin{theorem}[Informal]\label{thm:bound-alg-est}
When run with  horizon length $H^\star$, algorithm~\ref{alg:estimate-gamma} has the following regret bound with probability at least $1-\delta$
\begin{align*}
    &\reg(\pi;\bgamma) \le \min_{L \in [T]}\left(T \Gamma(L+1) + 2L \log(T) \sqrt{T} )\right) + \Gamma(H^\star) T\\
    & + \max_{h \in[H^\star]} \frac{t(h)}{\gamma(h)}g(h) \Gamma(h+1)  \frac{O(T^{-1/4})}{\Gamma(h+1)} \tilde{O}\left(\sqrt{SATH^\star}\right)
\end{align*}
where $g(h) = \exp\left\{O\left(\sum_{k=2}^h \frac{T^{-1/4}}{\gamma(k) + k^{\beta} \Gamma(k)} \right) \right\}$.
\end{theorem}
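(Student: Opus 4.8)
The plan is to decompose the total regret across the geometrically growing blocks produced by Algorithm~\ref{alg:estimate-gamma} and, on each block, reduce to the known-discount analysis of Theorem~\ref{thm:main-regret-bound} applied with the estimate $\hat{\gamma}_j$ in place of the true discount $\bgamma$. Relative to the known-discount case there are two extra sources of error: (i) the mismatch between the estimate $\hat{\gamma}_j$ that the algorithm optimizes against and the true $\bgamma$ that governs the value functions, and (ii) the propagation of this mismatch through the backward-induction recurrence~\cref{eq:general-q-value-recursion}, whose coefficients $\gamma(k+1)/\gamma(k)$ are themselves estimated. First I would isolate the initial block (length $B$, arbitrary $\hat{\gamma}_0$): since no data is yet available I bound its regret trivially by $B$ times the per-episode value range, which is at most $M = O(1)$, so this block costs only $\tilde{O}(\sqrt T)$ and is absorbed into the stated terms.

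The statistical heart of the argument is a uniform concentration of the empirical CDF. Since every episode contributes one i.i.d.\ sample of $H$, after the $j$-th block the estimate $\hat{\gamma}_j(h) = 1 - \hat{F}_H(h-1)$ satisfies, by the Dvoretzky--Kiefer--Wolfowitz inequality, $\norm{\hat{\gamma}_j - \bgamma}_\infty \le \eps_j$ with $\eps_j = O(\sqrt{\log(1/\delta)/(2^{j-1}B)})$, on a high-probability event secured by a union bound over the $O(\log T)$ blocks. Because $B = \sqrt T \log T \log(\log(T)/\delta)$, already $\eps_1 = \tilde{O}(T^{-1/4})$, which is precisely the origin of the $T^{-1/4}$ factor in the final term. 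Converting this uniform error into a value error, the per-episode gap from running with $\hat{\gamma}_j$ instead of $\bgamma$, after truncating the discount at some level $L$, is at most $L\,\eps_j + \Gamma(L+1)$.

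Next I would re-run the core recurrence of Theorem~\ref{thm:main-regret-bound} with the estimated coefficients. The crucial point is that the multiplicative factor in front of $\tilde{\delta}_{k,h+1}$ becomes $\tfrac{\hat{\gamma}_j(h+1)}{\hat{\gamma}_j(h)}(1 + \cdots)$, differing from the true factor by a relative amount controlled by $\eps_j/(\gamma(k) + k^{\beta}\Gamma(k))$ at level $k$. Unrolling the recurrence from $h$ down to $1$ multiplies these per-level factors, and bounding the product via $\prod_k (1 + x_k) \le \exp(\sum_k x_k)$ yields exactly the amplification factor $g(h) = \exp\{O(\sum_{k=2}^h T^{-1/4}/(\gamma(k) + k^{\beta}\Gamma(k)))\}$. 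I expect this compounding to be the main obstacle: one must verify that the coefficient perturbations are summable so that $g(h)$ stays a benign, polynomially bounded multiplier rather than exploding, which requires the doubling schedule to keep $\eps_j$ small relative to $\gamma(k) + k^{\beta}\Gamma(k)$ throughout the effective horizon $H^\star = N(\Delta)$.

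Finally I would assemble the pieces on the common high-probability event. Summing the UCB-VI estimation and exploration terms of Theorem~\ref{thm:main-regret-bound}, now carrying the $g(h)$ multiplier together with the $T^{-1/4}$ coefficient error, produces the last displayed term with its $\tilde{O}(\sqrt{SATH^\star})$ factor; the finite-horizon truncation at $H^\star$ contributes the additive $\Gamma(H^\star)T$. For the estimation mismatch, multiplying the per-episode gap $L\,\eps_j + \Gamma(L+1)$ by the block length $2^j B$ and summing over $j$ gives, since the block lengths grow geometrically and the sum is dominated by its largest term, a contribution of order $L\sqrt T$ (up to the logarithmic factors carried by $B$ and $\eps_j$, matching $2L\log(T)\sqrt T$) plus $T\,\Gamma(L+1)$; optimizing over the truncation level $L$ then yields the $\min_{L\in[T]}(T\Gamma(L+1) + 2L\log(T)\sqrt T)$ term. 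Collecting all contributions completes the bound.
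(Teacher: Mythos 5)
Your overall architecture matches the paper's proof: block decomposition with doubling lengths, DKW plus a union bound over the $O(\log T)$ blocks to get $\norm{\hat{\gamma}_j - \bgamma}_\infty \le \eps_j$, conversion of the estimation error into a value gap of the form $L\eps_j + \Gamma(L+1)$, re-running the recurrence of Theorem~\ref{thm:main-regret-bound} with estimated coefficients and bounding the compounded perturbation by $\prod_k(1+x_k)\le \exp(\sum_k x_k)$ to produce $g(h)$, and finally summing $\eps_j\abs{B_j}\propto\sqrt{\abs{B_j}}$ over blocks and optimizing over $L$. All of that is right and is exactly what the paper does.

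There is, however, one genuine gap: you control only $\norm{\hat{\gamma}_j - \bgamma}_\infty$, but the regret bound you are invoking from Theorem~\ref{thm:main-regret-bound} depends on the \emph{tail sums} $\hat{\Gamma}_j(h) = \sum_{u\ge h}\hat{\gamma}_j(u)$ in several essential places: the leading term $\hat{\Delta}_j \abs{B_j}/\hat{\gamma}_j(H^\star+1)$ with $\hat{\Delta}_j=\hat{\Gamma}_j(H^\star+1)$, the bonus terms $\hat{\Gamma}_j(h+1)/\hat{\gamma}_j(h)$, the recurrence coefficients $1+\hat{\gamma}_j(h+1)/((h+1)^\beta\hat{\Gamma}_j(h+1))$ entering $\hat{t}_j$ and hence $g(h)$, and the extra $\hat{\Gamma}_j(L+1)$ term that appears when you upper-bound $V^{\pi_k}(x_{k,1};\hbgamma_j)$ by $V^{\pi_k}(x_{k,1};\bgamma)$. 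The naive bound $\abs{\hat{\Gamma}_j(h)-\Gamma(h)}\le\sum_{u\ge h}\abs{\hat{\gamma}_j(u)-\gamma(u)}$ obtained from the sup-norm guarantee grows linearly in the support size and destroys sublinearity; your proposal silently assumes the tail-sum error is comparable to $\eps_j$ without justifying it. The paper closes this hole with a separate uniform-convergence argument (Lemma~\ref{lem:bound-Gamma}): it writes $\hat{\Gamma}_n(h)$ as an empirical mean of the functions $f_h(x)=\max\{0,x-h\}$, shows this class has Natarajan dimension $1$, and invokes the multiclass fundamental theorem to get $\max_{h\in[D]}\abs{\hat{\Gamma}_n(h)-\Gamma(h)}\le\sqrt{(\log D+\log(1/\delta))/n}$, i.e.\ only a $\sqrt{\log}$ degradation over the pointwise rate. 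Without this (or an equivalent) concentration result for the tail sums, the assembly step in your last paragraph does not go through.
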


\begin{figure*}[!t]
\centering
\begin{subfigure}[b]{0.3\textwidth}
\centering
  \includegraphics[width=\linewidth]{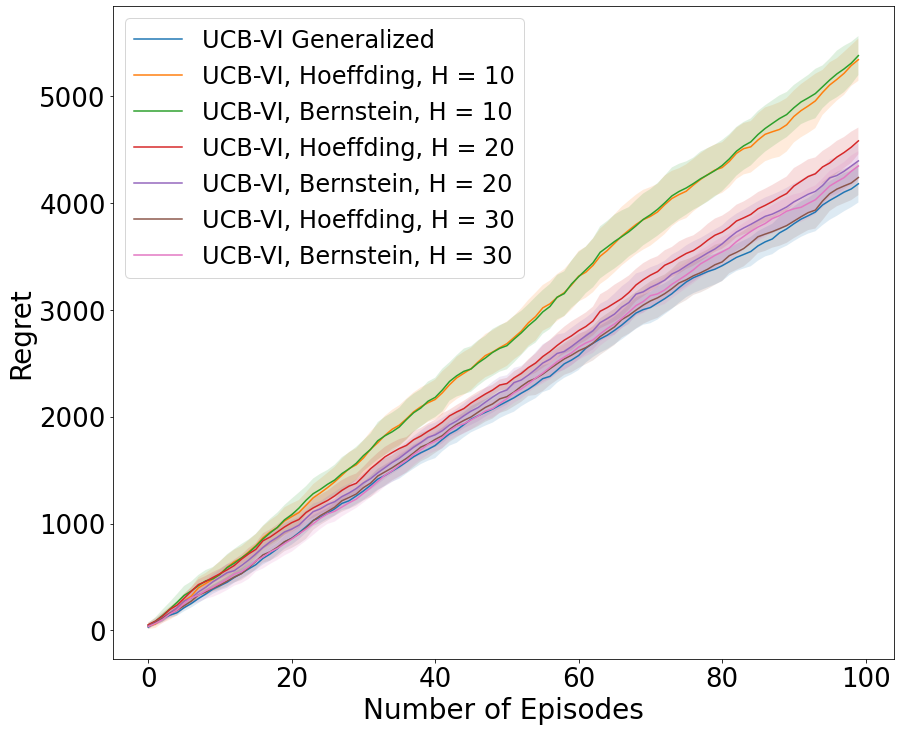}
  \caption{Geometric, $\gamma=0.9$}\label{fig:geom-p-09}
\end{subfigure}\hfill
\begin{subfigure}[b]{0.3\textwidth}
\centering
  \includegraphics[width=\linewidth]{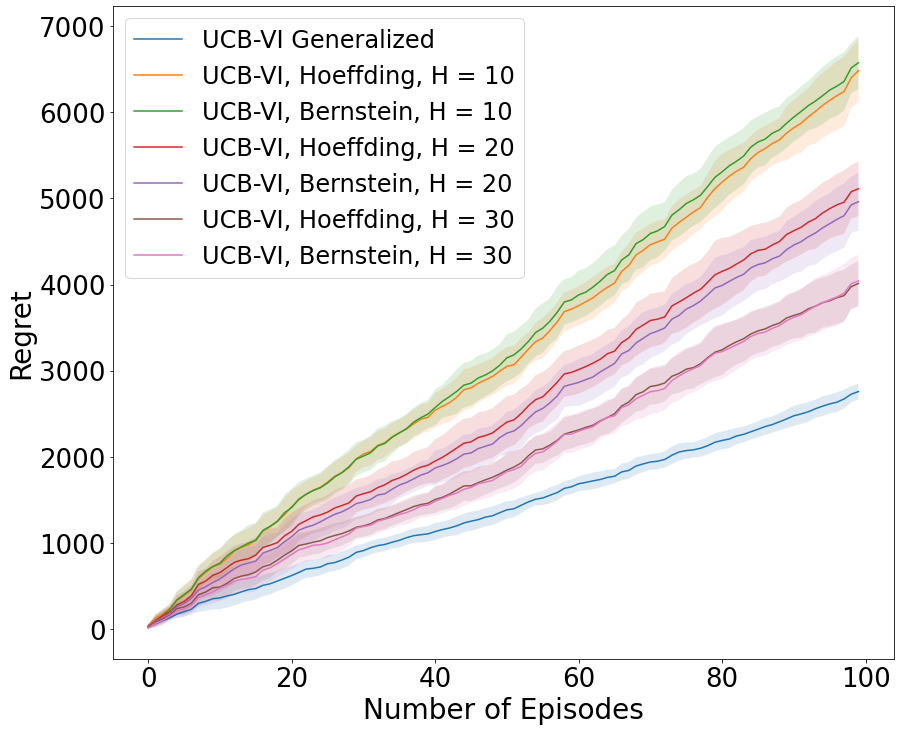}
  \caption{Geometric, $\gamma=0.95$}\label{fig:geom-p-095}
\end{subfigure}\hfill
\begin{subfigure}[b]{0.3\textwidth}%
\centering
  \includegraphics[width=\linewidth]{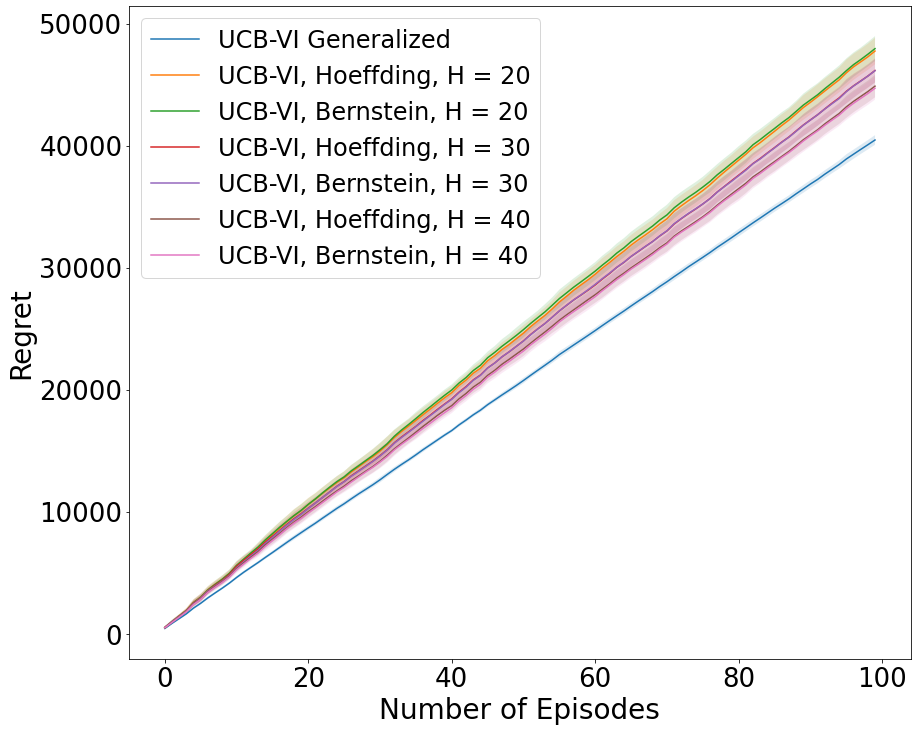}
  \caption{Geometric, $\gamma=0.975$}\label{fig:geom-p-099}
\end{subfigure}
\hfill
\begin{subfigure}[b]{0.3\textwidth}
\centering
  \includegraphics[width=\linewidth]{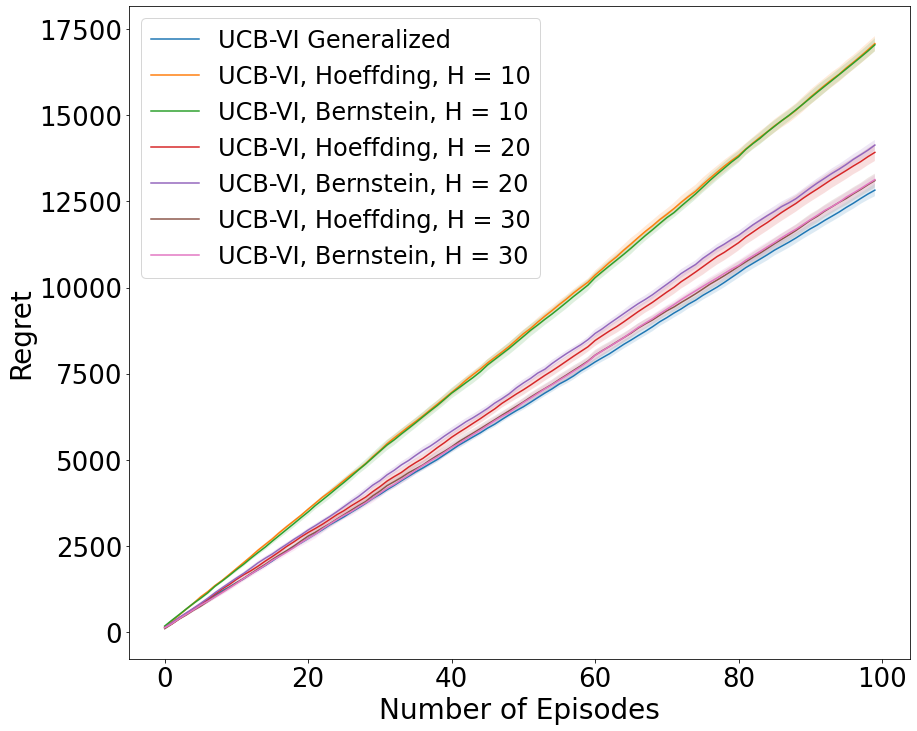}
  \caption{Polynomial, $p=1.4$}\label{fig:poly-1p4}
\end{subfigure}\hfill
\begin{subfigure}[b]{0.3\textwidth}
\centering
  \includegraphics[width=\linewidth]{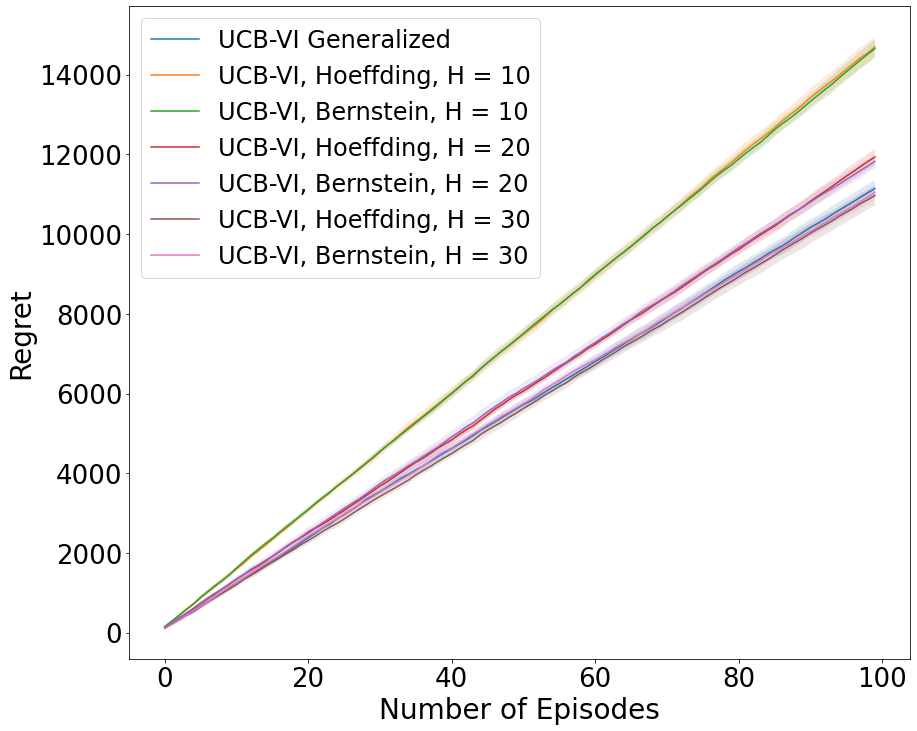}
  \caption{Polynomial, $p=1.6$}\label{fig:poly-1p6}
\end{subfigure}\hfill
\begin{subfigure}[b]{0.3\textwidth}%
\centering
  \includegraphics[width=\linewidth]{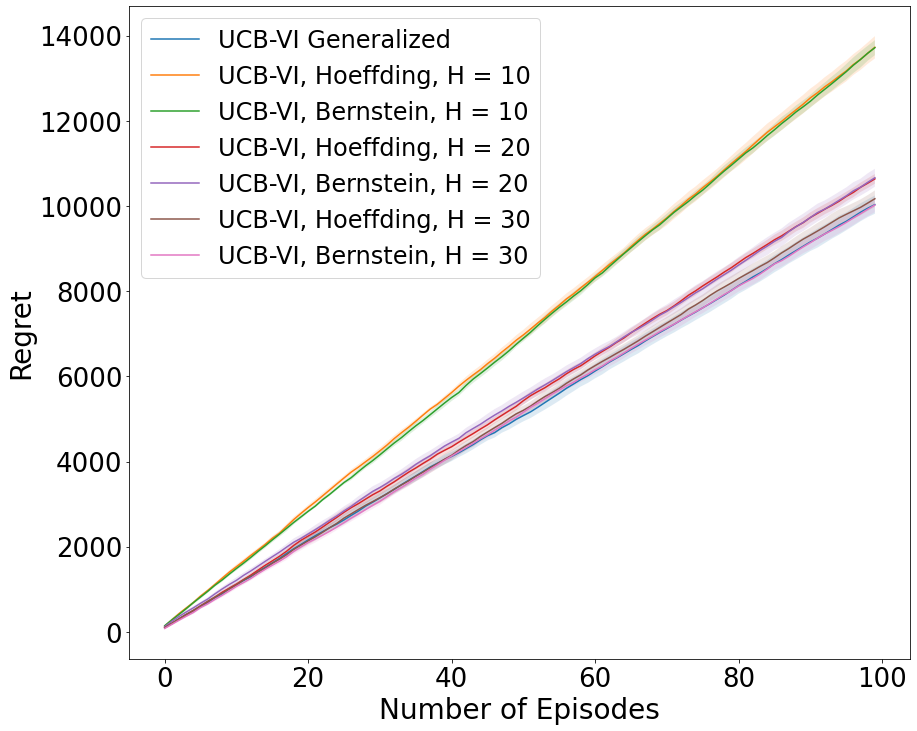}
  \caption{Polynomial, $p=2.0$}\label{fig:poly-2}
\end{subfigure}
\hfill
\begin{subfigure}[b]{0.3\textwidth}
\centering
  \includegraphics[width=\linewidth]{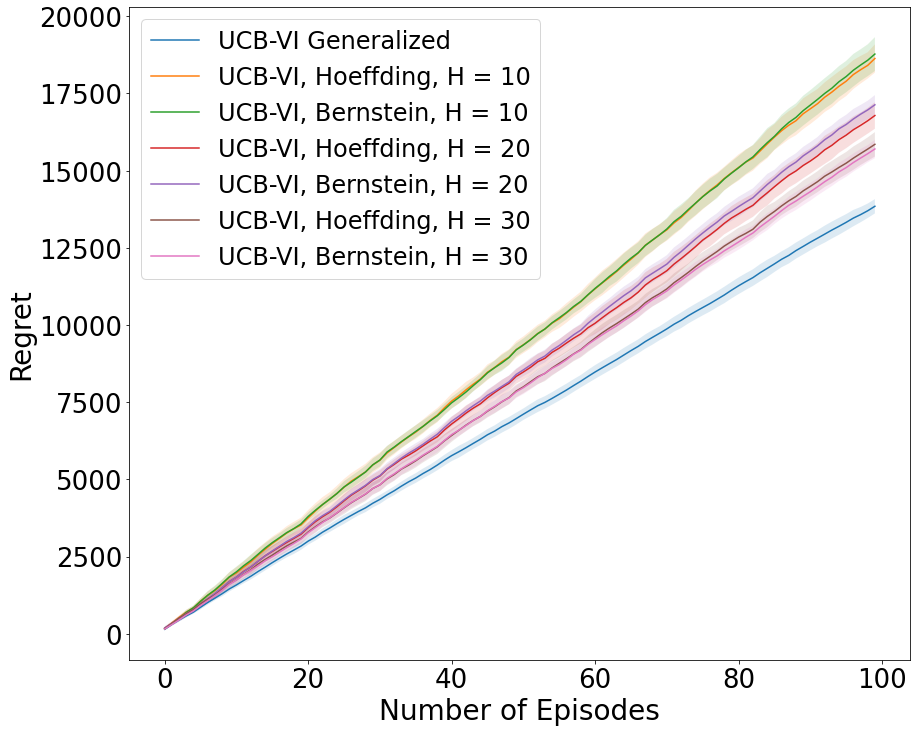}
  \caption{Quasi-Hyperbolic, $\beta=0.7$}\label{fig:qh-0p7}
\end{subfigure}\hfill
\begin{subfigure}[b]{0.3\textwidth}
\centering
  \includegraphics[width=\linewidth]{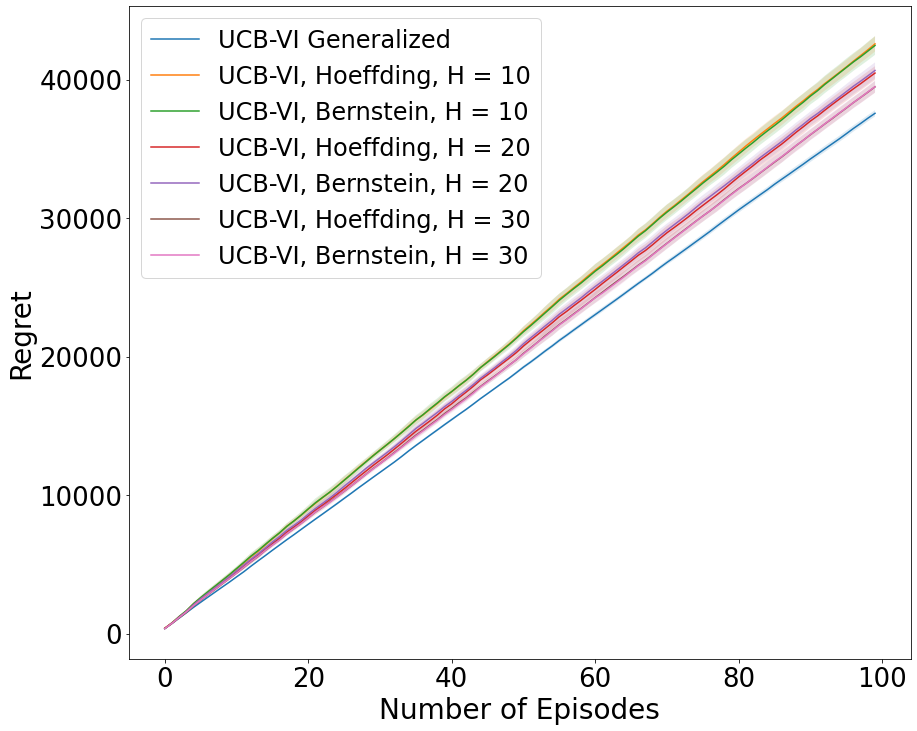}
  \caption{Quasi-Hyperbolic, $\beta=0.8$}\label{fig:qh-0p8}
\end{subfigure}\hfill
\begin{subfigure}[b]{0.3\textwidth}%
\centering
  \includegraphics[width=\linewidth]{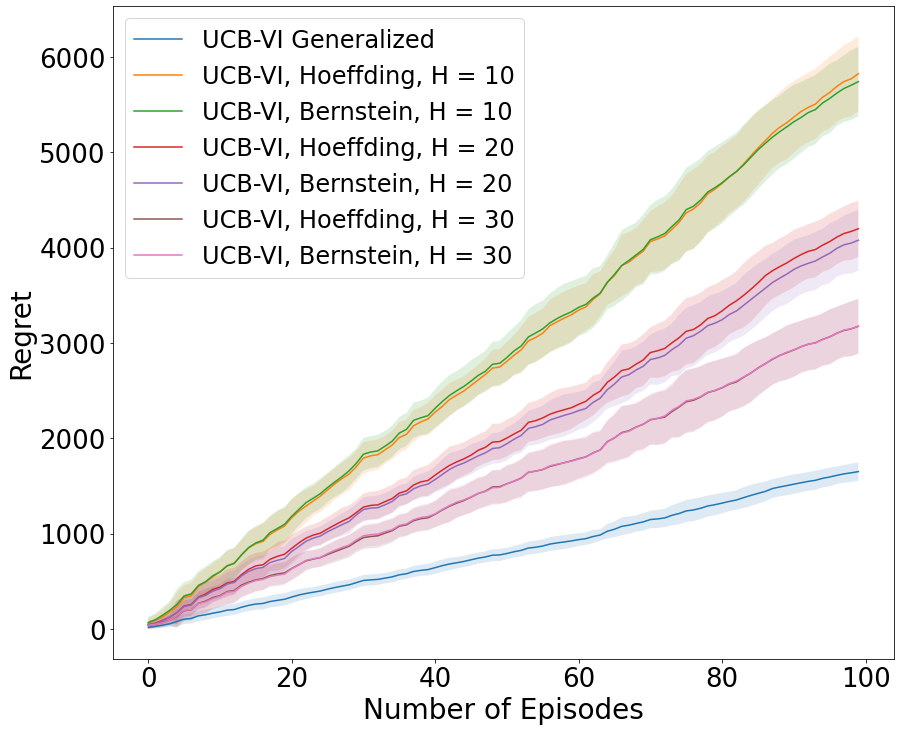}
  \caption{Quasi-Hyperbolic, $\beta=0.9$}\label{fig:qh-09}
\end{subfigure}
\caption{Comparison of our algorithm with different variants of UCB-VI on the Taxi environment~\cite{Dietterich00}. The regret is measured over $100$ episodes, and the length of each episode is drawn independently from a given distribution. Each plot shows average regret and standard error from $10$ trials. \label{fig:taxiv3} }
\end{figure*}

Our proof relies on bounding the estimation error of $\hat{\gamma}$ and $\hat{\Gamma}$. We can use the classical DKW inequality~\cite{DKW56} to bound the maximum deviation between empirical CDF ($\widehat{\mathtt{P}}_{\mathtt{H}}(\cdot)$) and true CDF ($\ph$). Through a union bound over the $\log(T)$ blocks, this immediately provides a bound between $\norm{\hat{\gamma}_j - \gamma_j}_\infty$ for all $j \in [\log(T/B)]$. However, we also need to  bound the distance between $\hat{\Gamma}_j(\cdot)$ and $\Gamma(\cdot)$ for all $j$ (defined in \eqref{eq:defn-Gamma}). A naive application of DKW inequality results in an additive bound between $\hat{\Gamma}_j(h)$ and $\Gamma(h)$ that grows at a rate of $h$. This is insufficient for our case to get a sublinear regret bound. 
However, we show that we can use the multiclass fundamental theorem~\cite{SSS14} to derive an error bound that grows at a rate of $\sqrt{\log h}$ and this is sufficient for our proof.

The main challenge in the proof of theorem~\ref{thm:bound-alg-est} is controlling the growth of the term $t(h) / \gamma(h)$. Notice that this term is a product of $h$ terms of the form $1 + \frac{\gamma(k)}{k^\beta \Gamma(k)}$, so any error in estimating $\gamma$ could blow up the product by a factor of $h$. We could show that the regret is multiplied by an additional function $g(h)$ which is parameterized by $\beta$.
We next instantiate theorem~\ref{thm:bound-alg-est} for different discount factors and show that we can obtain regret bounds similar to corollary~\ref{cor:poly-decay}, and \ref{cor:exp-decay}  up to logarithmic factors.

\begin{corollary}\label{cor:poly-decay-est}
Consider the discount factor $\gamma(h) = h^{-p}$ for $p \ge 2$. Then the regret of algorithm~\ref{alg:estimate-gamma} is 
$$
\reg(T) \le \left\{ \begin{array}{cc}
    \tilde{O}\left(\sqrt{SA} T^{\frac{p+1}{2p}} \right) & \textrm{ if } T \ge O\left( (S^{3/2} A^{1/2})^p\right) \\
    \tilde{O}\left(S^2 A T^{\frac{1}{2-1/p}} \right) &  \textrm{ if } T \le O\left( (S^{3/2} A^{1/2})^p\right) 
\end{array}\right.
$$
\end{corollary}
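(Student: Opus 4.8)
The plan is to instantiate Theorem~\ref{thm:bound-alg-est} with $\gamma(h) = h^{-p}$ and $\beta = p-1$, exactly the substitution used in the known-$\gamma$ analysis of Corollary~\ref{cor:poly-decay}, and then carefully track the new estimation factor $g(h)$. First I would record the elementary estimates the argument rests on. Comparing with $\int x^{-p}\,dx$ gives $\Gamma(h) = \sum_{j\ge h} j^{-p} = \Theta\!\big(h^{1-p}/(p-1)\big)$, and in particular $h^{\beta}\Gamma(h) = h^{p-1}\Gamma(h) \ge \tfrac{1}{p-1}$ for every $h$. Substituting into the definition of $t$ gives
$$
\frac{t(h)}{\gamma(h)} = \prod_{j=2}^{h}\Big(1 + \frac{\gamma(j)}{j^{\beta}\Gamma(j)}\Big) = \prod_{j=2}^{h}\big(1 + \Theta((p-1)\,j^{-p})\big),
$$
and since $\sum_j j^{-p} < \infty$ for $p \ge 2$ this product converges, so $t(h)/\gamma(h) = \Theta(1)$ uniformly in $h$ (the hidden constant depending only on $p$).

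Next I would bound the amplification factor $g(h)$. Using $\gamma(k) + k^{\beta}\Gamma(k) \ge k^{\beta}\Gamma(k) \ge \tfrac{1}{p-1}$, the exponent in $g$ obeys
$$
\sum_{k=2}^{h} \frac{T^{-1/4}}{\gamma(k) + k^{\beta}\Gamma(k)} \le (p-1)\,h\,T^{-1/4},
$$
so that $g(h) \le \exp\!\big(O((p-1)\,h\,T^{-1/4})\big)$, whence $g(h) = \Theta(1)$ for all $h \le H^\star$ provided $H^\star = O(T^{1/4})$. This is the crucial new constraint relative to the known-$\gamma$ case: whereas Corollary~\ref{cor:poly-decay} could take $N(\Delta) = \Theta(T^{1/(2p-1)})$, the estimation error forces the effective horizon to be capped at order $T^{1/4}$, because a larger horizon would let the $O(T^{-1/4})$ error compound multiplicatively through the $h$-fold product defining $t$.

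With these two facts I would substitute into the three summands of Theorem~\ref{thm:bound-alg-est}. In the bias term $\min_{L}\big(T\,\Gamma(L+1) + 2L\log(T)\sqrt{T}\big)$ the two pieces behave as $T/((p-1)L^{p-1})$ and $2L\sqrt{T}\log T$; balancing them gives $L^{p} = \Theta(\sqrt{T})$, i.e. $L = \Theta(T^{1/(2p)})$, and a value $\tilde{O}\big(T^{(p+1)/(2p)}\big)$. Since $(p+1)/(2p) > p/(2p-1)$ for every $p>1$, this estimation bias strictly exceeds the known-$\gamma$ rate $T^{p/(2p-1)}$ and becomes the bottleneck for large $T$. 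In the statistical term the factors $\Gamma(h+1)$ cancel and $\tfrac{t(h)}{\gamma(h)}g(h) = \Theta(1)$, so, using $T^{-1/4}\sqrt{T} = T^{1/4}$, it reduces to $\tilde{O}\big(\sqrt{SA}\,T^{1/4}\sqrt{H^\star}\big)$ together with the usual UCB-VI lower-order contribution, which scales as $S^2A$ times a polynomial in $H^\star$. The truncation term is $\Gamma(H^\star)T = \Theta\big(T\,(H^\star)^{1-p}/(p-1)\big)$.

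Finally I would set $H^\star = \Theta(T^{1/4})$ (the largest horizon compatible with $g = \Theta(1)$) and compare the resulting terms. For $T \ge O\big((S^{3/2}A^{1/2})^p\big)$ the sample size is large enough that the statistical and truncation terms are dominated, the estimation-bias term sets the $T$-rate and the exploration term contributes the $\sqrt{SA}$ factor, yielding $\tilde{O}\big(\sqrt{SA}\,T^{(p+1)/(2p)}\big)$. For $T \le O\big((S^{3/2}A^{1/2})^p\big)$ the $S,A$-heavy lower-order UCB-VI term overtakes the leading ones and, with the matching choice of $H^\star$, evaluates to $\tilde{O}\big(S^2A\,T^{p/(2p-1)}\big)$, reproducing the known-$\gamma$ exponent but with a worse $S,A$ dependence; the threshold $(S^{3/2}A^{1/2})^p$ is precisely the crossover point of these two groups of terms. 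I expect the main obstacle to be the control of $g(h)$ and the tension it creates: keeping the compounded $O(T^{-1/4})$ estimation error bounded caps $H^\star$ at order $T^{1/4}$, which simultaneously degrades the achievable rate to $T^{(p+1)/(2p)}$ and drives the two-regime split.
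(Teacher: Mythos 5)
Your overall strategy matches the paper's: substitute $\beta=p-1$, show $t(h)/\gamma(h)=\Theta(1)$, bound $g(h)\le \exp\{O(h\,T^{-1/4})\}$, balance the bias term $\min_L(T\Gamma(L+1)+2L\log(T)\sqrt{T})$ at $L=\Theta(T^{1/(2p)})$ to get the dominant $\tilde{O}(T^{(p+1)/(2p)})$ rate, and split regimes at $T\approx (S^{3/2}A^{1/2})^p$. However, there is a genuine gap in your choice of $H^\star$. You set $H^\star=\Theta(T^{1/4})$, ``the largest horizon compatible with $g=\Theta(1)$,'' but the correct choice (the paper's) is $H^\star=T^{1/(2p)}$: one only needs $H^\star$ large enough that the truncation error $\Gamma(H^\star)T\approx T(H^\star)^{1-p}/(p-1)$ falls below the unavoidable estimation bias $T^{(p+1)/(2p)}$, which gives exactly $H^\star\ge T^{1/(2p)}$, and any larger choice only inflates the $H^\star$-increasing terms. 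Concretely, with your $H^\star=T^{1/4}$ the exploration term $\sqrt{SATH^\star}$ becomes $\sqrt{SA}\,T^{5/8}$, which exceeds the target $\sqrt{SA}\,T^{(p+1)/(2p)}=\sqrt{SA}\,T^{1/2+1/(2p)}$ as soon as $p>4$; and the second-order term $S^2A(H^\star)^{\beta}=S^2A\,T^{(p-1)/4}$, which you dismiss as ``a polynomial in $H^\star$'' of lower order, is in fact superlinear in $T$ for $p\ge 5$ and already dwarfs $S^2A\,T^{1/(2-1/p)}$ well before that. So your argument establishes the corollary only for roughly $2\le p\le 4$. (Your reduction of the statistical term to $\sqrt{SA}\,T^{1/4}\sqrt{H^\star}$ also relies on the informal statement of Theorem~\ref{thm:bound-alg-est}, which drops the ``$1+$'' in the factor $\Gamma(h+1)\bigl(1+O(T^{-1/4})/\Gamma(h+1)\bigr)$; the proved version keeps the full $\Theta(1)\cdot\sqrt{SATH^\star}$.)

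A secondary point of attribution: the rate degradation to $T^{(p+1)/(2p)}$ is caused entirely by the $\min_L$ bias term from estimating $\gamma$ over a window of length $L$ (balanced at $L=T^{1/(2p)}$), not by the $T^{1/4}$ cap on $H^\star$ coming from $g(h)$. For $p\ge 2$ the correct horizon $T^{1/(2p)}$ sits comfortably inside that cap, so $g(h)=O(1)$ comes for free and creates no tension at all; your narrative that the cap ``simultaneously degrades the achievable rate and drives the two-regime split'' misidentifies the bottleneck. With $H^\star$ corrected to $T^{1/(2p)}$ the rest of your computation goes through and reproduces the paper's bound.
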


\begin{corollary}\label{cor:geom-decay-est}
Consider the geometric discount factor $\gamma(h) = \gamma^{h-1}$ for $\gamma \in [0,1)$ and suppose $\frac{T}{\log^3 T} \ge \frac{S^3 A}{(1-\gamma)^4}$. Then algorithm \ref{alg:estimate-gamma} has regret at most
$
\tilde{O}\left({\sqrt{SAT}}/{(1-\gamma)^{1.5}} \right).
$
\end{corollary}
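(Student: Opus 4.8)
The plan is to instantiate Theorem~\ref{thm:bound-alg-est} for the geometric discount $\gamma(h) = \gamma^{h-1}$, reusing the parameter choices that drive Corollary~\ref{cor:exp-decay}: take $\beta = 3/2$ and the same effective horizon $H^\star = N(\Delta) = \Theta\!\left(\frac{\log T}{1-\gamma}\right)$, chosen so that $\gamma^{H^\star} \le 1/T$. The geometric structure makes every discount-dependent quantity explicit: $\Gamma(h) = \frac{\gamma^{h-1}}{1-\gamma}$, hence $\frac{\gamma(j)}{j^\beta \Gamma(j)} = \frac{1-\gamma}{j^{3/2}}$, and therefore
\[
\frac{t(h)}{\gamma(h)} = \prod_{j=2}^{h}\left(1 + \frac{1-\gamma}{j^{3/2}}\right) \le \exp\left((1-\gamma)\sum_{j\ge 2} j^{-3/2}\right) = O(1).
\]
This constant-boundedness is exactly what produces the clean $(1-\gamma)$-dependence in the known-discount case, so with these identities the proof reduces to bounding the three summands of Theorem~\ref{thm:bound-alg-est} separately and showing each is at most $\tilde O\!\left(\sqrt{SAT}/(1-\gamma)^{1.5}\right)$.

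The first two summands are routine. For the truncation term I would choose $L = \Theta\!\left(\frac{\log T}{1-\gamma}\right)$ so that $T\,\Gamma(L+1) = T\frac{\gamma^L}{1-\gamma} = \tilde O(\sqrt T)$ is balanced against $2L\log(T)\sqrt T = \tilde O\!\left(\sqrt T/(1-\gamma)\right)$, giving $\min_{L}\left(T\Gamma(L+1) + 2L\log(T)\sqrt T\right) = \tilde O\!\left(\sqrt T/(1-\gamma)\right)$. The bias term $\Gamma(H^\star)\,T$ is controlled directly by the choice of $H^\star$: since $\gamma^{H^\star}\le 1/T$ we get $\Gamma(H^\star)\,T = O\!\left(\frac{1}{1-\gamma}\right)$. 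Both are comfortably dominated by the target bound.

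The crux is the main (third) summand, which after the $\Gamma(h+1)$ factors cancel equals $\max_{h\in[H^\star]} \frac{t(h)}{\gamma(h)}\,g(h)\,O(T^{-1/4})\,\tilde O(\sqrt{SATH^\star})$. Using $\frac{t(h)}{\gamma(h)} = O(1)$ and $H^\star = \tilde O(1/(1-\gamma))$, everything hinges on showing the estimation-blowup factor $g(h) = \exp\left\{O\left(\sum_{k=2}^{h}\frac{T^{-1/4}}{\gamma(k)+k^{\beta}\Gamma(k)}\right)\right\}$ stays $\tilde O(1)$. This is the main obstacle: for geometric discounting $\gamma(k)+k^\beta\Gamma(k) = \gamma^{k-1}\left(1 + \frac{k^{3/2}}{1-\gamma}\right)$ decays geometrically, so in the tail (where $\gamma(k)\approx 1/T$) the relative estimation error is large and a naive bound makes the exponent, and hence $g(h)$, explode. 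Controlling it requires precisely the two ingredients flagged in the proof sketch of Theorem~\ref{thm:bound-alg-est}: the uniform DKW bound $\norm{\hat{\gamma}_j - \gamma}_\infty = \tilde O(T^{-1/4})$ over the $O(\log T)$ doubling blocks, together with the refined $\sqrt{\log h}$-growth bound on $\abs{\hat{\Gamma}_j - \Gamma}$ from the multiclass fundamental theorem (the naive $O(h)$ DKW bound is insufficient). Combined with the effective truncation at $H^\star$, beyond which $\gamma$ contributes negligibly, these keep the exponent bounded and give $g(h) = \tilde O(1)$.

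Collecting the pieces, the main summand reduces to $\tilde O(\sqrt{SATH^\star}) = \tilde O\!\left(\sqrt{SAT}/\sqrt{1-\gamma}\right)$ up to the residual $O(1)$ and $\tilde O(1)$ factors, and taking the worst of the three summands yields the advertised $\tilde O\!\left(\sqrt{SAT}/(1-\gamma)^{1.5}\right)$, matching Corollary~\ref{cor:exp-decay} up to logarithmic factors. The hypothesis $\frac{T}{\log^3 T} \ge \frac{S^3A}{(1-\gamma)^4}$ enters exactly here: it guarantees the first block length $B = \sqrt T\log T\log(\log(T)/\delta)$ is large enough that the per-block estimation error is $\tilde O(T^{-1/4})$ and that the burn-in terms are absorbed, which is what makes $g(h) = \tilde O(1)$ and the whole bound go through. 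I expect the hardest part of the write-up to be this control of $g(h)$, i.e. the interplay between the geometric tail (tiny $\gamma(k)$) and the accumulated multiplicative estimation error in the backward-induction product.
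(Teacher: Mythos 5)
Your overall strategy matches the paper's: instantiate Theorem~\ref{thm:bound-alg-est} with $\beta=3/2$, reuse $\frac{t(h)}{\gamma(h)}\le e^{1-\gamma}$ from Corollary~\ref{cor:exp-decay}, and identify the control of $g(h)$ as the crux. But there is a genuine gap precisely at that crux: your choice $H^\star = \Theta\!\left(\frac{\log T}{1-\gamma}\right)$ with $\gamma^{H^\star}\le 1/T$ makes $g(h)$ blow up. For geometric discounting the exponent of $g(h)$ contains $\sum_{k=2}^{h}\frac{T^{-1/4}}{\gamma^{k-1}\left(1+k^{3/2}/(1-\gamma)\right)}$; taking just the $k=h=H^\star$ term with $\gamma^{H^\star-1}\approx 1/T$ already gives a contribution of order $T^{3/4}(1-\gamma)^{5/2}/\log^{3/2}T$, so $g(H^\star)=\exp\{\Omega(\mathrm{poly}(T))\}$. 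The DKW bound and the Natarajan-dimension argument you invoke cannot rescue this: they are exactly what produce the $T^{-1/4}$ in the numerator of $g$'s exponent (via the block length $B=\sqrt{T}\log T\log(\log(T)/\delta)$), and they say nothing about the $1/\gamma(k)$ factor in the denominator, which is the source of the blowup. Asserting that these tools ``keep the exponent bounded'' does not close the gap.

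The paper's resolution is different and essential: it takes the \emph{shorter} horizon $H^\star=\frac{\log T}{2\log(1/\gamma)}$, i.e.\ roughly half of yours, so that $\gamma^{h-1}\ge \Omega(T^{-1/2})$ for all $h\in[H^\star]$ and the exponent of $g(h)$ stays $O(1)$ (the paper bounds $\max_{h\in[H^\star]}h^{-1/2}/\gamma^{h-1}$ explicitly and cancels it against the $T^{-1/4}$). The price is a larger truncation bias $\Gamma(H^\star+1)T=O(\sqrt{T}/(1-\gamma))$ instead of your $O(1/(1-\gamma))$, but that is still dominated by the target $\tilde{O}(\sqrt{SAT}/(1-\gamma)^{1.5})$. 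So the trade-off runs in the opposite direction from the one you chose: you must sacrifice truncation accuracy to keep the multiplicative estimation error in the backward-induction product under control. A secondary, more minor issue: in your accounting of the main summand the factor $\Gamma(h+1)/\gamma(h)\approx 1/(1-\gamma)$ should \emph{not} cancel --- it is exactly what combines with $\sqrt{H^\star}=\tilde{O}(1/\sqrt{1-\gamma})$ to produce the $(1-\gamma)^{-1.5}$ in the final bound, and your statement that the main term is $\tilde{O}(\sqrt{SAT}/\sqrt{1-\gamma})$ undercounts it (harmlessly for the upper bound, but it obscures where the exponent $1.5$ comes from).
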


For the polynomial discounting we get a regret of the order of $T^{(p+1)/2p}$ which is worse than the regret bound of theorem~\ref{thm:main-regret-bound} by a factor of $T^{1/2p}$. However, the difference goes to zero as $p$ increases and approaches the same limit of $\tilde{O}(\sqrt{T})$. On the other hand, for geometric discounting we recover the same regret as corollary \ref{cor:exp-decay}. Interestingly, \citet{HZG21} obtained a similar bound on regret for the non-episodic setting where the learner maximizes her long-term geometrically distributed reward.

\textbf{Unknown $N(\Delta)$}: Note that algorithm~\ref{alg:estimate-gamma} takes as input the optimal value of $N(\Delta)$ or $H^\star$. However, this problem can be handled through a direct application of model selection algorithms in online learning~\cite{CDDG+21}. Let $\reg(H^\star)$ be the regret when algorithm~\ref{alg:estimate-gamma} is run with true $H^\star$. We now instantiate algorithm~\ref{alg:estimate-gamma} for different choices of $H^\star$ and perform model selection over them. In particular, we can consider $H^\star = 2, 2^2, \ldots, 2^{O(\log T)}$ as it is sufficient to consider $H^\star = O(T)$. Moreover, given true $H^\star$ there exists $\tilde{H} \le 2H^\star$ for which the regret is increased by at most a constant. This step requires bounding $\frac{t(H^\star)}{\gamma(H^\star)}/ \frac{t(\tilde{H)}}{\gamma(\tilde{H})}$ and is constant for the discounting factors considered in the paper. We now apply algorithm~1 from \cite{CDDG+21} to the collection of $O(\log T)$ models and obtain a regret bound of at most $O\left(\sqrt{\log T} \reg(\tilde{H})\right) = \tilde{O}(\reg(H^\star))$.



\section{Experiments}
We evaluated the performance of our algorithm on the Taxi environment, a $5\times 5$ grid-world environment introduced by \cite{Dietterich00}. The details of this environment is provided in the appendix, since the exact details are not too important for understanding the experimental results.
We considered $100$ episodes and each episode  length  was generated uniformly at random from the following distributions. \footnote{Here $\gamma(h)$ refers to probability that the episode lengths exceeds $h$ i.e. $\gamma(h) = \Pr(H \ge h)$.}
\begin{enumerate}
    \item Geometric discounting $\gamma(h) = \gamma^{h-1}$.
    \item Polynomial discounting $\gamma(h) = h^{-p}$.
    \item Quasi-Hyperbolic discounting $\gamma(h) = \beta^{\one \set{h > 1}} \gamma^{h-1}$
\end{enumerate}
Figure~\ref{fig:taxiv3} shows some representative parameters for three different types of discounting. For the geometric discounting, we show $\gamma = 0.9, 0.95$ and $0.975$. For the polynomial discounting we generated the horizon lengths from a polynomial with $p \in \set{1.4, 1.6, 2.0}$ and  added an offset of $20$. Finally, for the Quasi-hyperbolic discounting, we fixed $\gamma$ at $0.95$ and considered three  values of $\beta$: $0.7, 0.8$, and $0.9$.

We compared our algorithm~\eqref{alg:vibi} with two variants 
of UCB-VI~\cite{AOM17} -- (a) UCB-VI-Hoeffding computes bonus terms using Chernoff-Hoeffding inequality, and (b) UCB-VI-Bernstein computes bonus terms using Bernstein-Freedman inequality. It is known that when the horizon length is fixed and known, UCB-VI-Bernstein achieves minimax optimal regret bounds. We implemented two versions of UCB-VI with three different assumed horizon lengths.

Figure~\ref{fig:taxiv3} shows that, for several situations, our algorithm  strict improves in regret compared to all the other variants of UCB-VI. These include Geometric discounting ($\gamma = 0.95$ and $0.975$) and Quasi-Hyperbolic discounting (all possible choices of $\beta$). For the other scenarios (e.g. polynomial discounting), our algorithm performs as well as the best version UCB-VI.
Figure~\ref{fig:taxiv3} also highlights the importance of choosing not only the  right horizon length but also the correct update equation in backward induction. Consider for example, figure~\ref{fig:geom-p-095} for the geometric discounting with $\gamma = 0.95$. Here the expected horizon length is $\frac{1}{1-\gamma} = 20$. However, different UCB-VI variants (horizon lengths $10,20,30$ and Bernstein and Hoeffding variants) perform worse. Our algorithm benefits by choosing the right effective horizon length, and also the correct update equation~\eqref{eq:geometric-q-value}.
\section{Conclusion}\label{sec.conclusion}
In this paper, we have designed reinforcement learning algorithms when the episode lengths are uncertain and drawn from a fixed distribution. Our general learning algorithm (\ref{alg:vibi}) and result (\cref{thm:main-regret-bound}) can be instantiated for different types of distributions to obtain sub-linear regret bounds. 
Some interesting directions of future work include extension of our algorithm to function approximation~\cite{JYWJ20}, changing probability transition function~\cite{JZBJ18}, etc. We are also interested in other models of episode lengths. For example, one can consider a setting where the lengths are adversarially generated but there is a limit on the total amount of change. This is similar to the notion of variation budget~\cite{BGZ14} considered in the literature on non-stationary multi-armed bandits.

\section*{Acknowledgements}
Goran Radanovi\'{c} acknowledges that his research was, in part, funded by the Deutsche Forschungsgemeinschaft (DFG, German Research Foundation) – project number 467367360.

\printbibliography


\clearpage
\appendix 
\onecolumn

\section{Pseudocode of Update-Q-Values}
\begin{algorithm}[!h]
\DontPrintSemicolon
\KwInput{Discount factor $\{\gamma(h)\}_{h=1}^{\infty}$, parameter $\Delta$, episode $k$, and dataset $\mathcal{H}$.}
$N_k(x,a,y) = \sum_{(x',a',y') \in \mathcal{H}} \one\set{x'=x, a'=a, y'=y}$ for all $(x,a,y) \in \calS \times \calA \times \calS$.\\
$N_k(x,a) = \sum_y N_k(x,a,y)$ for all $(x,a) \in \calS \times \calA$.\\
Set $\hat{P}(y|x,a) = \frac{N_k(x,a,y)}{N_k(x,a)}$ for all $(x,a)$ s.t. $N_k(x,a) > 0$. Otherwise, $\hat{P}(y|x,a) = \frac{1}{S}$\\

\tcc{Update all the $Q$ values}
$Q_{k,N(\Delta) + 1}(x,a) \leftarrow \frac{\Delta}{\gamma\left(N(\Delta) +  1\right)}$ for all $x \in S$ and $a \in \calA$.\\
$V_{k, N(\Delta) + 1}(x,a) \leftarrow \frac{\Delta}{\gamma\left( N(\Delta) + 1\right)}$ for all $x \in S$.\\
\For{$h=N(\Delta),\ldots,1$}
{
\tcc{Define $[\hat{P}_k V_{k,h+1}](s,a) = \E_{x\sim \hat{P}_k(\cdot|s,a)} \left[ V_{k,h+1}(x)\right]$ and let $\Gamma(h) = \sum_{j \ge h} \gamma(j)$.}
\begin{align*}
&\ucb_{k,h}(s,a) = \frac{3 \Gamma(h+1)}{\gamma(h)} \frac{\ln(SATN(\Delta) / \delta)}{\sqrt{N_k(s,a)}} \\
&Q_{k,h}(s,a) = \min\left\{Q_{k-1,h}(s,a), r(s,a) \right. \\
&+ \left. \frac{\gamma(h+1)}{\gamma(h)}[\hat{P}_k V_{k,h+1}](s,a) 
+  \ucb_{k,h}(s,a)\right\}\\
&V_{k,h}(s,a) = \max_{a \in \calA} Q_{k,h}(s,a)
\end{align*}

}
\caption{Update-Q-values\label{alg:update-q}}
\end{algorithm}

\section{Additional Experimental Details}
In the Taxi environment~\cite{Dietterich00} environment, there are four randomly chosen designated locations indicated by Red, Green, Blue, and Yellow. The driver (learner) starts at a random square and the passenger is at a random location. The taxi
    drives to the passenger's location, picks up the passenger (from one of the four locations), drives to the
    passenger's destination (another one of the four specified locations), and
    then drops off the passenger. There are six actions and the observations include the location of the taxi and the starting point and the destination of the passenger. Each step incurs a cost of $1$. There is a reward of $20$ for successfully delivering a passenger, and a reward of $-10$ for executing pickup and drop-off actions illegally. We used the Open-AI implementation \texttt{Taxi-v3}~\cite{openai-gym} to interact with the environment.
    
    All experiments were conducted on a computer cluster with machines equipped with2Intel XeonE5-2667 v2 CPUs with 3.3GHz (16 cores) and 50 GB RAM.
\section{Equivalence between $\reg(\bm{\pi}; \bgamma)$ and $\reg(\bm{\pi}; \ph(\cdot))$ }
\subsection{Proof of Lemma~\ref{lem:why-alternative-regret}}

\begin{proof}
Consider any learning algorithm $\bm{\pi} = \{ \pi_k\}_{k \in [T]}$. 
\begin{align*}
\E_{H_k}\left[V^{\pi_k}(x_{k,1}; H_k)\right] &= \sum_{\ell} \ph(\ell) \E\left[V^{\pi_k}(x_{k,1}; \ell) \right] = \E\left[\sum_{\ell} \ph(\ell) \sum_{h=1}^\ell r(\xkh,\akh) \right]\\
&= \E\left[\sum_{h=1}^\infty \ph(H_k \ge h) r(\xkh,\akh) \right] = \E\left[\sum_{h=1}^\infty \gamma(h) r(\xkh,\akh) \right]\\
&= V^{\pi_k}(x_{k,1};\bgamma)
\end{align*}
Therefore, we have the following result.
\begin{equation}\label{eq:reward-equivalence-1}
\sum_{k=1}^T \E_{H_k}\left[V^{\pi_k}(x_{k,1}; H_k)\right] = \sum_{k=1}^T V^{\pi_k}(x_{k,1}; \bgamma)
\end{equation}
 We now upper bound the quantity $\sum_\ell V^\star(x_{k,1};\ell) \ph(\ell)$. Let $\pi^\star_{k,\ell}$ be the policy that maximizes the $\ell$-step value function i.e. $V^\star(x_{k,1};\ell) = V^{\pi^\star_{k,\ell}}(x_{k,1}; \ell)$. Let us also define a new policy $\pi_\infty$ as follows.
\begin{enumerate}
\item At step $h$, draw an index $j \in [h,\infty)$ according to the distribution $\frac{\ph(\cdot)}{\ph(H \ge h)}$.
\item Take action $\pi^\star_{k,j}(x_{k,1}, a_{k,1},\ldots, x_{k,h-1},a_{k,h-1},\xkh)$.
\end{enumerate}
Then the infinite-horizon discounted sum of rewards under the policy $\pi_\infty$ is given as
\begin{align*}
\E_{\pi_\infty}\left[\sum_{h=1}^\infty \gamma(h)r(\xkh,\akh) \right] &= \E\left[\sum_{h=1}^\infty \ph(H \ge h) \E_{\pi_\infty} \left[r(\xkh,\akh) \rvert \calH_h \right] \right]\\
&= \E\left[\sum_{h=1}^\infty \ph(H \ge h) \sum_{j=h}^\infty \frac{\ph(j)}{\ph(H\ge h)} \E_{\akh \sim \pi^\star_{k,j}(\calH_h)} \left[r(\xkh,\akh) \rvert \calH_h \right] \right] \\
&= \E\left[\sum_{h=1}^\infty \sum_{j=h}^\infty {\ph(j)} \E_{\akh \sim \pi^\star_{k,j}(\calH_h)} \left[r(\xkh,\akh)  \rvert \calH_h\right] \right] \\
&= \E\left[\sum_{j=1}^\infty \ph(j) \sum_{h=1}^j \E_{\akh \sim \pi^\star_{k,j}(\calH_h)} \left[r(\xkh,\akh)  \rvert \calH_h \right] \right] \\
&= \E\left[\sum_{j=1}^\infty \ph(j) \E_{\pi^\star_{k,j}}\left[\sum_{h=1}^j r(\xkh,\akh) \right] \right] = \sum_{j} V^\star(x_{k,1};\ell) \ph(j)
\end{align*}
Since $\pi_\infty$ is one candidate policy for maximizing infinite horizon sum of discounted rewards, its value function is bounded by the optimal value function.
\begin{equation}\label{eq:optimal-value-ubd}
\sum_{k=1}^T \sum_\ell \ph(\ell) V^\star(x_{k,1};\ell) = \sum_{k=1}^T V^{\pi_\infty}(x_{k,1};\bgamma) \le \sum_{k=1}^T V^\star(x_{k,1};\bgamma)
\end{equation}
Combining equations \ref{eq:reward-equivalence-1} and \ref{eq:optimal-value-ubd} we get that the regret under the new general discounting environment upper bounds the regret under the original environment i.e. $\reg(\bm{\pi}; \ph(\cdot)) \le \reg(\bm{\pi}; \bgamma)$. 
\end{proof}

\subsection{Proof of Lemma~\ref{lem:why-alternative-regret-part-2}}
\begin{proof}
Given a discount factor $\bgamma$, we define the following distribution over episode lengths.
\begin{align*}
    \ph(H=h) = 
         \gamma(h) - \gamma(h+1)
    \end{align*}
This definition implies that $\Pr(H \ge h) = \gamma(h)$ and under this condition lemma~\ref{lem:why-alternative-regret} already shows that $\sum_{k=1}^T \E_{H_k}\left[ V^{\pi_k}(x_{k,1}; H_k)\right] = \sum_{k=1}^T V^{\pi_k}(x_{k,1}; \bgamma)$. Therefore, it remains to upper bound the quantity $\sum_{k=1}^T V^\star(x_{k,1}; \bgamma)$. Let $\pi^\star = (\pi^\star_1,\pi^\star_2,\ldots)$ be the policy that maximizes long-term discounted sum of rewards with respect to the discount factor $\bgamma$. Note that one can also play the policy $\pi^\star$ for the finite horizon setting.
\begin{align*}
    &\sum_\ell \ph(\ell) V^\star(x_{k,1};\ell) \ge \sum_\ell \ph(\ell) V^{\pi^\star}(x_{k,1};\ell) = \sum_{\ell} \ph(\ell) \E_{\pi^\star} \left[ \sum_{j=1}^\ell r(x_{k,j}, a_{k,j}) \right]\\
    &= \E_{\pi^\star} \left[\sum_{j=1}^\infty \sum_{\ell \ge j} (\gamma(\ell) - \gamma(\ell+1)) r(x_{k,j}, a_{k,j})  \right] \\
    &=\E_{\pi^\star} \left[\sum_{j=1}^\infty \gamma(j) r(x_{k,j}, a_{k,j}) \right] = V^\star(x_{k,1}; \bgamma)
\end{align*}
Since the optimal value under the new environment (episode distribution $\ph(\cdot)$) upper bounds the optimal value under the discount factor $\bgamma$, the regret under $\ph(\cdot)$ can only get worse.
\end{proof}

\section{Proof of theorem \ref{thm:main-regret-bound} and Related Corollaries }
We first show that the $Q$ values at each time-step upper bounds the optimal $Q$ values.
\begin{lemma}\label{lem:upper-bound-Q}
With  probability at least $1-\delta$, for all $k \in [T]$, $h \in [N(\Delta)]$, $x \in S$, $a \in \calA$, we have
$$
Q_{k,h}(x,a) \ge Q^\star_h(x,a),  \ V_{k,h}(x) \ge V^\star_{h}(x)
$$
\end{lemma}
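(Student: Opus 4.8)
The plan is to prove both inequalities simultaneously by establishing \emph{optimism}: on a high-probability concentration event, the estimates $Q_{k,h}$ produced by the backward induction in Algorithm~\ref{alg:update-q} never underestimate the optimal values $Q^\star_h$. The argument is a double induction — an outer induction on the episode index $k$ (to handle the clipping $\min\{Q_{k-1,h},\cdot\}$) and, within each episode, a backward induction on the level $h$ from $N(\Delta)$ down to $1$ (to propagate the bound through the Bellman recursion~\eqref{eq:general-q-value-recursion}, whose optimal form reads $Q^\star_h(s,a)=r(s,a)+\frac{\gamma(h+1)}{\gamma(h)}[\Pm V^\star_{h+1}](s,a)$).

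First I would set up the good event. Since $V^\star_{h+1}$ is a fixed, data-independent function and, with discount factor $\bgamma_{h+1}$, satisfies $0\le V^\star_{h+1}\le \sum_{j\ge1}\gamma_{h+1}(j)=\Gamma(h+1)/\gamma(h+1)$, the quantity $[\hat P_k V^\star_{h+1}](s,a)$ is an empirical average of $N_k(s,a)$ i.i.d.\ samples of a $[0,\Gamma(h+1)/\gamma(h+1)]$-bounded random variable. A Hoeffding bound together with a union bound over $(k,h,s,a)$ and over the possible values of the count $N_k(s,a)$ yields an event $\mathcal{E}$, of probability at least $1-\delta$, on which
\[
\bigl|[(\hat P_k-\Pm)V^\star_{h+1}](s,a)\bigr|\le \frac{\Gamma(h+1)}{\gamma(h+1)}\sqrt{\frac{L}{N_k(s,a)}}
\]
for all $k,h,s,a$ with $N_k(s,a)>0$, where $L$ is a logarithmic factor of order $\ln(SATN(\Delta)/\delta)$. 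The role of the bonus is then transparent: multiplying by the backup factor $\gamma(h+1)/\gamma(h)$ turns this deviation into $\frac{\Gamma(h+1)}{\gamma(h)}\sqrt{L/N_k(s,a)}$, which is exactly what $\ucb_{k,h}(s,a)=\frac{3\Gamma(h+1)}{\gamma(h)}\ln(SATN(\Delta)/\delta)/\sqrt{N_k(s,a)}$ is designed to dominate.

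Next I would run the induction on $\mathcal{E}$. For the terminal level I use that the cap $Q_{k,N(\Delta)+1}=\Delta/\gamma(N(\Delta)+1)$ upper bounds $Q^\star_{N(\Delta)+1}\le\Gamma(N(\Delta)+1)/\gamma(N(\Delta)+1)$, which holds because $\Gamma(N(\Delta)+1)\le\Gamma(N(\Delta))\le\Delta$ by the choice of $N(\Delta)$; the episode-$1$ base case follows because the initialized values $\Gamma(h)/\gamma(h)$ already upper bound $Q^\star_h$ (rewards lie in $[0,1]$). For the inductive step at level $h$, the outer hypothesis gives $Q_{k-1,h}\ge Q^\star_h$, so the first argument of the $\min$ is fine; for the second argument I decompose
\[
\hat P_k V_{k,h+1}-\Pm V^\star_{h+1}=\hat P_k\bigl(V_{k,h+1}-V^\star_{h+1}\bigr)+(\hat P_k-\Pm)V^\star_{h+1},
\]
where the first term is nonnegative by the inner hypothesis $V_{k,h+1}\ge V^\star_{h+1}$ together with $\hat P_k\ge0$, and the second is at least $-\frac{\Gamma(h+1)}{\gamma(h+1)}\sqrt{L/N_k(s,a)}$ on $\mathcal{E}$. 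Multiplying by $\gamma(h+1)/\gamma(h)$ and adding $\ucb_{k,h}$ leaves a nonnegative remainder, so the Bellman-backup argument is also at least $Q^\star_h(s,a)$. Taking the $\min$ and then the $\max$ over actions gives $Q_{k,h}\ge Q^\star_h$ and $V_{k,h}\ge V^\star_h$, closing both inductions. (When $N_k(s,a)=0$ the bonus is $+\infty$, so the $\min$ simply returns the previous optimistic value $Q_{k-1,h}\ge Q^\star_h$, and nothing breaks.)

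The step I expect to be the main obstacle is the concentration event, and specifically getting the bonus scaling right: unlike fixed-horizon UCB-VI, here both the range of the value function, $\Gamma(h+1)/\gamma(h+1)$, and the backup multiplier, $\gamma(h+1)/\gamma(h)$, vary with the level, and it is their \emph{product} $\Gamma(h+1)/\gamma(h)$ that must be matched by the bonus. Verifying that this level-dependent $\ucb_{k,h}$ dominates the Hoeffding deviation uniformly, while correctly accounting for the randomness of the visit counts $N_k(s,a)$ in the union bound, is where the real care is needed; the remaining algebra is routine.
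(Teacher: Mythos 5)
Your proposal is correct and follows essentially the same route as the paper's proof: induction over episodes and levels, the decomposition $\hat P_k V_{k,h+1}-\Pm V^\star_{h+1}=\hat P_k(V_{k,h+1}-V^\star_{h+1})+(\hat P_k-\Pm)V^\star_{h+1}$ with the first term nonnegative by optimism at level $h+1$, and concentration of $(\hat P_k-\Pm)V^\star_{h+1}$ at scale $\Gamma(h+1)/\gamma(h+1)$ absorbed by the bonus after multiplying by $\gamma(h+1)/\gamma(h)$. The only cosmetic difference is that the paper invokes Bernstein's inequality and then bounds the variance by the squared range $(\Gamma(h+1)/\gamma(h+1))^2$, which collapses to the same Hoeffding-type deviation you use; your treatment of the base cases and the $N_k(s,a)=0$ corner is if anything more explicit than the paper's.
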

\begin{proof}
We prove this statement using induction. First note that we always have $Q_{k,N(\Delta)+1}(x,a) \ge Q^\star_{N(\Delta)+1}(x,a)$ just from the initialization.
Suppose the given claim is true for all $k' \le k-1$ and also $Q_{k,h'}(x,a) \ge Q^\star_k(x,a)$ for all $h' \ge h+1$. We now wish to show that $Q_{k,h}(x,a) \ge Q^\star_k(x,a)$. We can assume that $Q_{k,h}(x,a) \neq Q_{k-1,h}(x,a)$ otherwise the claim is true just from the induction hypothesis. Then we have,

\begin{align*}
Q_{k,h}(x,a) - Q^\star_h(x,a) &= \frac{\gamma(h+1)}{\gamma(h)} \left([\hat{P}_k V_{k,h+1}](x,a) - [P V^\star_{h+1}](x,a)  \right) + \ucb_{k,h}(x,a)\\
&\ge \frac{\gamma(h+1)}{\gamma(h)} \left([(\hat{P}_k - P) V^\star_{h+1}](x,a)  \right) +  \ucb_{k,h}(x,a) \\
\end{align*}
The last inequality follows because from the induction hypothesis $V_{k,h+1}(y) \ge V^{\star}_{h+1}(y)$. We now bound Bernstein's inequality and union bound to bound the first term. Using the fact that $V^\star_{h+1}(\cdot)$ is uniformly bounded by $\Gamma(h+1) / \gamma(h+1)$, we get that the following bound holds with probability at least $1-\delta$ for any $(x,a) \in  \calS \times \calA$, $t \in [T]$, and $h \in [N(\Delta)]$.
$$
\abs{[(\hat{P}_k - P) V^\star_h](x,a) } \le \sqrt{\frac{2\Vm^\star_h(x,a)\ln(SATN(\Delta) /\delta)}{N_k(x,a)}} + \frac{2\Gamma(h) \ln(SATN(\Delta) /\delta)}{3 \gamma(h) N_k(x,a)} 
$$
We now bound the variance term $\Vm^\star_{h+1}(x,a)$ by $(\Gamma(h+1) / \gamma(h+1) )^2$ and substitute the above bound to obtain the following bound on the different between $Q$ values.
\begin{align*}
Q_{k,h}(x,a) - Q^\star_h(x,a) &\ge -\frac{\Gamma(h+1)}{\gamma(h)} \left( \sqrt{\frac{2\ln(SATN(\Delta) /\delta)}{N_k(x,a)}} 
+ \frac{2 \ln(SATN(\Delta) /\delta)}{3 N_k(x,a)} \right) \\ &+  \ucb_{k,h}(x,a) 
\end{align*}
Now it follows from the definition of $\ucb_{k,h}(x,a)$ that the final term is non-negative.
\end{proof}

\subsection{Proof of theorem \ref{thm:main-regret-bound}}
We first provide a formal statement of theorem~\ref{thm:main-regret-bound}.
\begin{theorem}
With probability at least $1-\delta$, Algorithm~\ref{alg:vibi} has the following regret.
\begin{align*}
&\reg(\bm{\pi}; \bm{\gamma}) \le \frac{\Delta T}{\gamma(N(\Delta) + 1)} t(N(\Delta) + 1) +  \max_{h \in [N(\Delta)]} t(h) \frac{\gamma(h+1)}{\gamma(h)} O\left(\sqrt{T N(\Delta) \log(T N(\Delta) / \delta)} \right) \\
&+ \max_{h  \in [N(\Delta)]}  t(h) \frac{ \Gamma(h+1)}{\gamma(h) }  {O}\left( \sqrt{SATN(\Delta) }\right) +  \max_{h \in [N(\Delta)]} t(h) \frac{ (\Gamma(h+1))^2}{\gamma(h) \gamma(h+1)}  \tilde{O}\left( {S^2 A ({N(\Delta)})^{\beta} }\right)
\end{align*}
\end{theorem}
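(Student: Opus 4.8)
The plan is to adapt the optimism-based analysis of UCBVI~\cite{AOM17} to the level-dependent discounting induced by the recurrence \cref{eq:general-q-value-recursion}. I would start from the optimism guarantee of \cref{lem:upper-bound-Q}: on the event of probability $1-\delta$, we have $V_{k,h}(x) \ge V^\star_h(x)$ for all $k,h$. Writing $\tilde{\Delta}_{k,h} = V_{k,h} - V^{\pi_k}_h$ and $\tdkh = \tilde{\Delta}_{k,h}(\xkh)$, optimism collapses the regret to a sum of estimated suboptimality gaps at the first step,
\[
\reg(\bm{\pi}; \bgamma) = \sum_{k=1}^T V^\star(x_{k,1}) - V^{\pi_k}_1(x_{k,1}) \le \sum_{k=1}^T V_{k,1}(x_{k,1}) - V^{\pi_k}_1(x_{k,1}) \le \sum_{k=1}^T \tilde{\delta}_{k,1}(x_{k,1}),
\]
so everything reduces to controlling $\sum_k \tilde{\delta}_{k,1}$.

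The technical heart — and the step I expect to be hardest — is the one-step recurrence bounding $\tdkh$ by $\tilde{\delta}_{k,h+1}$. I would subtract the true Bellman recursion for $V^{\pi_k}_h$ from the algorithm's backward-induction update (the $\min$ in Algorithm~\ref{alg:update-q}), producing the leading term $\frac{\gamma(h+1)}{\gamma(h)}[\hat{P}_k V_{k,h+1} - P V^{\pi_k}_{h+1}](\xkh,\akh)$ together with the bonus $\ucb_{k,h}$. Splitting $\hat{P}_k V_{k,h+1} - P V^{\pi_k}_{h+1} = (\hat{P}_k - P)V^\star_{h+1} + \hat{P}_k(V_{k,h+1}-V^{\pi_k}_{h+1}) + (\hat{P}_k - P)(V^{\pi_k}_{h+1}-V^\star_{h+1})$, the first piece is absorbed by Bernstein concentration into the bonus, and $\hat{P}_k(V_{k,h+1}-V^{\pi_k}_{h+1})$ is converted to $P(\cdot)$ plus a martingale difference, which advances $\tilde{\delta}_{k,h+1}$. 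The crucial maneuver is an AM--GM step on the cross term: bounding $|(\hat{P}_k - P)(V_{k,h+1}-V^{\pi_k}_{h+1})|$ by $\frac{\gamma(h+1)}{(h+1)^\beta\Gamma(h+1)}\,[P(V_{k,h+1}-V^{\pi_k}_{h+1})]$ plus a lower-order residual, with $\beta$ tuning the split between the multiplicative factor and the residual. This produces exactly the amplification $1 + \frac{\gamma(h+1)}{(h+1)^\beta\Gamma(h+1)}$, giving
\[
\tdkh \le \frac{\gamma(h+1)}{\gamma(h)}\Big(1 + \tfrac{\gamma(h+1)}{(h+1)^\beta\Gamma(h+1)}\Big)\tilde{\delta}_{k,h+1} + \sqrt{2L}\,\bar{\eps}_{k,h} + e_{k,h} + b_{k,h} + \eps_{k,h} + f_{k,h},
\]
where $\bar{\eps}_{k,h},\eps_{k,h}$ are martingale differences and $b_{k,h},e_{k,h},f_{k,h}$ are the bonus and concentration residuals.

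Next I would unroll this recurrence from $h=1$ to $N(\Delta)$. The coefficient accumulated in front of the level-$h$ noise is $\prod_{j=1}^{h-1}\frac{\gamma(j+1)}{\gamma(j)}\big(1 + \frac{\gamma(j+1)}{(j+1)^\beta\Gamma(j+1)}\big)$; the first factor telescopes to $\gamma(h)/\gamma(1)$ and the second reindexes to $\prod_{j=2}^h(1 + \frac{\gamma(j)}{j^\beta\Gamma(j)})$, so the coefficient is precisely $t(h)$. The unexpanded tail leaves the boundary contribution $t(N(\Delta)+1)\,\tilde{\delta}_{k,N(\Delta)+1}$, and since the algorithm initializes $V_{k,N(\Delta)+1} \le \Delta/\gamma(N(\Delta)+1)$, summing over the $T$ episodes yields the first summand $\frac{\Delta T}{\gamma(N(\Delta)+1)}t(N(\Delta)+1)$ — the genuinely new term relative to~\cite{AOM17}, which vanishes there because the horizon is exact so the tail is zero.

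Finally I would sum each noise family over $k$ and $h$, pulling out $\max_{h} t(h)$ as a uniform coefficient. The martingale differences are summed by Azuma--Hoeffding over the $\le TN(\Delta)$ terms, giving the $t(h)\frac{\gamma(h+1)}{\gamma(h)}O(\sqrt{TN(\Delta)\log(TN(\Delta)/\delta)})$ term. The bonus $b_{k,h}=\frac{3\Gamma(h+1)}{\gamma(h)}\frac{\ln(\cdot)}{\sqrt{N_k}}$ is summed via the standard pigeonhole bound $\sum_{k,h}1/\sqrt{N_k(\xkh,\akh)} = O(\sqrt{SA\,TN(\Delta)})$ (Cauchy--Schwarz over the pooled visitation counts), producing the $t(h)\frac{\Gamma(h+1)}{\gamma(h)}O(\sqrt{SATN(\Delta)})$ term. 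The residuals $e_{k,h},f_{k,h}$ carry an extra factor $S$ (from summing the per-coordinate AM--GM over next states) and the amplification-inverse $(h+1)^\beta \le N(\Delta)^\beta$, and sum through $\sum_{k,h}1/N_k(\xkh,\akh) = \tilde{O}(SA)$ to give the final $t(h)\frac{(\Gamma(h+1))^2}{\gamma(h)\gamma(h+1)}\tilde{O}(S^2 A\,N(\Delta)^\beta)$ term. Collecting the four contributions yields the stated bound.
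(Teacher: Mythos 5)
Your proposal is correct and follows essentially the same route as the paper's proof: optimism via \cref{lem:upper-bound-Q}, the one-step recurrence whose amplification factor $1+\frac{\gamma(h+1)}{(h+1)^{\beta}\Gamma(h+1)}$ comes from a variance/threshold split of the correlation term $(\hat{P}_k-P)\tilde{\Delta}_{k,h+1}$, unrolling to accumulate the coefficient $t(h)$ together with the $\Delta T/\gamma(N(\Delta)+1)$ boundary term, and the standard pigeonhole and Azuma--Hoeffding summations. The only (harmless) slips are presentational: the three-way split you display is not literally the grouping you then analyze (and $(\hat{P}_k-P)\tilde{\Delta}_{k,h+1}$ is not itself a martingale difference --- only the $P\tilde{\Delta}_{k,h+1}(\xkh,\akh)\to\tilde{\delta}_{k,h+1}$ replacement is), and $e_{k,h}$ actually sums like the bonus at order $\sqrt{SATN(\Delta)}$ rather than through $\sum_{k,h}1/N_k$ with an extra factor of $S$ as your grouping with $f_{k,h}$ suggests.
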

\begin{proof}
At episode $k$, time $h$ the algorithm takes the best action according to $Q_{k,h}(\cdot,\cdot)$. Therefore, $\pi_{k,h}(x_{k,h}) = a_{k,h} \in \argmax_a Q_{k,h}(x_{k,h}, a)$. Moreover, $V^{\pi_k}(\xkh) = Q^{\pi_k}_h(\xkh, \pi_{k,h}(\xkh)) = Q^{\pi_k}(\xkh, \akh)$. 
\begin{align*}
\tilde{\Delta}_{k,h}(x_{k,h}) &= V_{k,h}(x_{k,h}) - V^{\pi_k}_h(x_{k,h}) = Q_{k,h}(\xkh,\akh) - Q^{\pi_k}_h(\xkh,\akh)\\
&\le \frac{\gamma(h+1)}{\gamma(h)}[\hat{P}_k V_{k,h+1} - P V^{\pi_k}_{h+1}](\xkh,\akh)  +  \ucb_{k,h}(\xkh,\akh)
\end{align*}
Now we proceed similarly as the proof of theorem 1 from \cite{AOM17} and derive a recurrence relation for $\tdkh = \tilde{\Delta}_{k,h}(\xkh)$.
\begin{align*}
\tdkh &\le \frac{\gamma(h+1)}{\gamma(h)}  [(\hat{P}_k - P)(V_{h,k+1} - V^\star_{h+1})](\xkh, \akh) \\
&+  \frac{\gamma(h+1)}{\gamma(h)} [P(V_{k,h+1} - V^{\pi_k}_{h+1})](\xkh, \akh) \\
&+ \frac{\gamma(h+1)}{\gamma(h)} [(\hat{P}_k - P)V^\star_{h+1}](\xkh, \akh) \\
&+ \ucb_{k,h}(\xkh,\akh)\\
&= c_{k,h} + \frac{\gamma(h+1)}{\gamma(h)} [P\tilde{\Delta}_{k,h+1}](\xkh, \akh) + e_{k,h} + b_{k,h}
\end{align*}
where in the last line we used the following notations.
\begin{equation}\label{eq:ckh}
c_{k,h} = \frac{\gamma(h+1)}{\gamma(h)}  [(\hat{P}_k - P)(V_{k,h+1} - V^\star_{h+1})](\xkh, \akh) 
\end{equation}
\begin{equation}\label{eq:ekh}
e_{k,h} = \frac{\gamma(h+1)}{\gamma(h)} [(\hat{P}_k - P) V^\star_{h+1} ](\xkh, \akh)  
\end{equation}
\begin{equation}\label{eq:bkh}
b_{k,h} = \ucb_{k,h}(\xkh,\akh)
\end{equation}
Finally, writing $\eps_{k,h}$ as the following error term
\begin{equation}\label{eq:epskh}
\eps_{k,h} = \frac{\gamma(h+1)}{\gamma(h)} [P\tilde{\Delta}_{k,h+1}](\xkh, \akh) - \frac{\gamma(h+1)}{\gamma(h)} \tilde{\Delta}_{k,h+1}(x_{k,h+1})
\end{equation}
we get the following recurrence relation for $\tdkh$.
\begin{equation} \label{eq:rec-delta-1}
\tdkh \le \frac{\gamma(h+1)}{\gamma(h)} \tilde{\delta}_{k,h+1} + c_{k,h} + e_{k,h} + b_{k,h} + \eps_{k,h}
\end{equation}
Let $L=2 \ln(SATN(\Delta) / \delta)$. Then we can apply Bernstein's inequality (lemma~\ref{lem:bernstein}) to conclude that with probability at least $1-\delta$ the following bound holds for each $j \in [N(\Delta)]$ and $k \in [T]$. 
$$
e_{k,j} \le \frac{\gamma(j+1)}{\gamma(j)} \left(\sqrt{\frac{L \var_{y \sim \Pm(\cdot|x_{k,j}, a_{k,j})} V^\star_{j+1}(y) }{N_k(x_{k,j},a_{k,j})}} + \frac{L \max_y V^\star_{j+1}(y) }{3N_{k}(x_{k,j}, a_{k,j})}  \right)
$$
Now instantiating the above bound for $j=h$ and using the fact that $V^\star_{h+1}(y)$ is bounded by $\Gamma(h) / \gamma(h)$ we get the following bound on $e_{k,h}$.
\begin{equation*}
e_{k,h} \le \frac{\Gamma(h+1)}{\gamma(h)}  \left(\sqrt{\frac{L }{N_k(x_{k,h},a_{k,h})}} + \frac{L  }{3N_{k}(x_{k,h}, a_{k,h})}  \right)
\end{equation*}

Now we bound the term $c_{k,h}$ defined in \cref{eq:ckh}.
\begin{align*}
c_{k,h} &= \frac{\gamma(h+1)}{\gamma(h)} \left[ (\hat{P}_k - P)(V_{h,k+1} - V^\star_{h+1})\right](\xkh, \akh)\\
&= \frac{\gamma(h+1)}{\gamma(h)} \sum_y \left(\hat{P}_k(y|\xkh, \akh) - P(y|\xkh,\akh) \right) (V_{k,h+1}(y) - V^\star_{h+1}(y))
\end{align*}
We now substitute two bounds. First we can use Bernstein inequality to bound the  difference between estimated probability and actual probability as
$$
\abs{\hat{P}_k(y|\xkh, \akh) - P(y|\xkh,\akh) } \le \sqrt{\frac{2 P(y|\xkh,\akh) (1-P(y|\xkh,\akh))L}{N_k(\xkh,\akh)} } + \frac{2L }{3 N_k(\xkh,\akh) }
$$
Second we bound $V_{k,h+1}(y) - V^\star_{h+1}(y) \le V_{k,h+1}(y) - V^{\pi_k}_{h+1}(y) \le \tilde{\Delta}_{k,h+1}(y)$. This gives us the following bound on $c_{k,h}$.
\begin{align*}
c_{k,h} &\le  \frac{\gamma(h+1)}{\gamma(h)} \sum_y \sqrt{\frac{2 P(y|\xkh,\akh) L}{N_k(\xkh,\akh)} } \tilde{\Delta}_{k,h+1}(y) +  \frac{\gamma(h+1)}{\gamma(h)} \sum_y \frac{2L }{3 N_k(\xkh,\akh) } \tilde{\Delta}_{k,h+1}(y) \\
&\le \frac{\gamma(h+1)}{\gamma(h)} \sqrt{2L} \underbrace{\sum_y \sqrt{\frac{P(y|\xkh,\akh) }{N_k(\xkh,\akh)} } \tilde{\Delta}_{k,h+1}(y) }_{:= T_1} + \frac{2SL\Gamma(h+1)}{3\gamma(h) N_k(\xkh,\akh) }
\end{align*}
We now bound the term labelled $T_1$. Let 
\begin{equation}\label{eq:defn-ykh}
[y]_{k,h} = \set{y : P(y|\xkh, \akh) N_k(\xkh,\akh) \ge 2 (h+1)^{2\beta} L \left(\frac{\Gamma(h+1)}{\gamma(h+1)} \right)^2}.
\end{equation}
Here we choose a threshold that is different than \cite{AOM17}. In fact, we require the additional $\textrm{poly}(h)$ factor to ensure convergence of the final recurrence relation. Moreover, the threshold is parameterized by the parameter $\beta$, and can be chosen based on the particular discount factor.
\begin{equation*}
T_1 = \sum_{y \in [y]_{k,h} } \sqrt{\frac{P(y|\xkh,\akh) }{N_k(\xkh,\akh)} } \tilde{\Delta}_{k,h+1}(y) + \sum_{y \notin [y]_{k,h} } \sqrt{\frac{P(y|\xkh,\akh) }{N_k(\xkh,\akh)} } \tilde{\Delta}_{k,h+1}(y) 
\end{equation*}
The first term can be bounded as 
\begin{align*}
\sum_{y \in [y]_{k,h} } \sqrt{\frac{P(y|\xkh,\akh) }{N_k(\xkh,\akh)} } \tilde{\Delta}_{k,h+1}(y) &= \hat{\eps}_{k,h} + \sqrt{\frac{P(x_{k,h+1}|\xkh,\akh) }{N_k(\xkh,\akh)} } \tilde{\Delta}_{k,h+1}(x_{k,h+1})\one\set{x_{k,h+1} \in [y]_{k,h} } \\ &\le  \hat{\eps}_{k,h}  +  \frac{\gamma(h+1)}{ \sqrt{2L} (h+1)^{\beta} \Gamma(h+1)}  \tilde{\delta}_{k,h+1}
\end{align*}
where $\hat{\eps}_{k,h}$ is an error term defined as follows.
\begin{align}
\hat{\eps}_{k,h} &= \sum_{y \in [y]_{k,h} } \sqrt{\frac{P(y|\xkh,\akh) }{N_k(\xkh,\akh)} } \tilde{\Delta}_{k,h+1}(y) \nonumber \\ &- \sqrt{\frac{P(x_{k,h+1}|\xkh,\akh) }{N_k(\xkh,\akh)} } \tilde{\Delta}_{k,h+1}(x_{k,h+1})\one\set{x_{k,h+1} \in [y]_{k,h} } \label{eq:defn-hat-epskh}
\end{align}
It can be checked that $\hat{\eps}_{k,h}$ is actually a Martingale difference sequence. The second term of $T_1$ can be bounded as
\begin{align*}
\sum_{y \notin [y]_{k,h} } \sqrt{\frac{P(y|\xkh,\akh) }{N_k(\xkh,\akh)} } \tilde{\Delta}_{k,h+1}(y) &\le \sum_{y \notin [y]_{k,h} } \frac{\sqrt{P(y|\xkh,\akh) N_k(\xkh,\akh)} }{N_k(\xkh,\akh)} \tilde{\Delta}_{k,h+1}(y) \\
&\le \frac{ \sqrt{2 L} (h+1)^{\beta} S}{N_k(\xkh,\akh)} \left(\frac{\Gamma(h+1)}{\gamma(h+1)} \right)^2
\end{align*}
Substituting the bound on $T_1$ in the upper bound on $c_{k,h}$ we get the following bound.
\begin{align*}
c_{k,h} &\le \frac{\gamma(h+1)}{\gamma(h)} \sqrt{2L} \left( \hat{\eps}_{k,h}  +  \frac{\gamma(h+1)}{ \sqrt{2L} (h+1)^{\beta} \Gamma(h+1)}  \tilde{\delta}_{k,h+1} + \frac{ \sqrt{2L}(h+1)^{\beta}  S}{N_k(\xkh,\akh)} \left(\frac{\Gamma(h+1)}{\gamma(h+1)} \right)^2\right) \\
&+ \frac{2SL\Gamma(h+1)}{3\gamma(h) N_k(\xkh,\akh) } \\
&\le \frac{\gamma(h+1)}{\gamma(h)} \sqrt{2L} \hat{\eps}_{k,h} + \frac{(\gamma(h+1))^2 }{\gamma(h) \Gamma(h+1) (h+1)^{\beta}} \tilde{\delta}_{k,h+1} + \underbrace{\frac{(h+1)^{\beta}(\Gamma(h+1))^2}{\gamma(h)\gamma(h+1)} \frac{2 LS}{N_k(\xkh,\akh)} }_{:= f_{k,h} }
\end{align*}

Substituting the previous upper bound on $c_{h,k}$ in \cref{eq:rec-delta-1} and writing $\bar{\eps}_{k,h} = \gamma(h+1) / \gamma(h) \hat{\eps}_{k,h}$ we get the final recurrence relation for $\tdkh$.
\begin{equation}\label{eq:rec-final}
\tdkh \le \frac{\gamma(h+1)}{\gamma(h)} \left(1 + \frac{\gamma(h+1)}{(h+1)^{\beta} \Gamma(h+1)} \right) \tilde{\delta}_{k,h+1} + \sqrt{2L}\bar{\eps}_{k,h} + e_{k,h} + b_{k,h}+\eps_{k,h}+f_{k,h}
\end{equation}

Then expanding the recurrence relation~\cref{eq:rec-final} from $h=1$ to $h=N(\Delta)$ we get the following equation.
\begin{equation}
\tilde{\delta}_{k,1} \le t(N(\Delta)+1) \tilde{\delta}_{k,N(\Delta)+1} + \sum_{h=1}^{N(\Delta)} t(h) \left( \sqrt{2L}\bar{\eps}_{k,h} + e_{k,h} + b_{k,h}+\eps_{k,h}+f_{k,h}\right)
\end{equation}
We now 
use the fact that $\tilde{\delta}_{k,N(\Delta) + 1} \le \Delta / \gamma(N(\Delta) + 1)$ and obtain the following bound on $\tilde{\delta}_{k,1}$.
$$
\tilde{\delta}_{k,1} \le \frac{ \Delta}{\gamma(N(\Delta)+1) } t(N(\Delta) +1) +  \sum_{h=1}^{N(\Delta)} t(h) \left( \sqrt{2L}\bar{\eps}_{k,h} + e_{k,h} + b_{k,h}+\eps_{k,h}+f_{k,h}\right)
$$
Summing over the $T$ episodes the final bound on regret is given as.
\begin{equation}\label{eq:regret-bound-total}
\reg(T) \le \sum_{k=1}^T \tilde{\delta}_{k,1} \le \frac{\Delta T }{\gamma(N(\Delta) + 1)} t(N(\Delta) + 1)  + \sum_{k=1}^T  \sum_{h=1}^{N(\Delta)} t(h) \left( \sqrt{2L}\bar{\eps}_{k,h} + e_{k,h} + b_{k,h}+\eps_{k,h}+f_{k,h}\right)
\end{equation}

We now bound the second term. First consider the exploration bonus term $b_{k,h}$. 
\begin{align*}
\sum_{k=1}^T \sum_{h=1}^{N(\Delta)} t(h) b_{k,h} &= \sum_{k=1}^T \sum_{h=1}^{N(\Delta)} t(h) \frac{\Gamma(h+1)}{\gamma(h)} \frac{L}{\sqrt{N_k(\xkh,\akh)}} \\
&\le L \max_{h \in [N(\Delta)]} t(h) \frac{\Gamma(h+1)}{\gamma(h)}  \sum_{k,h} \frac{1}{\sqrt{N_k(\xkh,\akh)}}
\end{align*}
Using a simple counting argument we can bound the last summation as follows. Here we use the fact that only the first $N(\Delta)$ steps of each episode are used to update the $N_k(\cdot,\cdot)$ counters. 
\begin{align*}
\sum_{k=1}^T \sum_{h=1}^{N(\Delta)} \frac{1}{\sqrt{N_k(\xkh,\akh)}} &= \sum_{x,a} \sum_{n=1}^{N_T(x,a)} \frac{1}{\sqrt{n}} \le 2 \sum_{x,a} \sqrt{N_{k}(x,a)} \\ &\le 2 \sqrt{SA}\sqrt{\sum_{x,a} N_T(x,a) } = 2\sqrt{SATN(\Delta)}
\end{align*}
This result gives us the following bound on the sum of $b_{k,h}$terms.

\begin{equation}\label{eq:bound-bkh}
\sum_{k=1}^T \sum_{h=1}^{N(\Delta)} t(h) b_{k,h} = L \max_{h \in [N(\Delta)]} t(h) \frac{\Gamma(h+1)}{\gamma(h)}  {O}\left( \sqrt{SATN(\Delta) }\right)
\end{equation}
By a similar argument we can prove the following bounds on the $e_{k,h}$ and $f_{k,h}$ terms.
\begin{equation}\label{eq:bound-ekh}
\sum_{k=1}^T \sum_{h=1}^{N(\Delta)} t(h) e_{k,h} = L \max_{h \in [N(\Delta)]} t(h) \frac{\Gamma(h+1)}{\gamma(h)} {O}\left( \sqrt{SATN(\Delta) }\right)
\end{equation}

\begin{equation}\label{eq:bound-fkh}
\sum_{k=1}^T \sum_{h=1}^{N(\Delta)} t(h) f_{k,h} \le L \max_{h \in [N(\Delta)]} t(h) \frac{ (\Gamma(h+1))^2}{\gamma(h) \gamma(h+1)} {O}\left( {S^2 A (N(\Delta))^{\beta} \log(TN(\Delta) )}\right)
\end{equation}

We  now consider the martingale differences term $\varepsilon_{k,h}$ and $\bar{\varepsilon}_{k,h}$. If we write $\calF_{k,h}$ to denote the sigma-algebra generated by the actions, and states until step $h$ of episode $k$, then we have $\E[\varepsilon_{k,h}| \calF_{k,h}] = 0$. Moreover, each $\varepsilon_{k,h}$ is bouned by $O\left(\frac{\Gamma(h+1) }{ \gamma(h)}\right)$. So we can define a new set of martingales $\tilde{\varepsilon}_{k,h} = \epsilon_{k,h} \frac{\gamma(h)}{\Gamma(h+1)}$ so that $\abs{\tilde{\varepsilon}_{k,h}} \le 1$. Now, we can apply the Azuma-Hoeffding inequality to get the following result with probability at least $1-\delta$.
\begin{align}
    \sum_{k=1}^T \sum_{h =1}^{N(\Delta)} t(h) \varepsilon_{k,h} &\le \max_{h \in [N(\Delta)]} t(h) \frac{\Gamma(h+1)}{\gamma(h)}  \sum_{k=1}^T \sum_{h =1}^{N(\Delta)} \abs{\tilde{\varepsilon}_{k,h}}\nonumber \\
    &\le\max_{h \in [N(\Delta)]} t(h) \frac{\Gamma(h+1)}{\gamma(h)}  O\left(\sqrt{T N(\Delta) \log(T N(\Delta) / \delta)} \right) \label{eq:bound-epskh}
\end{align}

Now recall that we defined $\bar{\varepsilon}_{k,h} = \frac{\gamma(h+1)}{\gamma(h)}\hat{\varepsilon}_{k,h}$ where $\hat{\varepsilon}_{k,h}$ was defined in equation~\ref{eq:defn-hat-epskh}. This gives us the following expression for $\bar{\varepsilon}_{k,h}$.
\begin{align*}
\bar{\eps}_{k,h} &= \frac{\gamma(h+1)}{\gamma(h)}\sum_{y \in [y]_{k,h} } \sqrt{\frac{P(y|\xkh,\akh) }{N_k(\xkh,\akh)} } \tilde{\Delta}_{k,h+1}(y) \\ &- \frac{\gamma(h+1)}{\gamma(h)} \sqrt{\frac{P(x_{k,h+1}|\xkh,\akh) }{N_k(\xkh,\akh)} } \tilde{\Delta}_{k,h+1}(x_{k,h+1})\one\set{x_{k,h+1} \in [y]_{k,h} } 
\end{align*}
It can be easily verified that $\E[\eps_{k,h} | \calF_{k,h}] = 0$. We now establish an upper bound on $\bar{\eps}_{k,h}$. Consider the first term in the definition of $\bar{\eps}_{k,h}$.
\begin{align*}
    &\frac{\gamma(h+1)}{\gamma(h)}\sum_{y \in [y]_{k,h} } \sqrt{\frac{P(y|\xkh,\akh) }{N_k(\xkh,\akh)} } \tilde{\Delta}_{k,h+1}(y) \\&= \frac{\gamma(h+1)}{\gamma(h)}\sum_{y \in [y]_{k,h} } {\frac{P(y|\xkh,\akh) }{\sqrt{P(y|\xkh, \akh) N_k(\xkh,\akh)} } } \tilde{\Delta}_{k,h+1}(y) \\
    &\le \frac{1}{\sqrt{2L} (h+1)^{\beta}} \frac{\gamma(h+1)}{\gamma(h)} \frac{\gamma(h+1)}{\Gamma(h+1)} \frac{\Gamma(h+1)}{\gamma(h+1)} \le \frac{1}{2\sqrt{L}} \frac{\gamma(h+1)}{\gamma(h)}
\end{align*} 
The last inequality uses the definition of $[y]_{k,h}$ (\cref{eq:defn-ykh}).The second term of $\bar{\eps}_{k,h}$ can be bounded similarly. Therefore, again using Azuma-Hoeffding inequality, we get the following result with probability at least $1-\delta$.
\begin{equation}\label{eq:bound-barepskh}
    \sum_{k=1}^T \sum_{h =1}^{N(\Delta)} \sqrt{2L} t(h) \bar{\eps}_{k,h} \le \max_{h  \in [N(\Delta)]} t(h) \frac{\gamma(h+1)}{\gamma(h) } O\left(\sqrt{T N(\Delta) \log(T N(\Delta) / \delta)} \right)
\end{equation}

Substituting equations \ref{eq:bound-bkh}, \ref{eq:bound-ekh}, \ref{eq:bound-fkh}, \ref{eq:bound-epskh}, \ref{eq:bound-barepskh} in equation \ref{eq:regret-bound-total} we get the final bound on regret.
\end{proof}

\begin{lemma}[Bernstein's Inequality]\label{lem:bernstein}
Let $Z_1,\ldots,Z_n$ be i.i.d. random variables with values bounded by $H$ and let $\delta > 0$. Then with probability at least $1-\delta$ we have
$$
\E Z_1 - \frac{1}{n} \sum_{i=1}^n Z_i \le \sqrt{\frac{2 \var(Z_1) \ln(2/\delta)}{n}} + {\frac{2 H \ln(2/\delta)}{3n} }
$$
\end{lemma}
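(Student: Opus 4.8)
The plan is to prove this as the classical one-sided Bernstein bound via the Chernoff / moment-generating-function (MGF) route. First I would center and flip the variables: set $Y_i = \E Z_1 - Z_i$, so that the $Y_i$ are i.i.d., mean zero, satisfy $|Y_i| \le H$ (since the $Z_i$ take values in $[0,H]$, so does $\E Z_1$, and hence $|\E Z_1 - Z_i| \le H$), and have $\var(Y_i) = \var(Z_1) =: \sigma^2$. The quantity to control, $\E Z_1 - \frac{1}{n}\sum_i Z_i = \frac{1}{n}\sum_i Y_i$, is then the upper tail of a sum of bounded mean-zero variables, which is exactly what Bernstein's inequality governs. The next step is the Chernoff bound: for any $\lambda \in (0, 3/H)$,
\[
\Pm\Bigl(\textstyle\sum_i Y_i \ge s\Bigr) \le e^{-\lambda s}\bigl(\E[e^{\lambda Y_1}]\bigr)^n .
\]

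The heart of the argument is the MGF estimate for a bounded mean-zero variable. I would expand $\E[e^{\lambda Y_1}] = 1 + \sum_{k \ge 2} \lambda^k \E[Y_1^k]/k!$ (the $k=1$ term vanishes by mean-zero), bound the higher moments using $\E[Y_1^k] \le \E[Y_1^2\,|Y_1|^{k-2}] \le \sigma^2 H^{k-2}$ for $k \ge 2$, and invoke $k! \ge 2\cdot 3^{k-2}$ to sum the resulting geometric series. This yields $\E[e^{\lambda Y_1}] \le \exp\bigl(\tfrac{\lambda^2\sigma^2/2}{1 - \lambda H/3}\bigr)$ via $1+x \le e^x$, and hence
\[
\Pm\Bigl(\textstyle\sum_i Y_i \ge s\Bigr) \le \exp\Bigl(-\lambda s + \tfrac{n\lambda^2 \sigma^2/2}{1-\lambda H/3}\Bigr).
\]
Choosing $\lambda = s/(n\sigma^2 + Hs/3)$ (which automatically satisfies $\lambda < 3/H$) collapses the exponent to the familiar form $-\tfrac{s^2/2}{n\sigma^2 + Hs/3}$.

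Finally I would invert this tail bound. Setting the right-hand side equal to $\delta$ and writing $L = \ln(1/\delta)$ gives the quadratic $s^2 - \tfrac{2LH}{3}\,s - 2Ln\sigma^2 = 0$; solving for the positive root and applying $\sqrt{a+b}\le\sqrt a+\sqrt b$ bounds the threshold by $s \le \sqrt{2n\sigma^2 L} + \tfrac{2HL}{3}$. Dividing by $n$ and using $\ln(1/\delta) \le \ln(2/\delta)$ yields the stated inequality. The one genuinely delicate step is the MGF moment estimate: obtaining the constant $3$ in the denominator requires the $k! \ge 2\cdot 3^{k-2}$ bound together with careful summation of the series, whereas the centering, the Chernoff step, and the quadratic inversion are routine. (The factor $2$ inside $\ln(2/\delta)$ is merely slack, since proving the sharper $\ln(1/\delta)$ version already suffices.)
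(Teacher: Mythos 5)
Your proposal is correct: the centering, the Chernoff step, the moment bound $\E[Y_1^k]\le \sigma^2 H^{k-2}$ combined with $k!\ge 2\cdot 3^{k-2}$, the choice $\lambda = s/(n\sigma^2+Hs/3)$, and the quadratic inversion with $\sqrt{a+b}\le\sqrt{a}+\sqrt{b}$ all check out and reproduce the stated constants. The paper itself offers no proof of this lemma --- it is invoked as the classical (one-sided) Bernstein inequality --- so there is no argument to compare against; yours is the standard textbook derivation. The only point worth flagging is that your bound $|Y_i|\le H$ implicitly reads ``values bounded by $H$'' as $Z_i\in[0,H]$ (which is the interpretation consistent with how the paper applies the lemma to nonnegative value functions); under the reading $|Z_i|\le H$ one would get $|Y_i|\le 2H$ and correspondingly worse constants.
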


\subsection{Proof of Corollary~\ref{cor:poly-decay}}

\begin{proof}

Consider the discount factor $\gamma(h) = h^{-p}$ for $p \ge 2$. Recall that we defined $\Gamma(h) = \sum_{j \ge h} \gamma(j)$. We will use the following bound on $\Gamma(h) \in \left[\frac{h^{-p+1}}{p-1}, h^{-p}\left(1 + \frac{h}{p-1}\right) \right]$. We substitute $\beta = p-1$ and bound various $\gamma$-dependent constants appearing in the regret bound.
$$
\frac{t(N(\Delta) + 1)}{\gamma(N(\Delta) + 1)} = \prod_{j=2}^{N(\Delta) + 1} \left( 1 + \frac{\gamma(j)}{{j^{\beta}} \Gamma(j)}\right) \le \prod_{j=2}^{N(\Delta) + 1}\left(1 + \frac{p-1}{j^{1+\beta}}\right) \le e^{(p-1)\sum_j j^{-p}} \le e
$$
\begin{align*}
\max_{h \in [N(\Delta)]} t(h) \frac{ \Gamma(h+1)}{\gamma(h) } = \max_{h \in [N(\Delta)]} \Gamma(h+1) \prod_{j=2}^{h } \left( 1 + \frac{\gamma(j)}{{j^{\beta}} \Gamma(j)}\right) \le e \max_{h \in [N(\Delta)]} h^{-p} \left(1 + \frac{h}{p-1} \right) = e 
\end{align*}
The last inequality follows because the final term is a decreasing function of $h$ for $p > 1$.
\begin{align*}
\max_{h \in [N(\Delta)]} t(h) \frac{ (\Gamma(h+1))^2}{\gamma(h) \gamma(h+1)} &= \max_{h \in [N(\Delta)]} \frac{(\Gamma(h+1))^2}{\gamma(h+1)} \prod_{j=2}^{h } \left( 1 + \frac{\gamma(j)}{{j^{\beta}} \Gamma(j)}\right) \\ &\le e \max_{h \in [N(\Delta)]} (h+1)^{-p}\left(1 + \frac{h+1}{p-1} \right)^2 \le e
\end{align*}
Here we use $p \ge 2$ to conclude that the term inside max is non-increasing. We now bound the term $N(\Delta)$. Recall that $N(\Delta)$ is the value of $h$ such that $\Gamma(h)$ is bounded from above by $\Delta$. Using the lower bound of $h^{-p+1}/(p-1)$ we get $N(\Delta)$ should be at least $(\Delta(p-1))^{-1/(p-1)}$.
Substituting the value of $N(\Delta)$ and the bound on the three constants in the expression for regret, we get the following bound on regret.
\begin{align*}
    \reg(T) \le e\Delta T &+ e\left(\Delta(p-1) \right)^{-\frac{1}{2(p-1)} }\left[\tilde{O}(\sqrt{SAT})  + \tilde{O}(\sqrt{T}) \right]\\
    &+ e\left(\Delta(p-1) \right)^{-\frac{1/2}{p-1}}\tilde{O}(S^2 A)
\end{align*}
For $T \ge O(S^3 A) $ then the last term is dominated by the other terms in the expression above. Finally, substituting $\Delta = T^{-\frac{p-1}{2p-1} } (p-1)^{-\frac{1}{2p-1}}$ we get the following bound on regret.
$$
\reg(T) \le  (p-1)^{-\frac{1}{2p-1}}S^{1/2} A^{1/2} T^{\frac{p}{2p-1}} = O\left( S^{1/2} A^{1/2} T^{\frac{p}{2p-1}} \right)
$$
The last equality uses the fact that $p^{-O(1/p)} = O(1)$ for $p \ge 2$. 

\underline{$1 < p < 2$}: We now substitute $\beta = p-1$. Proceeding similarly as earlier, we can establish the following inequalities.
$$
\frac{t(N(\Delta) + 1)}{\gamma(N(\Delta) + 1)} \le e^{(p-1)/\beta} = e
$$
$$
\max_{h \in [N(\Delta)]} t(h) \frac{ \Gamma(h+1)}{\gamma(h) } \le e^2 \max_{h \in [N(\Delta)]} h^{-p}\left( 1  + \frac{h}{p-1}\right) \le e \frac{p}{p-1}
$$
The third term is different for the case of $1 < p < 2$.
\begin{align*}
\max_{h \in [N(\Delta)]} t(h) \frac{ (\Gamma(h+1))^2}{\gamma(h) \gamma(h+1)} &\le e \max_{h \in [N(\Delta)]} (h+1)^{-p}\left(1 + \frac{h+1}{p-1} \right)^2 \\
&\le e \max_{h \in [N(\Delta)]} (h+1)^{-p}\left(1 + 2 \frac{h+1}{p-1} + \frac{(h+1)^2}{(p-1)^2} \right) \\
&\le e\left( \frac{p}{2(p-1)} + \frac{(1+N(\Delta))^{2-p}}{(p-1)^2}\right) \le \frac{2e}{p-1} (N(\Delta))^{2-p}
\end{align*}
We now substitute the inequalities above and use $N(\Delta) = (\Delta(p-1))^{-1/(p-1)}$.
\begin{align*}
\reg(T) &\le e \Delta T + e \frac{p}{p-1} \sqrt{SAT}(\Delta (p-1))^{-1/2(p-1)} \\
&+ \frac{2e}{p-1} (\Delta(p-1))^{-1/(p-1)} S^2 A \log(T) + e \sqrt{T \log T} (\Delta (p-1))^{-1/2(p-1)} 
\end{align*}
We substitute $\Delta = T^{-\frac{p-1}{2p-1} } $ to get the following bound on regret.
\begin{align*}
\reg(T) &\le e T^{p/(2p - 1)} + e \frac{p}{p-1} (p-1)^{-1/2(p-1)} T^{p/(2p-1)} \sqrt{SA}\\
&+ \frac{2e}{p-1} (p-1)^{-1/(p-1)} T^{1/(2p-1)} S^2 A \log T + e (p-1)^{-1/2(p-1)} T^{p/(2p-1)} \sqrt{\log T}
\end{align*}
If $T \ge O\left( (S^3 A)^{(2p-1)/2(p-1)}\right)$ then the first term dominates the third term and we get the following bound on regret.
$$
\reg(T) \le 4e \frac{p}{p-1} T^{\frac{p}{2p-1}} \sqrt{SA \log T} (p-1)^{-1/(p-1)}
$$
\end{proof}

\subsection{Proof of Corollary~\ref{cor:exp-decay}}
\begin{proof}
For discount factor $\gamma(h) = \gamma^{h-1}$, it is easy to check that $\Gamma(h) = \gamma^{h-1}/ (1-\gamma)$. We will substitute $\beta = 3/2$ and use the following inequality.
$$
\prod_{j=2}^h \left(1 + \frac{\gamma(j)}{{j^{\beta}} \Gamma(j)} \right) = \prod_{j=2}^h \left(1 + \frac{1-\gamma}{{j^{3/2}}} \right) = \exp \left((1-\gamma) \sum_{j=2}^h \frac{1}{{j^{3/2}} }\right) \le e^{1-\gamma}
$$
This gives the following bounds on the three constants.
$$
\frac{t(N(\Delta) + 1)}{\gamma(N(\Delta) + 1)} = \prod_{j=2}^{N(\Delta) + 1} \left( 1 + \frac{\gamma(j)}{{j^{\beta}} \Gamma(j)}\right) \le e^{1-\gamma}
$$
\begin{align*}
\max_{h \in [N(\Delta)]} t(h) \frac{ \Gamma(h+1)}{\gamma(h) } &= \max_{h \in [N(\Delta)]} \Gamma(h+1) \prod_{j=2}^{h } \left( 1 + \frac{\gamma(j)}{{j^{\beta}} \Gamma(j)}\right) \\&\le \max_{h \in [N(\Delta)]} \frac{\gamma^h}{1-\gamma} e^{1-\gamma } \le \frac{\gamma}{1-\gamma}  e^{1-\gamma} 
\end{align*}
Now let us consider the second constant.
\begin{align*}
\max_{h \in [N(\Delta)]} t(h) \frac{ (\Gamma(h+1))^2}{\gamma(h) \gamma(h+1)} &= \max_{h \in [N(\Delta)]} \frac{(\Gamma(h+1))^2}{\gamma(h+1)} \prod_{j=2}^{h } \left( 1 + \frac{\gamma(j)}{{j^{\beta}} \Gamma(j)}\right) \\ &\le \max_{h \in [N(\Delta)]} \frac{\gamma^h}{(1-\gamma)^2} e^{1-\gamma} \le \frac{\gamma}{(1-\gamma)^2} e^{1-\gamma} 
\end{align*}
Since $\Gamma(h) = \gamma^h/(1-\gamma)$, it is easy to show that $N(\Delta) = \frac{\log(1/\Delta(1-\gamma))}{\log(1/\gamma)}$. Substituting the bounds on the constants and the bound on $N(\Delta)$ we get the following bound on the regret.
\begin{align*}
\reg(T) &\le e^{1-\gamma} \Delta T + \frac{e^{1-\gamma} }{1-\gamma} \sqrt{\frac{\log(1/\Delta(1-\gamma))}{\log(1/\gamma)}} \left( \tilde{O}(\sqrt{SAT}) + \tilde{O}(\sqrt{T}) \right) \\
&+ \frac{e^{1-\gamma} }{(1-\gamma)^2} \left( \frac{\log(1/\Delta(1-\gamma))}{\log(1/\gamma)}\right)^{\beta} \tilde{O}(S^2 A)
\end{align*}
If $T \ge \frac{1}{(1-\gamma)^2} (N(\Delta))^{2} \tilde{O}(S^3 A)$ then the last term in the expression is dominated by the other terms. We further use the inequality $\log(1/\gamma) \ge 1-\gamma$ to simplify the regret expression.
$$
\reg(T) \le e^{1-\gamma} \Delta T + \frac{1}{(1-\gamma)^{1.5}} \sqrt{\log(1/\Delta(1-\gamma))} \tilde{O}(\sqrt{SAT})
$$to get a regret bound of $\tilde{O}\left(\frac{\sqrt{SAT}}{(1-\gamma)^{1.5}} \right)$. Notice that we need to satisfy the following bound on $T$.
$$
T \ge \frac{1}{(1-\gamma)^2} \left(\frac{\log(1/\Delta(1-\gamma))}{\log(1/\gamma)} \right)^{2} \tilde{O}(S^3 A)
$$
For $\Delta = T^{-1}/(1-\gamma)$ the above bound is equivalent to the condition $T \ge \tilde{O} \left(S^3 A / (1-\gamma)^{4}\right)$
\end{proof}

\section{Proof of theorem~\ref{thm:bound-alg-est} and Related Corollaries}
We first provide a formal statement of theorem~\ref{thm:bound-alg-est}.
\begin{theorem}
When run with  horizon length $H^\star$, algorithm~\ref{alg:estimate-gamma} has the following regret bound with probability at least $1-\delta$
\begin{align*}
&\reg(\pi;\bgamma) \le \min_{L \in [T]}\left(T \Gamma(L+1) + 2L \log(T) \sqrt{T} )\right)
+\Gamma(H^\star + 1) T + \tilde{O}\left(\sqrt{T}\right) \\
&+ \log(T) \frac{t(H^\star + 1)}{\gamma(H^\star + 1)}g(H^\star + 1)
    +\max_{h \in[H^\star]} \frac{t(h)}{\gamma(h)}g(h) \Gamma(h+1) \left( 1 + \frac{O(T^{-1/4})}{\Gamma(h+1)}\right) \tilde{O}\left(\sqrt{SATH^\star}\right)\\
&+  \log^2(T) \log(T H^\star) \max_{h\in[H^\star]} t(h) \frac{(\Gamma(h+1))^2}{\gamma(h)\gamma(h+1)}g(h)  \left(1 + \frac{O(T^{-1/4})}{\Gamma(h+1)} \right)^2 O\left(S^2 A (H^\star)^{\beta} \right)\\
&+  \log^{5/2}(T) \max_{h \in [H^\star]}\frac{t(h)}{\gamma(h)}\gamma(h+1)g(h)\left(1 + \frac{O( T^{-1/4})}{\gamma(h+1)} \right) O\left(\sqrt{TH^\star \log(TH^\star/\delta)} \right)
\end{align*}
where $g(h) = \exp\left\{O\left(\sum_{k=2}^h \frac{T^{-1/4}}{\gamma(k) + k^{\beta} \Gamma(k)} \right) \right\}$.
\end{theorem}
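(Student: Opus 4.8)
The plan is to reduce the unknown-discount setting to the known-discount analysis of Theorem~\ref{thm:main-regret-bound}, paying for the reduction with the estimation error of $\hat\gamma_j$ and $\hat\Gamma_j$. First I would split $\reg(\pi;\bgamma)$ across the doubling blocks $j=0,1,\ldots,\log(T/B)-1$. The initial block runs Algorithm~\ref{alg:vibi} with an arbitrary discount $\hat\gamma_0$, so I bound its contribution crudely: every value function lies in $[0,\Gamma(1)]$ with $\Gamma(1)\le M=O(1)$ since rewards are in $[0,1]$, hence the per-episode regret is $O(1)$ and the block of $B=\sqrt T\log T\log(\log T/\delta)$ episodes contributes $\tilde O(\sqrt T)$. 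For each later block $j\ge 1$ the estimate is built from the $2^{j-1}B\ge B$ horizon samples observed in block $j-1$, and the goal is to re-run the recurrence of Theorem~\ref{thm:main-regret-bound} with $\hat\gamma_j$ in place of $\gamma$ while still measuring sub-optimality against the true optimum $V^\star_h$.

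The second step is the concentration of the estimates. Since each episode reveals one i.i.d.\ draw of $H$, the DKW inequality~\cite{DKW56} bounds $\norm{\hat F_H - F_H}_\infty$, and hence $\norm{\hat\gamma_j-\gamma}_\infty = O(\sqrt{\log(\log(T)/\delta)/(2^{j-1}B)}) = O(T^{-1/4})$ after a union bound over the $O(\log T)$ blocks; this is the source of the pervasive $T^{-1/4}$ factors. The difficulty is that $\Gamma(h)=\sum_{\ell\ge h}\gamma(\ell)$ is a tail sum, so bounding $|\hat\Gamma_j(h)-\Gamma(h)|$ by summing the DKW deviations term by term loses a factor growing linearly in $h$, which is fatal for sublinearity. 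Instead I would apply the multiclass fundamental theorem~\cite{SSS14} to the family of tail-threshold functions indexed by $h$ to obtain uniform convergence $|\hat\Gamma_j(h)-\Gamma(h)| = O(T^{-1/4}\sqrt{\log h})$ over all $h\le H^\star$.

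The heart of the proof, and the main obstacle, is propagating these errors through the backward induction. Re-deriving the recurrence \cref{eq:rec-final} with $\hat\gamma_j$ replaces the multiplier $\tfrac{\gamma(h+1)}{\gamma(h)}(1 + \tfrac{\gamma(h+1)}{(h+1)^\beta\Gamma(h+1)})$ by its hatted analogue, so expanding from $h=1$ to $H^\star$ produces $\hat t(h)/\hat\gamma(h) = \prod_{k=2}^h (1 + \tfrac{\hat\gamma_j(k)}{k^\beta\hat\Gamma_j(k)})$ rather than $t(h)/\gamma(h)$. Because this is a product of $h$ factors, a naive bound inflates it by $h$ and ruins the regret; the key calculation is to pass to logarithms, use $|\log(1+a)-\log(1+b)|\le|a-b|$ together with the concentration bounds to show each factor is perturbed by $O\!\big(\tfrac{T^{-1/4}}{\gamma(k)+k^\beta\Gamma(k)}\big)$, and conclude $\hat t(h)/\hat\gamma(h)\le g(h)\,t(h)/\gamma(h)$ with $g(h)=\exp\{O(\sum_{k=2}^h \tfrac{T^{-1/4}}{\gamma(k)+k^\beta\Gamma(k)})\}$. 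I also have to restore optimism against the \emph{true} $V^\star_h$ (Lemma~\ref{lem:upper-bound-Q}): the induction uses $\hat\gamma_j$, so the mismatch $|\tfrac{\hat\gamma_j(h+1)}{\hat\gamma_j(h)}-\tfrac{\gamma(h+1)}{\gamma(h)}|$ against the value scale $\Gamma(h+1)/\gamma(h+1)$ is exactly what converts every occurrence of $\hat\Gamma_j(h+1)$ and $\hat\gamma_j(h+1)$ in the bonus, $f_{k,h}$, and martingale terms back to $\Gamma(h+1),\gamma(h+1)$ at the price of the correction factors $(1+O(T^{-1/4})/\Gamma(h+1))$ and $(1+O(T^{-1/4})/\gamma(h+1))$.

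Finally I would assemble the blocks. The truncation at $H^\star$ contributes $\Gamma(H^\star+1)T$ as in Theorem~\ref{thm:main-regret-bound}, with the boundary term $\hat\Delta_j/\hat\gamma_j(H^\star+1)$ controlled through $\tfrac{t(H^\star+1)}{\gamma(H^\star+1)}g(H^\star+1)$ and the $\log T$ blocks; the sample-counting step $\sum_{x,a}\sqrt{N_T(x,a)}\le\sqrt{SA\sum_{x,a}N_T(x,a)}$ keeps the idealized $\sqrt{SATH^\star}$ factor, while the realized interaction length $\sum_k\min(H_k,H^\star)$ and the overflow of long episodes are controlled by truncating at a free level $L$: the overflow mass $\sum_k(H_k-L)^+$ has expectation $T\Gamma(L+1)$ and a within-threshold deviation of $\tilde O(L\sqrt T)$ over the doubling schedule, which after optimizing yields the $\min_{L\in[T]}(T\Gamma(L+1)+2L\log(T)\sqrt T)$ term and is precisely what preserves sublinearity for heavy-tailed $\gamma$ where $\sum_k H_k$ fails to concentrate. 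Collecting the per-block contributions, taking maxima of $\tfrac{t(h)}{\gamma(h)}g(h)$ over $h\le H^\star$, summing the $O(\log T)$ blocks, and union bounding the DKW, multiclass, Bernstein, and Azuma--Hoeffding events at level $\delta$ produces the stated bound. I expect the $g(h)$ error-propagation estimate to be the crux, since it is the single place where an uncontrolled estimation error would turn a multiplicative blow-up into linear-in-$T$ regret.
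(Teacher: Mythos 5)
Your concentration steps and your error-propagation argument for the product $\prod_{k=2}^h\bigl(1+\tfrac{\hat\gamma_j(k)}{k^\beta\hat\Gamma_j(k)}\bigr)$ match the paper exactly: DKW plus a union bound over the $O(\log T)$ doubling blocks gives $\norm{\hat\gamma_j-\gamma}_\infty\le\eps_j$, Lemma~\ref{lem:bound-Gamma} (the Natarajan-dimension/multiclass-fundamental-theorem argument) gives the uniform tail-sum bound, and passing the per-factor perturbation $O\bigl(\tfrac{\eps_j\sqrt{\log T}}{\gamma(k)+k^\beta\Gamma(k)}\bigr)$ through the exponential is precisely how the paper obtains $g(h)$ (via Lemmas~\ref{lem:ratio-inequality} and~\ref{lem:ratio-inequality-reverse}). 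You are right that this is the crux. The block-$0$ accounting of $\tilde O(\sqrt T)$ is also fine.

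The gap is in the reduction from each block to Theorem~\ref{thm:main-regret-bound}. You propose to re-run the backward induction with $\hat\gamma_j$ while ``still measuring sub-optimality against the true optimum $V^\star_h$,'' restoring optimism by charging the ratio mismatch to the bonus. But the bonus in Update-Q-values is proportional to $\hat\Gamma_j(h+1)/(\hat\gamma_j(h)\sqrt{N_k(s,a)})$ and shrinks with $N_k$, whereas the optimism deficit from replacing $\gamma(h+1)/\gamma(h)$ by $\hat\gamma_j(h+1)/\hat\gamma_j(h)$ is an additive $\Theta\bigl(\eps_j\,\Gamma(h+1)/(\gamma(h)\gamma(h+1))\bigr)$ that does \emph{not} decay with $N_k$; so the analogue of Lemma~\ref{lem:upper-bound-Q} against the true $V^\star_h$ fails at well-visited state--action pairs unless the algorithm's bonus is inflated, which Algorithm~\ref{alg:estimate-gamma} does not do. The paper avoids this entirely: inside block $j$ it applies Theorem~\ref{thm:main-regret-bound} verbatim with $\hbgamma_j$ playing the role of the ground-truth discount (optimism is only ever needed against $V^\star(\cdot;\hbgamma_j)$), and the change of benchmark happens \emph{outside} the recurrence, by truncating the discounted sums at a free level $L$ to get $V^\star(x_{k,1};\hbgamma_j)\ge V^\star(x_{k,1};\bgamma)-\eps_j L-\Gamma(L+1)$ and the matching upper bound on $V^{\pi_k}(x_{k,1};\hbgamma_j)$. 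Summing $\abs{B_j}\eps_j$ over the doubling schedule is exactly what yields $\min_{L\in[T]}\bigl(T\Gamma(L+1)+2L\log(T)\sqrt T\bigr)$. Consequently your attribution of that term to the overflow mass $\sum_k(H_k-L)^+$ of realized episode lengths is not the right mechanism: $\reg(\pi;\bgamma)$ is measured against the discounted infinite-horizon value functions, realized lengths enter only through the visit counters, and the $\eps_j L$ cost is a discount-estimation error over the first $L$ terms of the discounted sum, not a deviation of $\sum_k H_k$. To complete the proof you would either need to switch to the paper's benchmark-conversion step, or add an explicit estimation-error term to the bonus and redo the optimism lemma.
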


\begin{proof}
Since each $\hat{\gamma}_j$ is the empirical distribution function, by the Dvoretzky–Kiefer–Wolfowitz inequality \cite{DKW56, Massart90} and a union bound over the $\log(T/B)$ blocks the following result holds.
\begin{equation*}
    \Pr \left(\forall j \sup_h \abs{\hat{\gamma}_j(h) - \gamma(h)} > \sqrt{\frac{\log\left(2 \log(T/B)/\delta \right)}{2^j B}} \right) \le  \delta
\end{equation*}
For the remainder of the proof we will assume $\norm{\hat{\gamma}_j - \gamma}_\infty \le \eps_j$ for $\eps_j = \sqrt{\frac{\log\left(2 \log(T/B)/\delta \right)}{2^j B}}$. Now consider any $j > 1$. Since algorithm~\ref{alg:vibi} is run for $2^j B$ episodes with estimate $\hat{\gamma}_j$ we have
$$
\sum_{k\in B_j} V^\star(x_{k,1}; \hbgamma_j) - V^{\pi_k}(x_{k,1};\hbgamma_j) \le \calR(H^\star, \hat{\Delta}_j, 2^j B, \hbgamma_j)
$$
Here we write $B_j$ to denote the episodes in block $j$ and $\calR(H, \Delta, T, \hbgamma_j)$ is the regret bound derived in theorem \ref{thm:main-regret-bound} with $N(\Delta)=H$ and discount factor $\hbgamma_j$. We establish a lower bound on $V^\star(x_{k,1}; \hbgamma_j)$.
\begin{align*}
    V^\star(x_{k,1}; \bgamma) &= \E_{\pi^\star(\bgamma)}\left[\sum_{h=1}^\infty \gamma(h) r(\xkh, \akh)   \right] \\ &\le \E_{\pi^\star(\bgamma)}\left[\sum_{h=1}^L \gamma(h) r(\xkh, \akh)   \right] + \Gamma(L+1) \\
    &\le \E_{\pi^\star(\bgamma)} \left[\sum_{h=1}^L \hat{\gamma}_j(h) r(\xkh, \akh)   \right] + \varepsilon_j L + \Gamma(L+1) \\
    &\le V^\star(x_{k,1}; \hbgamma_j) + \varepsilon_j L + \Gamma(L+1)
\end{align*}
The first inequality uses the fact that rewards are bounded by $1$ and $\Gamma(L+1) = \sum_{h \ge L+1} \gamma(h)$. The second inequality uses $\max_h\abs{\hat{\gamma}_j(h) - \gamma(h)} \le \eps_j$. This gives us the following lower bound.
\begin{equation}\label{eq:lbd-v-star}
    V^\star(x_{k,1};\hbgamma_j) \ge V^\star(x_{k,1};\bgamma) - \eps_j L - \Gamma(L+1)
\end{equation}
We now derive an upper bound on $V^{\pi_k}(x_{k,1}; \hbgamma_j)$.
\begin{align}
    V^{\pi_k}(x_{k,1};\bgamma) &= \E_{\pi^k} \left[\sum_{h=1}^\infty \gamma(h) r(\xkh,\akh) \right] \nonumber \\
    &\ge \E_{\pi_k} \left[\sum_{h=1}^L \gamma(h) r(\xkh,\akh)  \right] - \Gamma(L+1)\nonumber \\
    &\ge \E_{\pi_k} \left[\sum_{h=1}^L \hat{\gamma}(h) r(\xkh,\akh)  \right] - \eps_j L - \Gamma(L+1) \nonumber\\
    &= V^{\pi_k}(x_{k,1};\hbgamma_j) - \E_{\pi_k}\left[\sum_{h=L+1}^\infty \hat{\gamma}(h) r(\xkh,\akh) \right] - \eps_j L - \Gamma(L+1) \nonumber\\
    &\ge V^{\pi_k}(x_{k,1};\hbgamma_j) - \hat{\Gamma}_j(L+1) - \eps_j L - \Gamma(L+1) \label{eq:bound-temp-vpik}
\end{align}

We now use lemma~\ref{lem:bound-Gamma} with $D=T$ and a union bound over the $\log(T/B)$ blocks to get the following bound for all $j \in [\log(T/B)]$ and $h \in [T]$. 
$$
\abs{\hat{\Gamma}_j(h) - \Gamma(h)} \le \sqrt{\frac{\log T + \log(\log(T/B)/\delta)}{\abs{B_j}}} \le \left(1 + \sqrt{\log T} \right) \eps_j
$$
Substituting the above bound in equation~\ref{eq:bound-temp-vpik} and rearranging we get the following bound.
\begin{align}\label{eq:ubd-vpik}
V^{\pi_k}(x_{k,1}; \hbgamma) \le V^{\pi_k}(x_{k,1};\bgamma) +  \eps_j \left(L + 1 + \sqrt{\log T} \right)
\end{align}
Using equations \ref{eq:lbd-v-star} and \ref{eq:ubd-vpik} we can bound the regret of algorithm~\ref{alg:estimate-gamma} during the episodes of block $B_j$.
\begin{align*}
&\sum_{k \in B_j} V^\star(x_{k,1}; \bgamma) - V^{\pi_k}(x_{k,1};\bgamma) \\&\le \sum_{k\in B_j} V^\star(x_{k,1};\hbgamma) - V^{\pi_k}(x_{k,1};\hbgamma) + \Gamma(L+1) + \eps_j \left(2L + 1 + \sqrt{\log T}\right) \\
&\le \calR(H^\star, \hat{\Delta}_j, 2^j B, \hbgamma_j) +  \abs{B_j} \Gamma(L+1) +\abs{B_j}  \eps_j \left(2L + 1 + \sqrt{\log T}\right)
\end{align*}
We use the fact that $\eps_j = \sqrt{\frac{\log(2\log(T/B)/\delta}{\abs{B_j}}}$ and sum over all $j = 1,\ldots, \log(T/B)$.
\begin{align*}
    \sum_{t=1}^T V^\star(x_{t,1};\bgamma) - V^{\pi_t}(x_{t,1};\bgamma) &\le \sum_{j=1}^{\log(T/B)}  \calR(H^\star, \hat{\Delta}_j, 2^j B, \hbgamma_j) \\&+  \sum_j \abs{B_j} \Gamma(L+1) + \sum_{j=1}^{\log(T/B)} \sqrt{\log(2\log(T/B)/\delta)}{O}\sqrt{\abs{B_j}}  \left( 2L + 1 + \sqrt{\log T}\right) 
\end{align*}
Now we use $\sum_j \abs{B_j} = T$ and $\sum_j \sqrt{\abs{B_j}} \le \sqrt{\log(T/B)} \sqrt{\sum_j \abs{B_j}} = {O}\left(\sqrt{T\log(T/B)} \right)$, and $B=\sqrt{T}\log T \log (\log(T)/\delta)$ to get the following bound on regret.
\begin{align*}
    \sum_{t=1}^T V^\star(x_{t,1};\bgamma) - V^{\pi_t}(x_{t,1};\bgamma) &\le \sum_{j=1}^{\log(T/B)}  \calR(H^\star, \hat{\Delta}_j, 2^j B, \hbgamma_j) \\&+ \min_{L \in [T]}\left(T \Gamma(L+1) + 2L \log(T) \sqrt{T} )\right)
\end{align*}
We now bound the first term by substituting the definition of of $\calR(H^\star, \hat{\Delta}_j, \abs{B_j}, \hbgamma_j)$.
\begin{align}\label{eq:reg-expr-est}
\begin{split}
    \sum_{j=1}^{\log (T/B)} \calR(H^\star, \hat{\Delta}_j, \abs{B_j}, \hbgamma_j) &\le \sum_{j=1}^{\log (T/B)} \frac{\hat{\Delta}_j \abs{B_j}}{\hat{\gamma}_j(H^\star + 1)} \hat{t}_j(H^\star + 1) \\
    &+ \sum_{j=1}^{\log (T/B)} \max_{h  \in [H^\star]}  \hat{t}_j(h) \frac{ \hat{\Gamma}_j(h+1)}{\hat{\gamma}_j(h) }  {O}\left( \sqrt{SA\abs{B_j} H^\star }\right)\\
&+  \sum_{j=1}^{\log (T/B)}  \max_{h \in [H^\star]} \hat{t}_j(h) \frac{ (\hat{\Gamma}_j(h+1))^2}{\hat{\gamma}_j(h) \hat{\gamma}_j(h+1)}  {O}\left( {S^2 A {H^\star}^{\beta} \log(\abs{B_j} H^\star )}\right)\\
&+  \sum_{j=1}^{\log (T/B)}  \max_{h \in [H^\star]} \hat{t}_j(h) \frac{\hat{\gamma}_j(h+1)}{\hat{\gamma}_j(h)} O\left(\sqrt{\abs{B_j} H^\star \log(\abs{B_j} H^\star / \delta)} \right)
\end{split}
\end{align}

We now bound the four terms separately. We will frequently use the following two identities. 
\begin{equation}\label{eq:bound-sum-eps}
\sum_{j=1}^{\log(T/B)} \eps_j \le \sum_{j=1}^\infty \frac{\eps_1}{(\sqrt{2})^{j-1}} = {O}\left(\sqrt{\frac{\log(\log(T/B) / \delta)}{B} }\right)
\end{equation}

\begin{align*}
    &\sum_{j=1}^{\log(T/B)} \max_{h \in [H^\star]} \frac{\hat{t}_j(h)}{\hat{\gamma}_j(h)} = \sum_{j=1}^{\log(T/B)} \max_{h \in [H^\star]} \prod_{k=2}^{h} \left(1 + \frac{\hat{\gamma}_j(k)}{k^{\beta} \hat{\Gamma}_j(k)} \right) \\
    &\le \sum_{j=1}^{\log(T/B)} \max_{h \in [H^\star]} \prod_{k=2}^{h} \left(1 + \frac{\gamma(k)}{k^{\beta} \Gamma(k)} + \frac{2 \eps_j \sqrt{\log T}}{k^{\beta} \Gamma(k)} \right)\quad \textrm{[By lemma~\ref{lem:ratio-inequality}]}\\
    &= \sum_{j=1}^{\log(T/B)} \max_{h \in [H^\star]} \frac{t(h)}{\gamma(h)}\prod_{k=2}^h \left( 1 + \frac{2 \eps_j \sqrt{\log T}}{\gamma(k) + k^{\beta}\Gamma(k)}\right)  \\
    &\le \log(T/B) \max_{h \in [H^\star]}\frac{t(h)}{\gamma(h)}  \sum_{j=1}^{\log(T/B)}\prod_{k=2}^h \left( 1 + \frac{2 \eps_j \sqrt{\log T}}{\gamma(k) + k^{\beta} \Gamma(k)}\right)  \\
    &\le \log(T/B) \max_{h \in [H^\star]}\frac{t(h)}{\gamma(h)}\sum_{j=1}^{\log(T/B)} \exp\left\{\sum_{k=2}^h \frac{2 \eps_j \sqrt{\log T}}{\gamma(k) + k^{\beta} \Gamma(k)} \right\} \\
    &\le \log(T/B) \max_{h \in [H^\star]}\frac{t(h)}{\gamma(h)} \exp\left\{\sum_{j=1}^{\log(T/B)} \sum_{k=2}^h \frac{2 \eps_j \sqrt{\log T}}{\gamma(k) + k^{\beta} \Gamma(k)} \right\}\\
    &\le \log(T/B) \max_{h \in [H^\star]}\frac{t(h)}{\gamma(h)}  \exp\left\{{O}\left(\sqrt{\frac{\log(\log(T/B) / \delta)}{B} } \sum_{k=2}^h \frac{2  \sqrt{\log T}}{\gamma(k) + k^{\beta} \Gamma(k)}\right) \right\} \quad \textrm{[By eq.~\ref{eq:bound-sum-eps}]}\\
    &\le \log(T) \max_{h \in [H^\star]}\frac{t(h)}{\gamma(h)} \exp\left\{{O}\left( \sum_{k=2}^h \frac{T^{-1/4}}{\gamma(k) + k^{\beta} \Gamma(k)}\right) \right\} \quad \textrm{[As $B=\sqrt{T}\log T \log(\log(T)/\delta)$]}\\
\end{align*}
Now writing $g(h) = \exp\left\{{O}\left( \sum_{k=2}^h \frac{T^{-1/4}}{\gamma(k) + k^{\beta} \Gamma(k)}\right) \right\} $ we get the following inequality.
\begin{align}\label{eq:temp-result-2}
    \sum_{j=1}^{\log(T/B)} \max_{h \in [H^\star]} \frac{\hat{t}_j(h)}{\hat{\gamma}_j(h)} \le \log(T) \max_{h \in [H^\star]}\frac{t(h)}{\gamma(h)} g(h)
\end{align}
\begin{align*}
    &\sum_{j=1}^{\log (T/B)} \frac{\hat{\Delta}_j \abs{B_j}}{\hat{\gamma}_j(H^\star + 1)} \hat{t}_j(H^\star + 1) = \sum_{j=1}^{\log (T/B)} \frac{\hat{\Gamma}_j(H^\star + 1) \abs{B_j}}{\hat{\gamma}_j(H^\star + 1)} \hat{t}_j(H^\star + 1) \\
    &\le \sum_{j=1}^{\log (T/B)} \left(\Delta^\star + 2\eps_j\sqrt{\log T}\right) \abs{B_j} \frac{\hat{t}_j(H^\star + 1)}{\hat{\gamma}_j(H^\star + 1)}  \quad \textrm{[Since $\Delta^\star = \Gamma(H^\star + 1)$, and using lemma~\ref{lem:bound-Gamma}]}\\
    &\le \sum_{j=1}^{\log(T/B)}\left(\Delta^\star + 2\eps_j \sqrt{\log T} \right) \abs{B_j} \sum_{j=1}^{\log(T/B)} \frac{\hat{t}_j(H^\star + 1)}{\hat{\gamma}_j(H^\star + 1)} \quad \textrm{[Since $\sum_i a_i b_i \le \sum_i a_i \sum_i b_i$ for $a_i, b_i \ge 0$.]} \\
\end{align*}
The first summation can be bounded as follows.
\begin{align*}
\sum_{j=1}^{\log(T/B)} \left(\Delta^\star + 2\eps_j\sqrt{\log T}\right) \abs{B_j} &= \Delta^\star T + 2\sqrt{\log T}\sum_{j} \eps_j \abs{B_j}\\ &= \Delta^\star T + O\left(\sqrt{\log T} \sqrt{\log (\log(T/B)/\delta)}\right) \sum_j \sqrt{B_j}\\&\le \Delta^\star T + O\left(\sqrt{T} \log T\log \log T\right) 
\end{align*}
The second summation can be bounded by $\log(T) \frac{t(H^\star +1)}{\gamma(H^\star + 1)} g(H^\star + 1)$ by following the same steps used to derive \cref{eq:temp-result-2}. This gives us the following bound on the first term in \cref{eq:reg-expr-est}.
\begin{align}
    \label{eq:bound-first-term}
    \sum_{j=1}^{\log (T/B)} \frac{\hat{\Delta}_j \abs{B_j}}{\hat{\gamma}_j(H^\star + 1)} \hat{t}_j(H^\star + 1) \le \Delta^\star T + O\left(\sqrt{T} \log T\log \log T\right) + \log(T) \frac{t(H^\star + 1)}{\gamma(H^\star + 1)}g(H^\star + 1)
\end{align}
Now we bound the second term in \cref{eq:reg-expr-est}.
\begin{align*}
    &\sum_{j=1}^{\log(T/B)} \max_{h \in [H^\star]} \hat{t}_j(h) \frac{\hat{\Gamma}_j(h+1)}{\hat{\gamma}_j(h)}\\ &\le \log(T) \sum_{j=1}^{\log(T/B)} \max_{h \in[H^\star]} \frac{t(h)}{\gamma(h)}g(h)\left( {\Gamma}(h+1) + 2\eps_j\sqrt{\log T}\right) \quad \textrm{[By inequality~\ref{eq:temp-result-2} and lemma~\ref{lem:bound-Gamma}]}\\
    &\le \log(T)\log(T/B) \max_{h \in[H^\star]} \frac{t(h)}{\gamma(h)}g(h)\Gamma(h+1)\left(1+ \frac{2\sqrt{\log T}}{\Gamma(h+1)} \sum_{j=1}^{\log(T/B)} \eps_j \right)\\
    &\le \log(T) \log(T/B) \max_{h \in[H^\star]} \frac{t(h)}{\gamma(h)}g(h)\Gamma(h+1) O\left( 1 + \frac{2\sqrt{\log T}}{\Gamma(h+1) }\sqrt{\frac{\log(\log(T/B)/\delta)}{B}}\right)\\
    &\le \log^2(T) \max_{h \in[H^\star]} \frac{t(h)}{\gamma(h)}g(h)\Gamma(h+1) O\left( 1 + \frac{T^{-1/4}}{\Gamma(h+1)} \right) \quad \textrm{[As $B = \sqrt{T} \log T \log(\log(T/B)/\delta)$]}\\
\end{align*}
Now we have the following bound on the second term in \cref{eq:reg-expr-est}.
\begin{align}
    &\sum_{j=1}^{\log(T/B)} \max_{h \in [H^\star]} \hat{t}_j(h) \frac{\hat{\Gamma}_j(h+1)}{\hat{\gamma}_j(h)} O\left(\sqrt{SA \abs{B_j}H^\star} \right) \nonumber \\
    &\le \sum_{j=1}^{\log(T/B)} \max_{h \in [H^\star]} \hat{t}_j(h) \frac{\hat{\Gamma}_j(h+1)}{\hat{\gamma}_j(h)} \sum_{j=1}^{\log(T/B)} O\left(\sqrt{SA \abs{B_j}H^\star} \right)\nonumber \\
    &\le \log^2(T) \max_{h \in[H^\star]} \frac{t(h)}{\gamma(h)}\Gamma(h+1) g(h) O\left( 1 + \frac{T^{-1/4}}{\Gamma(h+1)} \right) \sqrt{\log(T/B)} \sqrt{\sum_{j=1}^{\log(T/B)} SA\abs{B_j}H^\star}\nonumber \\
    &\le \log^{5/2}(T) \max_{h \in[H^\star]} \frac{t(h)}{\gamma(h)}g(h) \Gamma(h+1) \left( 1 + \frac{O(T^{-1/4})}{\Gamma(h+1)}\right) O\left(\sqrt{SATH^\star}\right) \label{eq:bound-second-term}
\end{align}

Following similar steps, we can also bound the fourth term in equation~\ref{eq:reg-expr-est}.
\begin{align}
&\sum_{j=1}^{\log(T/B)} \max_{h \in [H^\star]} \hat{t}_j(h) \frac{\hat{\gamma}_j(h+1)}{\hat{\gamma}_j(h)} O\left(\sqrt{\abs{B_j} H^\star \log(\abs{B_j}H^\star/\delta)} \right)\nonumber\\
&\le \log^{5/2}(T) \max_{h \in [H^\star]}\frac{t(h)}{\gamma(h)}\gamma(h+1)g(h)\left(1 + \frac{O( T^{-1/4})}{\gamma(h+1)} \right) O\left(\sqrt{TH^\star \log(TH^\star/\delta)} \right)\label{eq:bound-fourth-term}
\end{align}
We now consider the third term in equation~\ref{eq:reg-expr-est}.
\begin{align*}
    &\sum_{j=1}^{\log(T/B)} \max_{h \in [H^\star]} \hat{t_j(h)} \frac{(\hat{\Gamma}_j(h+1))^2}{\hat{\gamma}_j(h) \hat{\gamma}_j(h+1)}\\
    &\le \log(T) \sum_{j=1}^{\log(T/B)} \max_{h \in [H^\star]} \frac{t(h)}{\gamma(h)}g(h) \left(\frac{\Gamma(h+1)}{\gamma(h+1)}+\frac{2\eps_j \sqrt{\log T}}{\gamma(h+1)} \right) \left(\Gamma(h+1) + 2\eps_j \sqrt{\log T} \right)\\
    &\textrm{[By inequality ~\ref{eq:temp-result-2}, and lemma~\ref{lem:bound-Gamma}, and \ref{lem:ratio-inequality-reverse}]}\\
    &\le \log (T) \max_{h\in[H^\star]} t(h) \frac{(\Gamma(h+1))^2}{\gamma(h)\gamma(h+1)}g(h) \sum_{j=1}^{\log(T/B)} \left(1 + \frac{2\eps_j \sqrt{\log T}}{\Gamma(h+1)} \right) \left(1 + \frac{2\eps_j \sqrt{\log T}}{\Gamma(h+1)} \right)\\
    &\le \log(T) \log(T/B) \max_{h\in[H^\star]} t(h) \frac{(\Gamma(h+1))^2}{\gamma(h)\gamma(h+1)}g(h) \left(1 + \sum_j \frac{2\eps_j \sqrt{\log T}}{\Gamma(h+1)} \right)^2\\
    &\le \log(T) \log(T/B) \max_{h\in[H^\star]} t(h) \frac{(\Gamma(h+1))^2}{\gamma(h)\gamma(h+1)}g(h) \left(1 + \frac{2\sqrt{\log T}}{\Gamma(h+1)} \sqrt{\frac{\log(\log(T/B)/\delta)}{B}} \right)^2\\
    &\le \log^2(T) \max_{h\in[H^\star]} t(h) \frac{(\Gamma(h+1))^2}{\gamma(h)\gamma(h+1)}g(h)  \left(1 + \frac{O(T^{-1/4})}{\Gamma(h+1)} \right)^2
\end{align*}
This gives the following bound on the third term.
\begin{align}
    &\sum_{j=1}^{\log(T/B)} \max_{h \in [H^\star]} \hat{t_j(h)} \frac{(\hat{\Gamma}_j(h+1))^2}{\hat{\gamma}_j(h) \hat{\gamma}_j(h+1)} O\left(S^2 A (H^\star)^{\beta} \log(\abs{B_j} H^\star) \right) \nonumber \\&\le \log^2(T) \log(T H^\star) \max_{h\in[H^\star]} t(h) \frac{(\Gamma(h+1))^2}{\gamma(h)\gamma(h+1)}g(h)  \left(1 + \frac{O(T^{-1/4})}{\Gamma(h+1)} \right)^2 O\left(S^2 A (H^\star)^{\beta} \right)\label{eq:bound-third-term}
\end{align}
Substituting bounds \ref{eq:bound-first-term}, \ref{eq:bound-second-term}, \ref{eq:bound-third-term}, and \ref{eq:bound-fourth-term} in eq.~\ref{eq:reg-expr-est} we get the following bound.
\begin{align*}
    &\sum_{j=1}^{\log (T/B)} \calR(H^\star, \hat{\Delta}_j, \abs{B_j}, \hbgamma_j) \le  \Delta^\star T + O\left(\sqrt{T} \log T\log \log T\right) + \log(T) \frac{t(H^\star + 1)}{\gamma(H^\star + 1)}g(H^\star + 1)\\
    &+ \log^{5/2}(T) \max_{h \in[H^\star]} \frac{t(h)}{\gamma(h)}g(h) \Gamma(h+1) \left( 1 + \frac{O(T^{-1/4})}{\Gamma(h+1)}\right) O\left(\sqrt{SATH^\star}\right)\\
&+  \log^2(T) \log(T H^\star) \max_{h\in[H^\star]} t(h) \frac{(\Gamma(h+1))^2}{\gamma(h)\gamma(h+1)}g(h)  \left(1 + \frac{O(T^{-1/4})}{\Gamma(h+1)} \right)^2 O\left(S^2 A (H^\star)^{\beta} \right)\\
&+  \log^{5/2}(T) \max_{h \in [H^\star]}\frac{t(h)}{\gamma(h)}\gamma(h+1)g(h)\left(1 + \frac{O( T^{-1/4})}{\gamma(h+1)} \right) O\left(\sqrt{TH^\star \log(TH^\star/\delta)} \right)
\end{align*}

\end{proof}


\begin{lemma}\label{lem:bound-Gamma}
Let $H_1,\ldots,H_n$ be iid according to a distribution $\calD$. For $h \in \set{1,\ldots,D}$ let $\hat{\Gamma}_n(h) = \sum_{u \ge h} \hat{\gamma}_n(h)$. Then with probability at least $1-\delta$ the following holds.
$$
\max_{h \in \set{1,\ldots,D}}\abs{\hat{\Gamma}_n(h) - \Gamma(h)} \le \sqrt{\frac{\log D + \log(1/\delta)}{n}}
$$
\end{lemma}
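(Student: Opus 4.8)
The starting point is the elementary identity
\begin{align*}
\hat{\Gamma}_n(h) - \Gamma(h) &= \sum_{u \ge h}\left(\hat{\gamma}_n(u) - \gamma(u)\right) = \frac{1}{n}\sum_{i=1}^n\left( f_h(H_i) - \E\left[f_h(H)\right]\right),\\
f_h(x) &:= \sum_{u \ge h}\one\{x \ge u\} = (x - h + 1)^+,
\end{align*}
which recasts the target as the deviation of an empirical mean from its expectation, uniformly over the finite family of ``ramp'' functions $\calF = \{f_h : 1 \le h \le D\}$. So the whole task is a uniform law of large numbers for $\calF$, and the only real question is how to get a rate that does not pay for the magnitude of these cumulative sums.

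First I would record why the obvious route is too lossy. The DKW inequality~\cite{DKW56} controls $\sup_u|\hat{\gamma}_n(u)-\gamma(u)| \le \eps$ with $\eps = \tilde{O}(1/\sqrt{n})$, but bounding $|\hat{\Gamma}_n(h)-\Gamma(h)| \le \sum_{u\ge h}|\hat{\gamma}_n(u)-\gamma(u)|$ term by term costs a factor that grows with the number of surviving levels, i.e.\ linearly in $h$. This is precisely the blow-up flagged in the main text, and it is insufficient downstream. The source of the looseness is that the threshold errors $\hat{\gamma}_n(u)-\gamma(u)$ are strongly correlated across $u$, so inside a partial sum they can accumulate coherently rather than cancel.

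The plan is therefore to bound the partial-sum deviation through the \emph{combinatorial complexity} of $\calF$ rather than through its cardinality. The family $\calF$ is an ordered family of ramped indicators built from the threshold classifiers $\{\one\{\cdot \ge u\}\}$, whose VC (equivalently Natarajan) dimension is $O(1)$; applying the multiclass fundamental theorem of~\cite{SSS14} to this low-complexity class yields uniform convergence of the cumulative statistics at rate $\sqrt{(\log D + \log(1/\delta))/n}$, with $\log D$ accounting for the effective number of distinct thresholds up to level $D$ and $\log(1/\delta)$ for the high-probability conversion. Concretely one reduces the estimation of $\Gamma(h)$ (a cumulative count of survivals) to a bounded uniform-convergence statement for this class, applies the theorem once, and then takes a maximum over $h \in \{1,\dots,D\}$.

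The main obstacle is exactly the correlation/accumulation phenomenon above: any argument based on a cardinality union bound over the $D$ ramps together with a per-function concentration inequality must pay for the range of $f_h$, i.e.\ for the tail of the length distribution, and hence cannot reach the stated range-free rate. Establishing that the ordered ramp family has statistical complexity growing only logarithmically in $D$—so that the multiclass/VC-subgraph bound applies with the logarithmic rate rather than the elementary linear-in-$h$ rate—is the one genuinely nontrivial step. Making this fully rigorous also requires keeping the effective range of the ramps under control, which is where the summability assumption $\sum_h \gamma(h) \le M$ (and, for the polynomial instantiation, the second-moment regime $p \ge 2$ used in Corollary~\ref{cor:poly-decay-est}) enters; the remaining pieces—the identity above, the union over the $\log(T/B)$ blocks used in Theorem~\ref{thm:bound-alg-est}, and the high-probability bookkeeping—are routine.
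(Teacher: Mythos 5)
Your proposal follows essentially the same route as the paper: the same identity recasting $\hat{\Gamma}_n(h)-\Gamma(h)$ as the deviation of the empirical mean of the ramp function $f_h(x)=(x-h+1)^+$ (the paper uses $f_v(x)=\max\{0,x-v\}$, an immaterial index shift), the same observation that DKW plus termwise summation loses a factor growing linearly in $h$, and the same appeal to the multiclass fundamental theorem of \cite{SSS14} applied to the finite family of ramps with the sample-complexity expression $n\ge(\textrm{Ndim}(\calH)\log D+\log(1/\delta))/\eps^2$. The one substantive piece you leave out is exactly the step you yourself flag as ``genuinely nontrivial'': you assert that the ramp family has Natarajan dimension $O(1)$ but do not prove it, and the inference from ``threshold indicators have VC dimension $1$'' to ``the multi-valued ramp class has bounded Natarajan dimension'' is not automatic. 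The paper spends roughly half of its proof on this, showing $\textrm{Ndim}(\calH)=1$ by a case analysis: every $f_v$ has the form $(0,\dots,0,1,2,\dots)$, so once a ramp becomes positive it must increase by exactly one per step, and this rigidity forces $f_1(b)-f_0(a)\ge b-a+1$ in any attempted shattering of a two-point set $\{a,b\}$, a contradiction. You would need to supply this (or an equivalent) argument to complete the proof. One further correction: the paper's lemma does not invoke the summability assumption $\sum_h\gamma(h)\le M$, nor the regime $p\ge 2$, to control the range of the ramps --- the range-free rate comes entirely from the Natarajan-dimension bound --- so your closing remark attributes a role to those assumptions that they do not play in this lemma.
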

\begin{proof}
We will prove this statement by first constructing a class of functions, and proving uniform convergence over this class. In particular the class consists of a finite number of multi-valued functions, and we will use the bounded Natarajan dimension~\cite{Nat89} of this class to derive uniform convergence guarantees.

Let $h_v: \Nm \rightarrow \Nm$ be defined by $f_v(x) = \max\{0, x-v \}$.
Consider the class of functions $\calH = \set{f_v: v \in \set{1,2,\ldots,D}}$. Let $H_1,\ldots,H_n$ be $n$ random variables drawn i.i.d. from some distribution $\calD$. Then for any $h \in \set{1,\ldots,D}$ we have
\begin{align*}
    \frac{1}{n}\sum_{i=1}^n f_h(H_i) &= \frac{1}{n} \sum_{i=1}^n \max\{H_i - h, 0\} = \frac{1}{n} \sum_{i=1}^n \sum_{u=h}^\infty \one\set{H_i > u} \\
    &= \frac{1}{n} \sum_{i=1}^n \sum_{u=h}^\infty \left(1 - \one\set{H_i \le h} \right) = \sum_{u=h}^\infty \frac{1}{n} \sum_{i=1}^n \left(1 - \one\set{H_i \le u}\right) \\
    &= \sum_{u=h}^\infty \left(1 - \hat{F}_n(u) \right) = \sum_{u=h}^\infty \hat{\gamma}_n(u) =\hat{\Gamma}_n(h)
\end{align*}
Similarly one can show that $\E_{H \sim \calD}[f_h(H)] = \Gamma(h)$. Then by the multiclass fundamental theorem~\cite{SSS14} the following result holds as long as $n \ge \frac{\textrm{Ndim}(\calH) \log(D) + \log(1/\delta)}{\eps^2}$.
$$
\Pr\left(\max_{h \in \calH} \abs{\frac{1}{n}\sum_{i=1}^n f_h(H_i) - \E[f_h(H)]} > \eps\right) \le \delta
$$
This also implies that $\Pr\left(\max_{h \in [D]} \abs{\hat{\Gamma}_n(h) - \Gamma(h)} > \eps \right) \le \delta$. We now bound the term $\textrm{Ndim}(\calH)$ which is the Natarajan dimension of the class of functions $\calH$. In fact, we prove that $\textrm{Ndim}(\calH) = 1$.

Consider a set $\set{a,b} \subseteq [D]$ shattered by $\calH$. We will assume that $a < b$. This implies there exist two function $f_0$ and $f_1$ such that $f_0(x) \neq f_1(x)$ for $x\in \set{a,b}$. Moreover, for every $B \subseteq \set{a,b}$ there exists a function $h \in \calH$ such that $h(x) = f_0(x)$ for all $x \in B$ and $h(x) = f_1(x)$ for all $x \in B \setminus \set{a,b}$. 

We will use the following property of the class $\calH$. Any function $f_v \in \calH$ is characterized by a vector of the following form $(0,0,\ldots,0,1,2,\ldots,B-v)$ where the $i$-th entry is $f_v(i)$. Since all functions take value $0$ at point $1$, $a$ cannot be zero. Otherwise, either $f_0(a)$ or $f_1(a)$ must be non-zero, and there doesn't exist a function that takes that non-zero value at $a$. This contradicts the fact that the set $\set{a,b}$ is shattered by $\calH$.

Therefore, suppose $a \ge 2$. First observe that $f_0(b) - f_0(a) \le b-a$, as there does not exist any function $f \in \calH$ that jumps by more than $b-a$ as input changes from $a$ to $b$. By the same argument we must have $f_1(b) - f_1(a) \le b-a$. Moreover, as the set is shattered by $\calH$ there must exist a function $f\in\calH$ such that $f(a) = f_0(a)$ and $f(b) = f_1(b)$. This implies $f_1(b) - f_0(a) \le b-1$. Similarly we have $f_0(b) - f_1(a) \le b-a$.

We now consider two cases. First, $f_0(a) = 0$. Then $f_1(a) \ge 1$, and moreover $f_1(b) = f_1(a) + b-a \ge b-a+1$ as any function taking non-zero value at $a$ must increase by $1$ every step. However, this is a contradiction, as $f_1(b) - f_0(a) \ge b-a+1$.

For the second case, suppose $f_0(a) \ge 1$. In that case, $f_1(a) = f_0(a) + b-a$. We suppose $f_1(a) \ge f_0(a) + 1$. This is without loss of generality as the case $f_1(a) \le f_0(a) - 1$ can be covered by exchanging the roles of $f_0$ and $f_1$. Since $f_1(a) \ge 1$, we must have $f_1(b) = f_1(a) + b- a$. But this leads to a contradiction as $f_1(b) - f_0(a) = f_1(a) - f_0(a) + b-a \ge b-a+1$. Therefore, the set $\set{a,b}$ cannot be shattered by the function class $\calH$.
\end{proof}

\begin{lemma}\label{lem:ratio-inequality}
Suppose $\abs{\hat{\gamma}_j(h) - \gamma(h)} \le \eps_j$ and $\abs{\hat{\Gamma}_j(h) - \Gamma(h)} \le \eps_j(1 + \sqrt{\log T})$ then we have
$$
\abs{\frac{\hat{\gamma}_j(h)}{\hat{\Gamma}_j(h)} - \frac{\gamma(h)}{\Gamma(h)}} \le \frac{\eps_j(2 + \sqrt{\log T})}{\Gamma(h)}
$$
\end{lemma}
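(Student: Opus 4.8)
The plan is to bound the difference of the two ratios by inserting a common intermediate term and then exploiting the fact that the empirical ratio is at most $1$. First I would abbreviate $a = \gamma(h)$, $\hat a = \hat\gamma_j(h)$, $b = \Gamma(h)$, $\hat b = \hat\Gamma_j(h)$, so that the hypotheses read $\abs{\hat a - a} \le \eps_j$ and $\abs{\hat b - b} \le \eps_j(1 + \sqrt{\log T})$, and the target is $\abs{\hat a/\hat b - a/b} \le \eps_j(2 + \sqrt{\log T})/b$.

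The key structural observation is that $\hat\Gamma_j(h) = \sum_{u \ge h}\hat\gamma_j(u)$ has $\hat\gamma_j(h)$ as its leading summand, and every summand is a nonnegative tail probability; hence $\hat a \le \hat b$, so the empirical ratio obeys $\hat a/\hat b \le 1$. (We may assume $\hat b > 0$, which holds whenever $\hat a > 0$, i.e.\ in the only regime of interest; the degenerate case is trivial.) This is precisely what allows the \emph{true} quantity $\Gamma(h) = b$, rather than the estimate $\hat\Gamma_j(h)$, to appear in the denominator of the final bound.

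Next I would use the telescoping decomposition
$$
\frac{\hat a}{\hat b} - \frac{a}{b} = \left(\frac{\hat a}{\hat b} - \frac{\hat a}{b}\right) + \left(\frac{\hat a}{b} - \frac{a}{b}\right) = \hat a\,\frac{b - \hat b}{\hat b\, b} + \frac{\hat a - a}{b},
$$
and bound the two pieces separately. The second term has absolute value at most $\eps_j/b$, directly from $\abs{\hat a - a} \le \eps_j$. For the first term I would factor out $\hat a/\hat b \le 1$ to get $\bigl|\hat a(b-\hat b)/(\hat b\,b)\bigr| = (\hat a/\hat b)\,\abs{\hat b - b}/b \le \abs{\hat b - b}/b \le \eps_j(1+\sqrt{\log T})/b$. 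Adding the two contributions yields $\bigl(\eps_j + \eps_j(1+\sqrt{\log T})\bigr)/b = \eps_j(2+\sqrt{\log T})/\Gamma(h)$, which is exactly the claimed bound.

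There is essentially no hard step here; the only subtlety is the direction in which the estimation error of $\hat\Gamma_j$ gets divided, so that $\Gamma(h)$ and not $\hat\Gamma_j(h)$ sits in the denominator, and this is exactly what the bound $\hat a/\hat b \le 1$ (from nonnegativity of the tail sums) secures. I would remark that the mirror-image estimate needed for Lemma~\ref{lem:ratio-inequality-reverse} follows by the same telescoping, with the roles adjusted since there the relevant ratio is at least $1$.
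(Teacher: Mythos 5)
Your proof is correct and follows essentially the same route as the paper: the paper combines the two ratios over the common denominator $\Gamma(h)\hat{\Gamma}_j(h)$ and splits the numerator as $\hat{\gamma}_j(h)\bigl(\Gamma(h)-\hat{\Gamma}_j(h)\bigr)+\hat{\Gamma}_j(h)\bigl(\hat{\gamma}_j(h)-\gamma(h)\bigr)$, which is algebraically identical to your telescoping decomposition, and it likewise invokes $\hat{\gamma}_j(h)\le\hat{\Gamma}_j(h)$ to cancel the empirical denominator and leave $\Gamma(h)$ in the final bound. No gaps.
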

\begin{proof}
\begin{align*}
    \abs{\frac{\hat{\gamma}_j(h)}{\hat{\Gamma}_j(h)} - \frac{\gamma(h)}{\Gamma(h)}} &= \frac{\abs{\hat{\gamma}_j(h) \Gamma(h) - \gamma(h) \hat{\Gamma}_j(h)}}{\Gamma(h) \hat{\Gamma}_j(h)} \\
    &\le \frac{\hat{\gamma}_j(h) \abs{\Gamma(h) - \hat{\Gamma}_j(h) } + \hat{\Gamma}_j(h) \abs{\hat{\gamma}_j(h) - \gamma(h) }}{\Gamma(h) \hat{\Gamma}_j(h) }\\
    &\le \frac{\hat{\gamma}_j(h) \eps_j (1+\sqrt{\log T})}{\Gamma(h) \hat{\Gamma}_j(h)} + \frac{\eps_j}{\Gamma(h)} \le \frac{\eps_j(2 + \sqrt{\log T})}{\Gamma(h)}
\end{align*}
The last line uses $\hat{\gamma}_j(h) \le \hat{\Gamma}_j(h)$.
\end{proof}

\begin{lemma}\label{lem:ratio-inequality-reverse}
Suppose $\abs{\hat{\gamma}_j(h) - \gamma(h)} \le \eps_j$ and $\abs{\hat{\Gamma}_j(h) - \Gamma(h)} \le \eps_j(1 + \sqrt{\log T})$ then we have
$$
\abs{\frac{\hat{\Gamma}_j(h)}{\hat{\gamma}_j(h)} - \frac{\Gamma(h)}{\gamma(h)}} \le \frac{\eps_j(1 + \sqrt{\log T})}{\gamma(h)}
$$
\end{lemma}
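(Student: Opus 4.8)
The plan is to mirror the proof of Lemma~\ref{lem:ratio-inequality}, reducing the claim to the two stated hypotheses $\abs{\hat\gamma_j(h) - \gamma(h)} \le \eps_j$ and $\abs{\hat\Gamma_j(h) - \Gamma(h)} \le \eps_j(1+\sqrt{\log T})$ through an elementary estimate on the difference of two ratios. First I would place the two fractions over the common denominator $\gamma(h)\,\hat\gamma_j(h)$, writing
$$
\abs{\frac{\hat\Gamma_j(h)}{\hat\gamma_j(h)} - \frac{\Gamma(h)}{\gamma(h)}} = \frac{\abs{\hat\Gamma_j(h)\gamma(h) - \Gamma(h)\hat\gamma_j(h)}}{\gamma(h)\,\hat\gamma_j(h)}.
$$

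Next I would split the numerator by inserting a cross term, i.e. add and subtract an intermediate product so that the numerator is bounded by one piece proportional to $\abs{\hat\Gamma_j(h) - \Gamma(h)}$ and one proportional to $\abs{\hat\gamma_j(h) - \gamma(h)}$. Applying the two hypotheses then replaces these by $\eps_j(1+\sqrt{\log T})$ and $\eps_j$ respectively, each carrying a $\gamma$- or $\Gamma$-factor inherited from the cross term. The concluding step is to cancel those factors against the denominator $\gamma(h)\,\hat\gamma_j(h)$, using the monotonicity facts $\gamma(h)\le\Gamma(h)$ and $\hat\gamma_j(h)\le\hat\Gamma_j(h)$ — the same structural inequalities that close Lemma~\ref{lem:ratio-inequality} — to arrive at the stated $\eps_j(1+\sqrt{\log T})/\gamma(h)$.

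The main obstacle, and the reason this is genuinely the more delicate of the two ratio lemmas, is that the quantity estimated here is the \emph{large-over-small} ratio $\Gamma/\gamma$ rather than $\gamma/\Gamma$. In Lemma~\ref{lem:ratio-inequality} the analogous stray factor was $\hat\gamma_j(h)/\hat\Gamma_j(h)\le 1$ and could be discarded for free; in the present direction the corresponding factor is $\hat\Gamma_j(h)/\hat\gamma_j(h)\ge 1$ (equivalently a potential $\Gamma(h)/\gamma(h)$ blow-up), which threatens to inflate the clean $1/\gamma(h)$ denominator. The crux of the argument is therefore the \emph{choice} of which cross term to insert: I would keep $\hat\gamma_j(h)$ (not $\hat\Gamma_j(h)$) as the surviving factor on the dominant error term so that it cancels the matching factor in the denominator, and I expect this is exactly the point where care is required to recover the stated bound; once the favorable decomposition is fixed, the remainder is a one-line triangle-inequality computation using only the two hypotheses and the ordering $\gamma\le\Gamma$.
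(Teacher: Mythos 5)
Your setup (common denominator $\gamma(h)\hat{\gamma}_j(h)$, cross-term split of the numerator) is the same as the paper's, but the step you identify as the crux --- choosing the cross term so that the surviving factors cancel against the denominator via $\gamma(h)\le\Gamma(h)$ and $\hat{\gamma}_j(h)\le\hat{\Gamma}_j(h)$ --- does not go through, and no choice of cross term makes it go through. Whichever intermediate product you insert into $\hat{\Gamma}_j(h)\gamma(h) - \Gamma(h)\hat{\gamma}_j(h)$, the piece proportional to $\abs{\hat{\gamma}_j(h)-\gamma(h)}$ necessarily carries a factor of $\hat{\Gamma}_j(h)$ or $\Gamma(h)$; after dividing by $\gamma(h)\hat{\gamma}_j(h)$ this leaves a term of the form $\frac{\hat{\Gamma}_j(h)}{\hat{\gamma}_j(h)}\cdot\frac{\eps_j}{\gamma(h)}$ or $\frac{\Gamma(h)}{\gamma(h)}\cdot\frac{\eps_j}{\hat{\gamma}_j(h)}$. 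The orderings $\gamma(h)\le\Gamma(h)$ and $\hat{\gamma}_j(h)\le\hat{\Gamma}_j(h)$ make this leftover factor at least $1$ --- they point in exactly the wrong direction here, which is the asymmetry with Lemma~\ref{lem:ratio-inequality} that you correctly flagged but did not resolve. A one-line triangle-inequality computation therefore cannot close the argument: fixing the factor on the $\abs{\hat{\Gamma}_j-\Gamma}$ term (as you propose) leaves the $\abs{\hat{\gamma}_j-\gamma}$ term carrying the uncontrolled large ratio.

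The paper's proof supplies the missing idea at precisely this point: after reaching $\frac{\hat{\Gamma}_j(h)}{\hat{\gamma}_j(h)} - \frac{\Gamma(h)}{\gamma(h)} \le \frac{\hat{\Gamma}_j(h)}{\hat{\gamma}_j(h)}\cdot\frac{\eps_j}{\gamma(h)} + \frac{\eps_j(1+\sqrt{\log T})}{\gamma(h)}$, it observes that the troublesome quantity $\hat{\Gamma}_j(h)/\hat{\gamma}_j(h)$ appears on both sides, moves it to the left, and solves for it (a self-bounding rearrangement), yielding the one-sided conclusion $\frac{\hat{\Gamma}_j(h)}{\hat{\gamma}_j(h)} \le \frac{\Gamma(h)}{\gamma(h)} + \frac{\eps_j(1+\sqrt{\log T})}{\gamma(h)}$, which is the direction actually used downstream. (Note that the paper, too, only establishes this one-sided inequality rather than the two-sided bound in the lemma statement, and the rearrangement requires $\eps_j < \gamma(h)$ to preserve the inequality's direction; any completed write-up needs the same caveats.) Without this rearrangement step your outline has a genuine gap.
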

\begin{proof}
\begin{align*}
    \abs{\frac{\hat{\Gamma}_j(h)}{\hat{\gamma}_j(h)} - \frac{\Gamma(h)}{\gamma(h)}} &= \frac{\abs{\hat{\Gamma}_j(h) \gamma(h) - \Gamma(h) \hat{\gamma}_j(h)}}{\gamma(h) \hat{\gamma}_j(h)} \\
    &\le \frac{\hat{\Gamma}_j(h) \abs{\gamma(h) - \hat{\gamma}_j(h) } + \hat{\gamma}_j(h) \abs{\hat{\Gamma}_j(h) - \Gamma(h) }}{\gamma(h) \hat{\gamma}_j(h) }\\
    &\le \frac{\hat{\Gamma}_j(h)}{\hat{\gamma}_j(h)}\frac{ \eps_j }{\gamma(h) } + \frac{\eps_j(1+\sqrt{\log T})}{\gamma(h)} 
\end{align*}
This gives us the following bound.
\begin{align*}
    \frac{\hat{\Gamma}_j(h)}{\hat{\gamma}_j(h)} - \frac{\Gamma(h)}{\gamma(h)} \le \frac{\hat{\Gamma}_j(h)}{\hat{\gamma}_j(h)}\frac{ \eps_j }{\gamma(h) } + \frac{\eps_j(1+\sqrt{\log T})}{\gamma(h)}
\end{align*}
Rearranging we get the following inequality.
\begin{align*}
    \frac{\hat{\Gamma}_j(h)}{\hat{\gamma}_j(h)} \le \frac{\Gamma(h)}{\gamma(h) + \eps_j} + \frac{\eps_j(1 + \sqrt{\log T})}{\gamma(h) + \eps_j} \le \frac{\Gamma(h)}{\gamma(h)} + \frac{\eps_j(1+\sqrt{\log T})}{\gamma(h)}
\end{align*}
\end{proof}

\subsection{Proof of Corollary~\ref{cor:geom-decay-est}}
\begin{proof}
For the geometric discount factor we have $\gamma(k) = \gamma^{k-1}$ and $\Gamma(k) = \gamma^{k-1}/(1-\gamma)$. We substitute $\beta = 3/2$ and get the following bound on $g(h)$.
\begin{align*}
    g(h) &= \exp\left\{O\left(\sum_{k=2}^h \frac{T^{-1/4}}{\gamma(k) + k^{\beta} \Gamma(k)} \right) \right\} \\&= \exp\left\{O\left(\sum_{k=2}^h \frac{1}{\gamma^{k-1}}\frac{T^{-1/4}}{1+ k^{3/2}/(1-\gamma)} \right) \right\} \\
    &\le \exp\left\{O\left(\frac{T^{-1/4}}{\gamma^{h-1}} \sum_{k=2}^h \frac{1}{1+k^{3/2}/(1-\gamma)} \right) \right\}\\
    &\le \exp\left\{O\left(\frac{T^{-1/4}}{\gamma^{h-1}} \int_{k=2}^h \frac{(1-\gamma)dk}{k^{3/2}} \right) \right\} \le\exp\left\{O\left( \frac{T^{-1/4}(1-\gamma)h^{-1/2}}{\gamma^{h-1}}\right) \right\}
\end{align*}

In corollary \ref{cor:exp-decay} we showed that $\frac{t(h)}{\gamma(h)} \le e^{1-\gamma}$ for any $h$. We also substitute $H^\star = \frac{\log T}{2\log(1/\gamma)}$. This gives us the following bounds.
\begin{align}
    &\max_{h \in [H^\star]}  \frac{t(h)}{\gamma(h)}g(h) \Gamma(h+1) \left( 1 + \frac{O(T^{-1/4})}{\Gamma(h+1)}\right) \nonumber\\
    &\le e^{1-\gamma} \max_{h \in [H^\star]} \exp\left\{O\left( \frac{T^{-1/4}(1-\gamma)h^{-1/2}}{\gamma^{h-1}}\right) \right\} \left(\frac{\gamma^h}{1-\gamma} + O(T^{-1/4}) \right) \nonumber\\
    &\le \frac{2\gamma e^{1-\gamma}}{1-\gamma} \max_{h \in [H^\star]} \exp\left\{O\left( \frac{T^{-1/4}(1-\gamma)h^{-1/2}}{\gamma^{h-1}}\right) \right\} \label{temp:result-1-exp}
\end{align}
Now we observe that the function $f(h) = \frac{h^{-1/2}}{\gamma^{h-1}}$ is a decreasing function of $h$ for $2h \log h > \log(1/\gamma)$. Otherwise $f$ is increasing in $h$. This implies $\max_{h \in [H^\star]} \frac{h^{-1/2}}{\gamma^{h-1}} \le \max\left\{1, T^{1/4} \sqrt{\frac{2 \log(1/\gamma)}{\log T}} \right\} \le  T^{1/4} \sqrt{\frac{2 \log(1/\gamma)}{\log T}} $ as long as $T\ge \frac{\log^2 T}{4(1-\gamma)^2}$. Substituting this bound in \cref{temp:result-1-exp} we get
\begin{align}
    &\max_{h \in [H^\star]}  \frac{t(h)}{\gamma(h)}g(h) \Gamma(h+1) \left( 1 + \frac{O(T^{-1/4})}{\Gamma(h+1)}\right) \nonumber\\
    &\le \frac{2\gamma e^{1-\gamma}}{1-\gamma} \exp\left\{O\left({\gamma}(1-\gamma) \sqrt{\frac{\log(1/\gamma)}{\log T}}\right) \right\} \le \frac{2\gamma e^{1-\gamma}}{1-\gamma} \exp\left\{O\left(\sqrt{\gamma}(1-\gamma) \right) \right\} = O\left( \frac{\gamma e^{1-\gamma}}{1-\gamma}\right) \label{eq:temp-first-term-est}
\end{align}
By a similar argument we can prove the following bounds.
\begin{align}\label{eq:temp-third-term-est}
    \max_{h \in [H^\star]}  \frac{t(h)}{\gamma(h)}g(h) \gamma(h+1) \left( 1 + \frac{O(T^{-1/4})}{\gamma(h+1)}\right) \le O\left( \frac{\gamma e^{1-\gamma}}{1-\gamma}\right)
\end{align}
\begin{align}
    &\max_{h \in [H^\star]} t(h) \frac{(\Gamma(h+1))^2}{\gamma(h)\gamma(h+1)}g(h)  \left( 1 + \frac{O(T^{-1/4})}{\Gamma(h+1)}\right)^2 \nonumber
    \\&\le \max_{h \in [H^\star]} e^{1-\gamma} \left(\frac{\gamma^{h/2}}{1-\gamma} + \gamma^{h/2}O(T^{-1/4}) \right)^2 g(h) = O\left(\frac{\gamma e^{1-\gamma}}{(1-\gamma)^2} \right) \label{eq:temp-second-term-est}
\end{align}
Finally, we bound the remaining term in the regret expression from theorem~\ref{thm:bound-alg-est}: $\min_{L\in [T]} T \Gamma(L+1) + 2L \log(T) \sqrt{T}$. In particular we substitute $L = \frac{\log T}{2 \log(1/\gamma)}$ to get the following bound on this term.
\begin{align}\label{eq:temp-final-regret-term}
    \frac{\sqrt{T}}{1-\gamma} + \frac{\sqrt{T}\log^2 T}{\log(1/\gamma)} = O\left(\frac{\sqrt{T}\log^2 T}{1-\gamma} \right)  
\end{align}
Note that this choice of $L$ requires $L \le T$ which is satisfied as long as $T \ge O(1/(1-\gamma)^2)$. Substituting bounds \ref{eq:temp-final-regret-term}, \ref{eq:temp-second-term-est}, \ref{eq:temp-third-term-est}, and \ref{eq:temp-first-term-est} in the regret expression from theorem~\ref{thm:bound-alg-est} we get the following bound on regret.
\begin{align*}
    \reg(\pi;\bgamma) \le &O\left(\frac{\sqrt{T}\log^2 T}{1-\gamma} \right) + O\left(\sqrt{T} \log T \log \log T \right)
    + \log^{5/2}(T) O\left(\frac{e^{1-\gamma}}{1-\gamma}\sqrt{SAT\frac{\log T}{\log(1/\gamma)}} \right) \\
    &+ \log^2(T) \log\left(\frac{T \log T}{\log(1/\gamma)} \right) O \left(\frac{e^{1-\gamma}}{(1-\gamma)^2} S^2 A \left(\frac{\log T}{\log(1/\gamma)} \right)^{3/2} \right) \\
    &\le O\left(\frac{\sqrt{T}\log^2 T}{1-\gamma} \right) + O\left( \frac{\sqrt{SAT}\log^3 T}{(1-\gamma)^{3/2}}\right) + O\left(\frac{S^2 A \log^{9/2} T}{(1-\gamma)^{7/2}} \right)\\
    &\le O\left( \frac{\sqrt{SAT}\log^3 T}{(1-\gamma)^{3/2}}\right) \quad \textrm{[If $T/\log^3 T \ge S^3 A/(1-\gamma)^4$]}
\end{align*}
\end{proof}

\subsection{Proof of Corollary~\ref{cor:poly-decay-est}}
\begin{proof}
For the polynomial discount factor we have $\gamma(k) = k^{-p}$ and $\Gamma(k) \in \left[\frac{k^{-p+1}}{p-1}, k^{-p} \left(1+\frac{k}{p-1} \right)\right]$. We substitute $\beta = p-1$ and get the following bound on $g(h)$.
\begin{align*}
    g(h) &= \exp\left\{O\left(\sum_{k=2}^h \frac{T^{-1/4}}{\gamma(k) + k^{\beta} \Gamma(k)} \right) \right\} 
    = \exp\left\{O\left(\sum_{k=2}^h \frac{T^{-1/4}}{k^{-p} + 1/(p-1)} \right) \right\} \\
    &\le \exp\left\{O\left({T^{-1/4}} \sum_{k=2}^h \frac{k^p}{1+k^{p}/(p-1)} \right) \right\}
    \le \exp\left\{O\left({T^{-1/4}}  \frac{h^{p+1}}{1+h^{p}/(p-1)} \right) \right\}
\end{align*}
The last inequality follows because the function $k^p / (1 + k^p / (p-1))$ is an increasing function of $k$. In corollary~\ref{cor:poly-decay} we showed that $\frac{t(h)}{\gamma(h)}\le e$ for any $h$ and $p > 1$. We also substitute $H^\star = T^{1/2p} $. This gives us the following bounds.
\begin{align*}
    &\max_{h \in [H^\star]} \frac{t(h)}{\gamma(h)}g(h) \Gamma(h+1) \left( 1+\frac{O(T^{-1/4})}{\Gamma(h+1)}\right)\\
    &\le e \max_{h \in [H^\star]} \exp\left\{O\left({T^{-1/4}}  \frac{h^{p+1}}{1+h^{p}/(p-1)} \right) \right\} \left( (h+1)^{-p}\left( 1 + \frac{h+1}{p-1}\right) + O(T^{-1/4}) \right)\\
    &\le e \left( 1 + \frac{2^{-(p-1)}}{p-1}\right) \max_{h \in [H^\star]} \exp\left\{O\left({T^{-1/4}}  \frac{h^{p+1}}{1+h^{p}/(p-1)} \right) \right\}
\end{align*}
For $p \ge 2$, $h^{p+1} / (1 + h^p  / (p-1)) \le 2h$. This gives the following bound on the term above.
\begin{align}\label{eq:poly-decay-est-bound-1}
\max_{h \in [H^\star]} \frac{t(h)}{\gamma(h)}g(h) \Gamma(h+1) \left( 1+\frac{O(T^{-1/4})}{\Gamma(h+1)}\right) \le 2e \exp\left\{ O(T^{-1/4} + T^{1/2p})\right\} \le 2e
\end{align}
By a similar argument we can prove the following bounds.
\begin{align}\label{eq:poly-decay-est-bound-2}
    \max_{h \in [H^\star]} \frac{t(h)}{\gamma(h)}g(h) \gamma(h+1) \left( 1+\frac{O(T^{-1/4})}{\gamma(h+1)}\right) \le 2e 
\end{align}
\begin{align}
       &\max_{h \in [H^\star]} t(h) \frac{(\Gamma(h+1))^2}{\gamma(h)\gamma(h+1)}g(h)  \left( 1 + \frac{O(T^{-1/4})}{\Gamma(h+1)}\right)^2 \nonumber
    \\&\le \max_{h \in [H^\star]} e (p-1) (h+1)^{-p}\left(1 + \frac{h+1}{p-1} \right)^2 \max_{h \in [H^\star]} g(h) \nonumber \\
    &\le 2e \quad \textrm{[As $p \ge 2$]} \label{eq:poly-decay-est-bound-3}
\end{align}
Finally, we bound the remaining term in the regret expression from theorem~\ref{thm:bound-alg-est}: $\min_{L \in [T]} T \Gamma(L+1) + 2L \log(T) \sqrt{T}$. We substitute $L = T^{1/2p}$ and get the following bound on the final term.
\begin{align}\label{eq:poly-decay-est-bound-4}
2 T^{1/2p} \log T + T\cdot T^{-1/2} \left( 1 + \frac{T^{1/2p}}{p-1}\right) = O\left( T^{\frac{1+p}{2p}}\right)
\end{align}
Substituting bounds \ref{eq:poly-decay-est-bound-1}, \ref{eq:poly-decay-est-bound-2}, \ref{eq:poly-decay-est-bound-3}, and \ref{eq:poly-decay-est-bound-4} in the regret expression from theorem~\ref{thm:bound-alg-est} we get the following bound on regret.
\begin{align*}
    \reg(\pi; \bgamma) &\le T^{\frac{p+1}{2p}} + T \left( T^{-1/2} + \frac{T^{-(1-p)/2p}}{p-1}\right) + O\left( \sqrt{T} \log T\right)\\
    &+ \log^{5/2}(T) O\left( \sqrt{SAT^{(2p+1)/2p}}\right) + \log^3 (T)\cdot O\left( S^2 A T^{(p-1)/2p}\right) \\&+ \log^{5/2}(T) \cdot O\left( \sqrt{T^{(2p+1)/2p} \log T }\right)
\end{align*}
If $T > (S^{3/2} A^{1/2})^p$ then the first and the fourth term dominates and regret is at most $O(\sqrt{SA}T^{(p+1)/2p})$. On the other hand, if $T < (S^{3/2} A^{1/2})^p$ then the fifth term dominates and regret is at most $\tilde{O}\left( S^2 A T^{(p-1)/2p}\right)$.
\end{proof}

\end{document}